\title{Time-Efficient Reinforcement Learning\\ with Stochastic Stateful Policies}
\author{Firas Al-Hafez$^1$, Guoping Zhao$^2$, Jan Peters$^{1,3}$, Davide Tateo$^1$\\
$^1$ Intelligent Autonomous Systems,
$^2$ Locomotion Laboratory\\
\small$^3$ German Research Center for AI (DFKI),
Centre for Cognitive Science, Hessian.AI\\
\small TU Darmstadt, Germany\\
\texttt{\small\{name.surname\}}\texttt{\small@tu-darmstadt.de} \\
}
\newacronym{rl}{RL}{Reinforcement Learning}
\newacronym{il}{IL}{Imitation Learning}
\newacronym{drl}{DRL}{Deep Reinforcement Learning}
\newacronym{avi}{AVI}{Approximate Value-Iteration}
\newacronym{api}{API}{Approximate Policy-Iteration}
\newacronym[plural=MDPs, firstplural=Markov Decision Processes (MDPs)]{mdp}{MDP}{Markov Decision Process}
\newacronym[plural=POMDPs, firstplural=Partially Observable Markov Decision Processes (POMDPs)]{pomdp}{POMDP}{Partially Observable Markov Decision Process}
\newacronym{kl}{KL}{Kullback-Leibler Divergence}
\newacronym{gae}{GAE}{Generalized Advantage Estimation}
\newacronym{trpo}{TRPO}{Trust Region Policy Optimization}
\newacronym{sac}{SAC}{Soft-Actor Critic}
\newacronym{ddpg}{DDPG}{Deep Deterministic Policy Gradient}
\newacronym{td3}{TD3}{Twin Delayed DDPG}
\newacronym{bptt}{BPTT}{Backpropagation Through Time}
\newacronym{s2pg}{S2PG}{Stochastic Stateful Policy Gradient}
\newacronym{pgt}{PGT}{Policy Gradient Theorem}
\newacronym{dpgt}{DPGT}{Deterministic Policy Gradient Theorem}
\newacronym{ppo}{PPO}{Proximal Policy Optimization}
\newacronym{gail}{GAIL}{Generative Adversarial Imitation Learning}
\newacronym{lsiq}{LS-IQ}{Least-Squares Inverse Q-Learning}
\newacronym[plural=LSTMs]{lstm}{LSTM}{Long-Short Term Memory}
\newacronym[plural=GRUs]{gru}{GRU}{Gated Recurrent Units}
\newacronym{rdpg}{RDPG}{Recurrent Deterministic Policy Gradient}
\newacronym{rsvg}{RSVG}{Recurrent Stochastic Value Gradient}
\newacronym{ode}{ODE}{Ordinary Differential Equation}
\newacronym[plural=FFNs, firstplural=Feed-Forward Networks (FFNs)]{ffn}{FFN}{Feed-Forward Network}
\newacronym{rtd3}{RTD3}{Recurrent Twin-Delayed Deep Deterministic Policy Gradient}
\newacronym{rsac}{RSAC}{Recurrent Soft Actor-Critic}
\newacronym[plural=RNNs, firstplural=Recurrent Neural Networks (RNNs)]{rnn}{RNN}{Recurrent Neural Network}
\newacronym[plural=CPGs, firstplural=Central Pattern Generators (CPGs)]{cpg}{CPG}{Central Pattern Generator}
\newcommand{\cmp}[2]{#1[#2]}
\DeclareMathOperator*{\xpt}{\mathbb{E}}
\DeclareMathOperator*{\var}{Var}
\newcommand{\expect}[2]{\xpt_{\substack{#1}} \left[ #2 \right]}
\newcommand{\probability}[1]{Pr\lbrace #1 \rbrace}
\newcommand{\tndensop}[1]{\mathrm{Tr}^{\pi_\vtheta}_{#1}}
\newcommand{\dettndensop}[1]{\mathrm{Tr}^{\mu_\vtheta}_{#1}}
\newcommand{\tndensity}[1]{\tndensop{#1}(s_{#1},z_{#1})}
\newcommand{\dettndensity}[1]{\dettndensop{#1}(s_{#1},z_{#1})}
\def\eqref#1{equation~\ref{#1}}
\def\1{\bm{1}}
\def\vzero{{\bm{0}}}
\def\vone{{\bm{1}}}
\def\vtheta{{\bm{\theta}}}
\def\vpsi{{\bm{\psi}}}
\def\vsigma{{\bm{\sigma}}}
\def\vupsilon{{\bm{\upsilon}}}
\DeclareMathAlphabet{\mathsfit}{\encodingdefault}{\sfdefault}{m}{sl}
\SetMathAlphabet{\mathsfit}{bold}{\encodingdefault}{\sfdefault}{bx}{n}
\DeclarePairedDelimiterX{\divx}[2]{(}{)}{%
  #1\;\delimsize\|\;#2%
}
\newcommand{\tauh}{\bar{\tau}}
\newcommand{\pibptt}{\nu}
\DeclareMathOperator{\tr}{tr}
\DeclareMathOperator{\diag}{diag}
\newtheoremstyle{theoremcond}
  {\topsep}
  {\topsep}
  {\upshape}
  {0pt}
  {\bfseries}
  {. ---}
  { }
  {\thmname{#1}\thmnumber{ #2}\textnormal{\thmnote{ (#3)}}}
\theoremstyle{theoremcond}
\newtheorem{conditions}{Regularity Conditions}[section]
\newcommand{\fnorm}[1]{\bigl\|#1\bigr\|_{\textnormal{F}}}
\newcommand{\fnorms}[1]{\left\| #1 \right\|_{\textnormal{F}}}
\definecolor{red}{rgb}{1.,0.,0.}
\definecolor{green}{rgb}{0.,1.,0.}
\definecolor{blue}{rgb}{0.,0.,1.}
\definecolor{violet}{rgb}{.5,0,.5}
\definecolor{brown}{rgb}{.75,.5,.25}
\definecolor{orange}{rgb}{1.,.5,0.}
\definecolor{magenta}{rgb}{1.,0.,1.}
\theoremstyle{plain}
\newtheorem{theorem}{Theorem}[section]
\newtheorem{lemma}[theorem]{Lemma}
\newtheorem{corollary}[theorem]{Corollary}
\theoremstyle{definition}
\newtheorem{assumption}[theorem]{Assumption}
\theoremstyle{remark}
\begin{document}

\maketitle

\begin{abstract}
Stateful policies play an important role in reinforcement learning, such as handling partially observable environments, enhancing robustness, or imposing an inductive bias directly into the policy structure. The conventional method for training stateful policies is \gls{bptt}, which comes with significant drawbacks, such as slow training due to sequential gradient propagation and the occurrence of vanishing or exploding gradients. The gradient is often truncated to address these issues, resulting in a biased policy update. We present a novel approach for training stateful policies by decomposing the latter into a stochastic internal state kernel and a stateless policy, jointly optimized by following the \emph{stateful policy gradient}. We introduce different versions of the stateful policy gradient theorem, enabling us to easily instantiate stateful variants of popular reinforcement learning and imitation learning algorithms. Furthermore, we provide a theoretical analysis of our new gradient estimator and compare it with \gls{bptt}.
We evaluate our approach on complex continuous control tasks, e.g. humanoid locomotion, and demonstrate that our gradient estimator scales effectively with task complexity while offering a faster and simpler alternative to \gls{bptt}.
\end{abstract}

\section{Introduction}

Stateful policies are a fundamental tool for solving complex \gls{rl} problems. These policies are particularly relevant for \gls{rl} in a \gls{pomdp}, where the history of interactions needs to be processed at each time step. Stateful policies, such as a \gls{rnn}, compress the history into a latent recurrent representation, allowing them to deal with the ambiguity of environment observations. \citet{ni2022recurrent} have shown that \gls{rnn} policies can be competitive and even outperform specialized algorithms in many \gls{pomdp} tasks. Besides \glspl{pomdp}, stateful policies can be used to incorporate inductive biases directly into the policy, e.g. a stateful oscillator to learn locomotion~\cite{ijspeert2007swimming,bellegarda2022cpg}, or to solve Meta-\gls{rl} tasks by observing the history of rewards \cite{ni2022recurrent}.

Existing methods for stateful policy learning mostly rely either on black-box/evolutionary optimizers or on the \gls{bptt} algorithm. While black-box approaches struggle with high-dimensional parameter spaces, making them less suitable for neural approximators, \gls{bptt} has shown considerable success in this domain~\cite{bakker2001reinforcement,wierstra2010recurrent,meng2021memory}. However, \gls{bptt} suffers from significant drawbacks, including the need for sequential gradient propagation through trajectories, which significantly slows down training time and can lead to vanishing or exploding gradients. To address these issues, practical implementations often use a truncated history, introducing a bias into the policy update and limiting memory to the truncation length. Furthermore, integrating \gls{bptt} into standard \gls{rl} algorithms is not straightforward due to the requirement of handling sequences of varying lengths instead of single states.

In this paper, we propose an alternative solution to compute the gradient of a stateful policy. We decompose the stateful policy into a stochastic policy state kernel and a conventional stateless policy, which are jointly optimized by following the \gls{s2pg}. By adopting this approach, we can train arbitrary policies with an internal state without \gls{bptt}. \gls{s2pg} not only provides an unbiased gradient update but also accelerates the training process and avoids issues like vanishing or exploding gradients. In addition, our approach can be applied to any existing \gls{rl} algorithm by modifying a few lines of code. 
We also show how to extend the \gls{s2pg} theory to all modern \gls{rl} approaches by introducing different versions of the policy gradient theorem for stateful policies. This facilitates the instantiation of stateful variations of popular \gls{rl} algorithms like \gls{sac}~\cite{sac}, \gls{td3}~\cite{fujimoto2018}, \gls{ppo}~\cite{ppo}, as well as \gls{il} algorithms like \gls{gail}~\cite{Ho2016} and \gls{lsiq}~\cite{alhafez2023}. Our approach is also applicable to settings where the critic has only access to observations. In such cases, we combine our method with Monte-Carlo rollouts, as typically done in \gls{ppo}. In the off-policy scenario, we use a critic with privileged information, a common setting in robot learning~\cite{pinto2018asymmetric,lee2020learning, peng20182}.

To evaluate our approach, we conduct a theoretical analysis on the variance of \gls{s2pg} and \gls{bptt}, introducing two new bounds and highlighting the behavior of each estimator in different regimes. Empirically, we compare the performances in common continuous control tasks within \glspl{pomdp} when using \gls{rl}. To illustrate the potential use of \gls{s2pg} for inductive biases in policies, we give an example of a trainable \gls{ode} of an oscillator used in the policy together with an \gls{ffn} to achieve walking policies under complete blindness. Finally, We also demonstrate the scalability of our method by introducing two complex locomotion tasks under partial observability and dynamics randomization. Our results indicate that these challenging tasks can be effectively solved using stateful \gls{il} algorithms. Overall, our experiments demonstrate that \gls{s2pg} offers a simple and efficient alternative to \gls{bptt} that scales well with task complexity.

\textbf{Related Work.} \glspl{rnn} are the most popular type of stateful policy. Among \glspl{rnn}, gated recurrent networks, such as \gls{lstm}~\cite{hochreiter1997} or \gls{gru}~\cite{cho2014} networks, are the most prominent ones \cite{ni2022recurrent, wierstra2010recurrent, meng2021memory, heess2015memory, espeholt2018impala, yang2021recurrent}. \citet{wierstra2010recurrent} introduced the \emph{recurrent policy gradient} using \glspl{lstm} and the GPOMDP~\cite{baxter2001} algorithm, where \gls{bptt} was used to train the trajectory. Many subsequent works have adapted \gls{bptt}-based training of \glspl{rnn} to different \gls{rl} algorithms. \citet{heess2015memory} introduced the actor-critic \gls{rdpg} and \gls{rsvg}, which were later updated to the current state-of-the-art algorithms  like \gls{rtd3} and \gls{rsac}~\cite{yang2021recurrent}. Many works utilize separate \glspl{rnn} for the actor and the critic in \gls{rtd3} and \gls{rsac}, as it yields significant performance gains compared to a shared \gls{rnn} \cite{ ni2022recurrent, meng2021memory, heess2015memory, yang2021recurrent}. Although \glspl{lstm} have a more complex structure, \glspl{gru} have demonstrated slightly better performance in continuous control tasks \cite{ni2022recurrent}. Stateful Policies can also be used to encode an inductive bias into the policy. For instance, a \gls{cpg} is a popular choice for locomotion tasks \cite{ijspeert2008central, bellegarda2022visual, campanaro2021cpg}. We conduct experiments with \glspl{cpg} in Section \ref{sec:cpg_exp}, where we also present relevant related work. Using a similar approach to the one presented in this paper, \citet{zhang2016learning} introduce a stochastic internal state transition kernel. In this approach, states are considered as a memory that the policy can read and write. However, they did not extend their approach to the actor-critic case, which is the main focus of our work. Other approaches such as DVRL~\cite{dvrl} and SLAC~\cite{slac} force the recurrent state to be a belief state, exploiting a learned environment model. An alternative to stateful policies, particularly useful in \glspl{pomdp} settings, is to use a history of observations. These policies use either fully connected MLPs, time convolution~\cite{lee2020learning}, or transformers~\cite{lee2023supervised,radosavovic2023learning}. It can be shown that for some class of \glspl{mdp}, these architectures allow learning near-optimal policies~\cite{efroni2022provable}. The main drawback is that they require high dimensional input, store many transitions in the buffer, and cannot encode inductive biases in the latent space.
Finally, the setting where privileged information from the simulation is used is widespread in sim-to-real robot learning \cite{lee2020learning, peng20182}. Compared to so-called teacher-student approaches \cite{lee2020learning}, which learn a privileged policy and then train a recurrent policy using behavioral cloning, our approach can learn a recurrent policy online using privileged information for the critic.

\newpage
\section{Stochastic Stateful Policy Gradients}

\noindent\textbf{Preliminaries.~~} An \glsfirst{mdp} is a tuple $(\mathcal{S}, \mathcal{A}, P, r, \gamma, \iota )$, where $\mathcal{S}$ is the state space, $\mathcal{A}$ is the action space, $P: \mathcal{S} \times \mathcal{A} \times \mathcal{S} \rightarrow \mathbb{R}^+$ is the transition kernel,  $r: \mathcal{S} \times \mathcal{A} \rightarrow \mathbb{R}$ is the reward function, $\gamma$ is the discount factor, and $\iota: \mathcal{S} \rightarrow \mathbb{R}^+$ is the initial state distribution. At each step, the agent observes a state $s \in \mathcal{S}$ from the environment, predicts an action $a \in \mathcal{A}$ using the policy $\pi:\mathcal{S} \times \mathcal{A} \rightarrow \mathbb{R}^+$, and transitions with probability $P(s'|s, a)$ into the next state $s' \in \mathcal{S}$, where it receives the reward $r(s, a)$. 
 We define an occupancy measure \mbox{$\rho^{\pi_{\vtheta}}(s,z) = \sum_{t=0}^{\infty} \gamma^t \probability{s = s_t \wedge z = z_t | \pi_{\vtheta}}$} giving metric on how frequently the tuple $(s, z)$ is visited. 
The \gls{pomdp} is tuple $(\mathcal{S}, \mathcal{A}, \mathcal{O}, P, O, r, \gamma, \iota )$ extending  the \gls{mdp} by the observation space $\mathcal{O}$ and the conditional probability density $O: \mathcal{O}\times\mathcal{S}\rightarrow \mathbb{R}^+$, such that for every state $s\in\mathcal{S}$, the probability density of $o\in\mathcal{O}$ is $O(o|s)$. A trajectory $\tau$ is a (possibly infinite) sequence of states, actions, and observations sampled by interacting with the environment under a policy $\pi$. We define the observed history $h_t=\langle s_0, \dots s_{t}, \rangle$ as the sequence of states taken up to the current timestep $t$. In the partially observable case, the history is given by observations and actions such that $h_t=\langle o_0, a_{0}\dots o_{t-1}, a_{t-1}, o_{t} \rangle$.
In general, an optimal policy for a \gls{pomdp} depends on the full observed history $h_t$, i.e. $a\sim\pi(\cdot|h_t)$. However, most of the time, we can substitute $h_t$ with sufficient statistics $z_t=S(h_t)$, namely the internal state of the policy.

\subsection{Policy Gradient of Stateful Policies}
\citet{wierstra2010recurrent} presented the first implementation of the policy gradient for recurrent policies. This formulation is based on the likelihood ratio trick, i.e. the score function estimator.
To obtain this formulation, the authors consider a generic history-dependant policy $\pibptt(a|h_t)$, obtaining the following gradient formulation
{\small
\begin{equation}
\nabla_\vtheta\mathcal{J}(\pibptt_\vtheta) = \expect{\tau}{\sum_{t=0}^{T-1}\nabla_\vtheta\log\pibptt_\vtheta(a_t|h_t)J(\tau)}.
\label{eq:recurrent_pg}
\end{equation}}\noindent
It is important to notice that each $\nabla_\vtheta\log\pibptt_\vtheta(a_t|h_t)$ term in \eqref{eq:recurrent_pg} depends on the whole history $h_t$ at each timestep $t$. 
If we replace the history $h_t$ with sufficient statistic $z_t=S(h_t)$, then \gls{bptt} is necessary to compute the gradient, as the internal state $z_t$ depends on dynamics induced by the parameters of the policy $\vtheta$. Unfortunately, while this approach provides an easy way to provide unbiased gradient estimates for recurrent policies, this solution does not fit long-horizon tasks well for three different reasons. First, this method is inherently sequential, limiting the utilization of potential gradient computations that could be performed in parallel. Second, when dealing with long trajectories, the gradients may suffer from the problems of exploding or vanishing gradients, which can hinder the learning process. Finally, we must store the complete trajectory sequentially or use truncated histories,  leading to a biased gradient estimate. Unfortunately, this last assumption is problematic for modern actor-critic frameworks. 

To solve these issues, we consider a different policy structure. Instead of looking at policies with an internal state, we model our policy as a joint probability distribution over actions and next internal states, i.e. $(a,z')\sim\pi_\vtheta(\cdot|s,z)$. By considering a stochastic policy state transition, we derive a policy gradient formulation that does not require the propagation of the policy gradient through time but only depends on the local information available, i.e. the (extended) state transition.

\begin{lemma}{\textbf{(Stochastic Stateful Policy Gradient)}}
Let $\pi_\vtheta(a,z'|s,z)$ be a parametric policy representing the joint probability density used to generate the action $a$ and the next internal state $z'$, and let $\tauh$ be the extended trajectory including the internal states. Then, the \gls{s2pg} can be written as
{\small
\begin{equation}
    \nabla_\vtheta\mathcal{J}(\pi_\vtheta) = \expect{\tauh}{\sum_{t=0}^{T-1} \nabla_\vtheta\log\pi_\vtheta(a_t,z_{t+1}|s_t,z_t) J(\tauh)}.\label{eq:stateful_pg}
\end{equation}}
\label{lemma:stateful_pg}
\vspace{-0.3cm}
\end{lemma}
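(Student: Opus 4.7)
The plan is to recognize the claim as an application of the standard REINFORCE/likelihood-ratio identity, but lifted to an augmented MDP whose state is the pair $(s,z)$ and whose action is the pair $(a,z')$. Once the lifted MDP is set up correctly, the policy $\pi_\vtheta(a,z'\mid s,z)$ is a genuine stateless stochastic policy on that augmented space, and no backpropagation through time is needed because the parameter dependence now sits entirely inside a single-step conditional density.

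First I would write down the joint density of an extended trajectory $\tauh = (s_0,z_0,a_0,s_1,z_1,a_1,\dots)$ under the lifted dynamics. Because the environment transition $P(s_{t+1}\mid s_t,a_t)$ and the initial distributions $\iota(s_0)$, $p(z_0)$ do not depend on $\vtheta$, the trajectory density factorises as
\begin{equation*}
p_\vtheta(\tauh) \;=\; \iota(s_0)\, p(z_0) \prod_{t=0}^{T-1} \pi_\vtheta(a_t,z_{t+1}\mid s_t,z_t)\, P(s_{t+1}\mid s_t,a_t).
\end{equation*}
Taking the log and differentiating, everything $\vtheta$-independent drops out, leaving
\begin{equation*}
\nabla_\vtheta \log p_\vtheta(\tauh) \;=\; \sum_{t=0}^{T-1} \nabla_\vtheta \log \pi_\vtheta(a_t,z_{t+1}\mid s_t,z_t).
\end{equation*}

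Next I would apply the log-derivative trick to $\mathcal{J}(\pi_\vtheta)=\mathbb{E}_{\tauh}[J(\tauh)]$ in the usual way: swap the gradient and the integral (assuming the standard regularity conditions that let us differentiate under the integral sign), use $\nabla_\vtheta p_\vtheta(\tauh) = p_\vtheta(\tauh)\nabla_\vtheta\log p_\vtheta(\tauh)$, and substitute the expression above. This immediately produces equation~\eqref{eq:stateful_pg}.

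The only real subtlety, and the step I would dwell on, is justifying that viewing $(a,z')$ as a composite action yields a well-defined MDP that is equivalent to the original stateful-policy problem. Concretely, I would verify that (i) the internal state transition is deterministic given the composite action, so sampling $(a,z')\sim\pi_\vtheta(\cdot\mid s,z)$ and then evolving $s$ via $P$ recovers the original interaction protocol, and (ii) the return $J(\tauh)$ depends on $\tauh$ only through the environment states and actions, so it coincides with the original return. Once these two points are made explicit, the rest of the argument is a direct transcription of the classical score-function policy gradient and requires no additional machinery beyond the regularity conditions needed to exchange $\nabla_\vtheta$ and the expectation.
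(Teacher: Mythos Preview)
Your proposal is correct and mirrors the paper's own argument almost step for step: factor the extended-trajectory density as the initial distribution times a product of environment transitions and policy terms, differentiate under the integral using regularity conditions, and apply the likelihood-ratio trick so that only the $\nabla_\vtheta\log\pi_\vtheta(a_t,z_{t+1}\mid s_t,z_t)$ terms survive. Your explicit framing via an augmented MDP and the check that the return coincides with the original one are helpful elaborations, but the core mechanics are identical to the paper's derivation.
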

This lemma is straightforward to derive using the classical derivation of the policy gradient. The complete proof can be found in Appendix~\ref{app:proof_pg}. In contrast to the gradient in \eqref{eq:recurrent_pg}, the gradient in \eqref{eq:stateful_pg} depends on the information of the \emph{current} timestep $t$, i.e. $a_t$, $z_{t+1}$,$s_t$ and $z_t$. We can deduce the following from Lemma \ref{lemma:stateful_pg}

\textbf{The gradient of a stateful policy can be computed stochastically}, by 
treating the internal state of policy as a random variable. This modification results in an algorithm that does not require \gls{bptt}, but only access to samples from $z$. 

\textbf{We allow to trade off speed and accuracy while keeping the gradient estimation \emph{unbiased}}. While truncating the history to save computation time biases the gradient, estimating the expectation with a finite amount of samples of $z$ does not. In contrast to \gls{bptt}, this formulation supports parallel computation of the gradient, allowing to fully exploit the parallelization capabilities of modern automatic differentiation tools and parallel simulation environments. As the batch size for gradient computation increases, the variance of our gradient estimator decreases, as shown later in Section \ref{sec:var_analysis}.

\begin{figure*}[t]
    \centering
    \includegraphics[width=\textwidth]{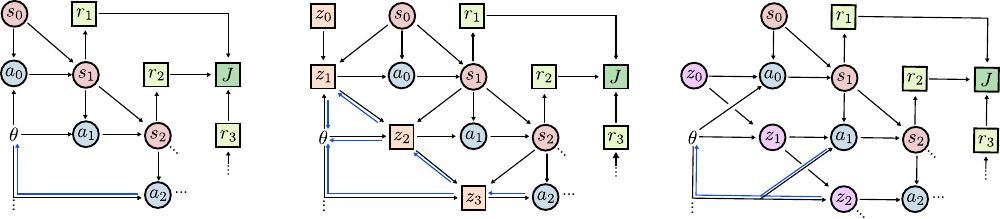}
    \caption{Stochastic computational graph illustrating, from right to left, a comparison of a stateless policy, a stateful policy trained with \gls{bptt}, and a stateful policy incorporating a stochastic internal state transition kernel.  Deterministic nodes (squares) allow for the passage of analytic gradients, while stochastic nodes (circles) interrupt the deterministic paths. The input node to the graph is represented by $\theta$. The blue lines indicate the deterministic paths for which analytic gradients are available for a specific action. For a detailed demonstration of how stochastic node gradients are calculated compared to deterministic ones, refer to Figure \ref{fig:grad_comp} in Appendix \ref{app:gradient_comparison}
}
    \label{fig:stochastic_computation_graphs}
\end{figure*}

To provide an alternative perspective, we present the core principle underlying Lemma \ref{lemma:stateful_pg} within the framework of stochastic computation graphs, as originally proposed by \citet{schulman2015gradient}.  The key rationale behind Lemma \ref{lemma:stateful_pg} is to shield the deterministic path through the sequence of internal states by using a stochastic node, here our stochastic internal state kernel. Figure~\ref{fig:stochastic_computation_graphs} effectively elucidates this notion by contrasting the gradient paths of the internal state $z$ of various policies.
While \gls{s2pg} successfully addresses the issue of exploding and vanishing gradients associated with long sequences, it may be affected by high variance due to the inclusion of a stochastic internal state kernel. To address this, we propose to compensate for the additional variance by using state-of-the-art actor-critic methods while exploiting the same policy structure. 



\subsection{Actor-Critic Methods for Stateful Policies}
To define actor-critic methods, we need to extend the definitions of value functions into the stateful policy setting.
Let $s$ be some state of the environment, $z$ some state of the policy, $a$  a given action, and $z'$ the next policy state. We define the state-action value function of our policy $\pi(a,z'|s,z)$ as
{\small
\begin{align}
Q^{\pi}(s,z,a,z') &= r(s,a) +\gamma \expect{s'}{\expect{a', z''}{Q^{\pi}(s',z',a',z'')}}  = r(s,a) + \gamma \expect{s'}{V^{\pi}(s',z')}
\label{eq:stateful_q}
\end{align}}\noindent
with $s'\sim P(\cdot|s,a)$ and $(a',z'')\sim\pi(\cdot|s',z')$. 

Furthermore, we define the state-value function as
{\small
\begin{equation}
V^{\pi}(s,z) = \expect{a,z'}{Q^{\pi}(s,z,a,z')} \qquad \text{where } (a,z')\sim\pi(\cdot|s,z) \, . 
\label{eq:stateful_v}
\end{equation}}\noindent
Notice that these definitions have the same meaning as the definition of value and action-value function of a Markov policy. Indeed, the value function $V(s,z)$ is the expected discounted return achieved by the policy $\pi$ starting from the state $s$ with initial internal state $z$. Thus, we can write the expected discounted return of a stateful policy as $\mathcal{J}(\pi) = \int \iota(s,z)V(s,z) ds dz$ with the joint initial state distribution $\iota(s,z)$. 

The fundamental theorem of actor-critic algorithms is the \gls{pgt}~\cite{sutton1999policy}. This theorem lays the connection between value functions and policy gradients. Many very successful practical approaches are based on this idea and constitute a fundamental part of modern \gls{rl}. Fortunately, using our stochastic internal state kernel and the definitions of value and action-value function in equations~\eqref{eq:stateful_v} and~\eqref{eq:stateful_q}, we can provide a straightforward derivation of the \gls{pgt} for stateful policies.

\begin{theorem}{\textbf{(Stateful Policy Gradient Theorem)}}
Let $\rho^{\pi_{\vtheta}}(s,z)$ be the occupancy measure and $Q^{\pi_{\vtheta}}(s, z, a, z')$ be the value funciton of a parametric policy $\pi_{\vtheta}(a,z'|s,z)$, then the stateful policy gradient theorem is given by 
{\small
\begin{align*}
\nabla_\vtheta\mathcal{J}_{\vtheta} =& \int_\mathcal{S}\int_\mathcal{Z} \rho^{\pi_{\vtheta}}(s,z) \int_\mathcal{A}\int_\mathcal{Z} \nabla_\vtheta\pi_{\vtheta}(a,z'|s,z) \, Q^{\pi_{\vtheta}}(s, z, a, z')\, dz' \,  da \, dz \,  ds.
\end{align*}}
\label{th:spgt}
\vspace{-0.3cm}
\end{theorem}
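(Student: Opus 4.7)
The plan is to follow the classical derivation of the Sutton et al. (1999) policy gradient theorem, but working on the extended state space $\mathcal{S}\times\mathcal{Z}$ and extended action space $\mathcal{A}\times\mathcal{Z}$. The crucial observation is that the stochastic stateful policy $\pi_\vtheta(a,z'|s,z)$ together with the environment transition kernel $P(s'|s,a)$ induce a joint Markov transition on the pair $(s,z)$, so once the value functions are set up as in the stateful definitions of $Q^{\pi_\vtheta}$ and $V^{\pi_\vtheta}$, the standard proof goes through almost verbatim with $(s,z)$ in the role of state and $(a,z')$ in the role of action.

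Concretely, I would begin by differentiating the definition $V^{\pi_\vtheta}(s,z) = \int\int \pi_\vtheta(a,z'|s,z)\,Q^{\pi_\vtheta}(s,z,a,z')\,da\,dz'$ with respect to $\vtheta$, apply the product rule, and use the Bellman identity $Q^{\pi_\vtheta}(s,z,a,z') = r(s,a) + \gamma \int P(s'|s,a)\,V^{\pi_\vtheta}(s',z')\,ds'$. Since $r(s,a)$ and $P$ do not depend on $\vtheta$, this yields the recursion
\[
\nabla_\vtheta V^{\pi_\vtheta}(s,z) = \int\!\!\int \nabla_\vtheta \pi_\vtheta(a,z'|s,z)\, Q^{\pi_\vtheta}(s,z,a,z')\,da\,dz' + \gamma \int\!\!\int\!\!\int \pi_\vtheta(a,z'|s,z)\, P(s'|s,a)\, \nabla_\vtheta V^{\pi_\vtheta}(s',z')\,ds'\,da\,dz'.
\]
Call the first term $g(s,z)$ and denote the operator appearing on the right by $T_{\pi_\vtheta}$; the recursion reads $\nabla_\vtheta V = g + \gamma T_{\pi_\vtheta}\nabla_\vtheta V$, with $T_{\pi_\vtheta}$ the joint transition on $(s,z)$ induced by alternating draws from $\pi_\vtheta$ (producing both $a$ and $z'$) and from $P$ (producing $s'$).

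Iterating this recursion $t$ times and summing gives $\nabla_\vtheta V^{\pi_\vtheta} = \sum_{t=0}^{\infty} \gamma^t T_{\pi_\vtheta}^t g$. Integrating against the joint initial distribution $\iota(s_0,z_0)$, the factor $\sum_t \gamma^t [T_{\pi_\vtheta}^t \iota](s,z)$ becomes exactly the discounted occupancy measure $\rho^{\pi_\vtheta}(s,z) = \sum_{t=0}^{\infty} \gamma^t \Pr\{s_t=s \wedge z_t=z \mid \pi_\vtheta\}$ defined in the preliminaries. Since $\mathcal{J}_\vtheta = \int\!\int \iota(s,z)\,V^{\pi_\vtheta}(s,z)\,ds\,dz$, substituting and collecting terms yields the claimed expression.

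The main obstacle is not conceptual but essentially book-keeping: one must be consistent about the augmented MDP viewpoint so that the extended "actions" $(a,z')$ correctly absorb the randomness of the internal-state transition, and justify the standard interchanges of $\nabla_\vtheta$ with the infinite sum and with the integrals over $\mathcal{S}$, $\mathcal{Z}$, and $\mathcal{A}$ (via bounded rewards, $\gamma<1$, and mild regularity on $\pi_\vtheta$). Once that framing is pinned down, the derivation is a drop-in copy of the classical PGT, and in fact Lemma~\ref{lemma:stateful_pg} can be recovered from it by the usual likelihood-ratio trick, giving a consistency check.
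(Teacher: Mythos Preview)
Your proposal is correct and mirrors the paper's own proof: both differentiate $V^{\pi_\vtheta}(s,z)$, apply the product rule together with the Bellman identity to obtain the recursion $\nabla_\vtheta V = g + \gamma T_{\pi_\vtheta}\nabla_\vtheta V$, unroll it, and identify $\sum_t \gamma^t$ times the $t$-step transition density with the occupancy measure $\rho^{\pi_\vtheta}(s,z)$. The paper carries out the unrolling explicitly via $t$-step transition densities $\mathrm{Tr}^{\pi_\vtheta}_t(s_t,z_t)$ rather than your operator notation, and states the needed regularity conditions (continuity and compactness) to justify the interchange of differentiation, integration, and summation, but the argument is otherwise identical.
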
 \noindent
The proof follows the lines of the original \gls{pgt} and is given in Appendix~\ref{app:proof_pgt}. Differently from \mbox{\citet{sutton1999policy}}, which considers only the discrete action setting, we provide the full derivation for continuous state and action spaces. 
Theorem \ref{th:spgt} elucidates the key idea of this work:

\textbf{We learn a $Q$-function to capture the values of internal state transitions}, compensating the additional variance in the policy  gradient estimate induced by our stochastic internal state transition kernel. This approach enables the efficient training of stateful policies without the need for \gls{bptt}, effectively reducing the computational time required to train stateful policies to that of stateless policies. Figure \ref{fig:grad_comp} in Appendix \ref{app:gradient_comparison} further illustrates and reinforces this concept.

While the \gls{pgt} is a fundamental building block of actor-critic methods, some of the more successful approaches, namely the \gls{trpo} ~\cite{schulman2015} and the \gls{ppo}~\cite{ppo} algorithm, are based on the Performance Difference Lemma~\cite{kakade2002approximately}.
We can leverage the simplicity of our policy structure to derive Lemma~\ref{lemma:performance_difference}, a modified version of the original Lemma for stateful policies, allowing us to implement recurrent versions of \gls{ppo} and \gls{trpo} with \gls{s2pg}. The Lemma and the proof can be found in Appendix~\ref{app:proof_perfomance_diff}.

\subsection{The Deterministic Limit}
The last important piece of policy gradient theory is the definition of the \gls{dpgt} \cite{dpg}. This theorem can be seen as the deterministic limit of the \gls{pgt}, at least under some class of probability distributions. It is particularly important as is the basis for many successful actor-critic algorithms, such as \gls{ddpg}~\cite{ddpg} and \gls{td3}~\cite{fujimoto2018}. We can derive the \gls{dpgt} for our stateful policies:

\begin{theorem}{\textbf{(Stateful \glsdesc*{dpgt})}}
Let $\mu_\vtheta(s,z) = \left[\mu_\vtheta^a(s,z), 
     \mu_\vtheta^z(s,z) \right]^\top$ be a deterministic policy, where $\mu_\vtheta^a(s,z)$ represent the action selection model and $\mu_\vtheta^z(s,z)$ represents the internal state transition model. Then, under mild regularity assumptions, the policy gradient of the stateful deterministic policy is
{\small
\begin{align*}
\nabla_\vtheta\mathcal{J}(\mu_\vtheta) = & \int_\mathcal{S}\int_\mathcal{Z} \rho^{\mu_\vtheta}(s,z) \left( \nabla_\vtheta \mu^a_\vtheta(s,z)\nabla_a\left.Q^{\mu_\vtheta}(s,z,a,\mu_\vtheta^z(s,z))\right\vert_{a=\mu_\vtheta^a(s,z)} \right. \\
& \hspace{2.5cm}\left. + \nabla_\vtheta \mu^z_\vtheta(s,z)\left.\nabla_{z'}Q^{\mu_\vtheta}(s,z,\mu_\vtheta^a(s,z),z')\right\vert_{z'=\mu_\vtheta^z(s,z)} \right) ds dz.
\end{align*}} %
\label{th:stateful_dpg} %
\vspace{-0.3cm}
\end{theorem}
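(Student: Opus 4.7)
The plan is to extend the proof of the classical Deterministic Policy Gradient Theorem of \citet{dpg} to the joint $(s,z)$-space, leveraging the fact that both $\mu^a_\vtheta$ and $\mu^z_\vtheta$ are differentiable outputs of the same parametric map. The starting point is the identity $V^{\mu_\vtheta}(s,z)=Q^{\mu_\vtheta}(s,z,\mu^a_\vtheta(s,z),\mu^z_\vtheta(s,z))$, which follows immediately from \eqref{eq:stateful_v} by collapsing the expectation under a deterministic policy. Expanding the right-hand side with the Bellman identity \eqref{eq:stateful_q} yields
$V^{\mu_\vtheta}(s,z)=r(s,\mu^a_\vtheta(s,z))+\gamma\int_{\mathcal{S}} P(s'|s,\mu^a_\vtheta(s,z))\,V^{\mu_\vtheta}(s',\mu^z_\vtheta(s,z))\,ds'$, which is the equation I would differentiate.

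Differentiating both sides in $\vtheta$ and applying the chain rule carefully, I would group the resulting terms by which factor was differentiated. The pieces coming from $\nabla_\vtheta \mu^a_\vtheta$ hit both the reward and the transition kernel, and recombine into $\nabla_\vtheta\mu^a_\vtheta(s,z)\,\nabla_a Q^{\mu_\vtheta}(s,z,a,\mu^z_\vtheta(s,z))\big|_{a=\mu^a_\vtheta(s,z)}$. The pieces coming from $\nabla_\vtheta \mu^z_\vtheta$ hit only the tail value at the fresh internal state and recombine, using the Bellman identity in the other direction, into $\nabla_\vtheta\mu^z_\vtheta(s,z)\,\nabla_{z'} Q^{\mu_\vtheta}(s,z,\mu^a_\vtheta(s,z),z')\big|_{z'=\mu^z_\vtheta(s,z)}$. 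What remains is the tail term $\gamma\int P(s'|s,\mu^a_\vtheta(s,z))\,\nabla_\vtheta V^{\mu_\vtheta}(s',\mu^z_\vtheta(s,z))\,ds'$, giving a recursion on $\nabla_\vtheta V^{\mu_\vtheta}$ whose instantaneous source term is precisely the integrand stated in the theorem.

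Next, I would unroll the recursion. The stochastic $P(\cdot|s,\mu^a_\vtheta(s,z))$ together with the deterministic $z' = \mu^z_\vtheta(s,z)$ defines a transition kernel on $\mathcal{S}\times\mathcal{Z}$, and iterating the recursion $t$ times produces $\gamma^t$ times the $t$-step distribution of $(s_t,z_t)$ under the induced Markov process on the extended space. Summing over $t\geq 0$ yields the discounted occupancy measure $\rho^{\mu_\vtheta}(s,z)$ defined in the preliminaries. Integrating the identity $\mathcal{J}(\mu_\vtheta) = \int \iota(s,z)\,V^{\mu_\vtheta}(s,z)\,ds\,dz$ against the initial distribution $\iota$ then converts the left-hand side into $\nabla_\vtheta\mathcal{J}(\mu_\vtheta)$ and produces the stated double integral on the right.

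The main obstacle will be discharging the regularity side-conditions rather than the algebra itself. I expect the ``mild regularity assumptions'' to consist of continuity and appropriate boundedness of $\nabla_a r$, $\nabla_a P$, $\nabla_\vtheta\mu^a_\vtheta$, $\nabla_\vtheta\mu^z_\vtheta$, and smoothness of $V^{\mu_\vtheta}$ in its $z$-argument, so that Leibniz's rule permits interchanging $\nabla_\vtheta$ with the integrals and dominated convergence permits interchanging $\nabla_\vtheta$ with the infinite sum obtained in the unrolling. The delicate point is that the recursion references $\nabla_{z'}V^{\mu_\vtheta}$ at the next iterate, so the conditions must be self-consistent along the whole chain; this parallels the condition in \citet{dpg} requiring existence and continuity of $\nabla_a Q^\mu$, now lifted to the enlarged $(s,z,a,z')$ argument.
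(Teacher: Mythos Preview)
Your proposal is correct and mirrors the paper's own proof essentially step for step: you start from $V^{\mu_\vtheta}(s,z)=Q^{\mu_\vtheta}(s,z,\mu^a_\vtheta,\mu^z_\vtheta)$, differentiate the Bellman expansion, regroup into the $\nabla_a Q$ and $\nabla_{z'}Q$ pieces plus a tail recursion on $\nabla_\vtheta V^{\mu_\vtheta}$, then unroll and sum against $\iota$ to obtain $\rho^{\mu_\vtheta}$. The paper's only additional device is to write the deterministic $z$-transition as a Dirac delta $\delta(z'-\mu^z_\vtheta(s,z))$ so that the joint $(s,z)$ kernel and the $t$-step densities $\dettndensop{t}$ can be displayed explicitly before summing, but this is purely notational and matches what you describe as ``the transition kernel on $\mathcal{S}\times\mathcal{Z}$''.
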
 %
The proof is given in Appendix~\ref{app:proof_dpgt}.
It is important to notice that, while the deterministic policy gradient theorem allows computing the gradient of a deterministic policy, it assumes the knowledge of the $Q$-function for the deterministic policy $\mu_\vtheta$. However, to compute the $Q$-function of a given policy it is necessary to perform exploration. Indeed, to accurately estimate the $Q$-values, actions different from the deterministic policy needs to be taken. 
We want to stress that in this scenario, instead of using the standard $Q$-function  formulation, we use our definition of $Q$-functions for stateful policies: therefore, we must also explore the internal state transition. This key concept explains why in our formulation it is not necessary to perform \gls{bptt} even in the deterministic policy scenario.


\subsection{The Partially Observable Setting}
\label{subsec:partially_observable}
Within this section, we introduce the \gls{pomdp} settings used within this work and adapt the previous theory accordingly. Therefore, we extend Lemma~\ref{lemma:stateful_pg} as follows.
\begin{corollary}
The stateful policy gradient in partially observable environments is
{\small
\begin{equation*}
    \nabla_\vtheta\mathcal{J}(\pi_\vtheta) = \expect{\substack{\tauh \\ o_t \sim O(s_t)}}{\sum_{t=0}^{T-1} \nabla_\vtheta\log\pi_\vtheta(a_t,z_{t+1}|o_t, z_t) J(\tauh)}.
\end{equation*}}
\label{col:pomdp}
\vspace{-0.3cm}
\end{corollary}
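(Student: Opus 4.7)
The plan is to mirror the proof of Lemma~\ref{lemma:stateful_pg} after enlarging the trajectory to account for the observation process. First, I would fix an extended trajectory $\tauh$ containing states, internal states, observations, and actions, and write down its joint density under the stochastic stateful policy $\pi_\vtheta(a,z'|o,z)$. By the POMDP's Markovian factorization and the fact that $(a_t, z_{t+1})$ is produced from $(o_t, z_t)$ only, the density decomposes as
\begin{equation*}
p_\vtheta(\tauh) = \iota(s_0, z_0) \prod_{t=0}^{T-1} O(o_t|s_t)\, \pi_\vtheta(a_t, z_{t+1}|o_t, z_t)\, P(s_{t+1}|s_t, a_t).
\end{equation*}

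Next, I would apply the standard likelihood ratio trick. Since $\iota$, the observation kernel $O$, and the transition kernel $P$ do not depend on $\vtheta$, the gradient of $\log p_\vtheta(\tauh)$ keeps only the policy terms,
\begin{equation*}
\nabla_\vtheta \log p_\vtheta(\tauh) = \sum_{t=0}^{T-1} \nabla_\vtheta \log \pi_\vtheta(a_t, z_{t+1}|o_t, z_t).
\end{equation*}
Writing $\mathcal{J}(\pi_\vtheta) = \int p_\vtheta(\tauh) J(\tauh)\, d\tauh$, exchanging gradient and integral under the usual mild regularity assumptions inherited from Lemma~\ref{lemma:stateful_pg}, and multiplying and dividing by $p_\vtheta(\tauh)$ gives the claimed expression, with the notation $o_t \sim O(\cdot|s_t)$ in the subscript simply making explicit how observations enter the expectation over $\tauh$.

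The main obstacle is purely bookkeeping rather than any new technical machinery: I need to be careful that the observations appear both as inputs to $\pi_\vtheta$ and as sampled variables inside the expectation, and to verify that the observation kernel $O$ (which is $\vtheta$-independent) genuinely cancels in the score function. Once the extended trajectory density above is written down, the derivation reduces formally to Lemma~\ref{lemma:stateful_pg} with $s_t$ replaced by $o_t$ inside the policy argument, so no additional assumptions beyond those of the fully observable case are required.
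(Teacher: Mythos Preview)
Your proposal is correct and follows essentially the same approach as the paper: write the extended trajectory density including the observation kernel, apply the likelihood ratio trick, and observe that only the policy factors depend on $\vtheta$. The only cosmetic difference is the indexing of the observation factors (the paper pulls out $O(o_0|s_0)$ and includes $O(o_{t+1}|s_{t+1})$ in the product, while you put $O(o_t|s_t)$ inside), but this is immaterial to the argument.
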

The proof is given in Appendix~\ref{app:proof_pg_po}.
Corollary \ref{col:pomdp} offers a particularly intriguing perspective: If we have knowledge of $J(\tauh)$, it is possible to learn an internal state representation $z_t$ that effectively encapsulates past information \emph{without} looking back in time. In other words, it enables the compression of past information without explicitly considering previous time steps. This stands in stark contrast to \gls{bptt}, where all past information is used to build a belief of the current state. Similarly to Corollary \ref{col:pomdp}, we can extend Theorem \ref{th:spgt}
{\small
\begin{align}
\nabla_\vtheta\mathcal{J}_{\vtheta} =& \int_\mathcal{S}\int_\mathcal{O}\int_\mathcal{Z}\rho^{\pi_{\vtheta}}(s,z) \, O(o|s) \int_\mathcal{A}\int_\mathcal{Z} \nabla_\vtheta\pi_{\vtheta}(a,z'|o, z) \, Q^{\pi_{\vtheta}}(s, z, a, z')\, da ds\, . 
\label{eq:spgt_pomdp}
\end{align}}\noindent
It is important to note the distinction between the $Q$-function, which receives states, and the policy, which relies on observations only. While it is possible to learn a $Q$-function on observations via bootstrapping, 
the task of temporal credit assignment in \glspl{pomdp}  is very challenging, often requiring specialized approaches \cite{dvrl, slac}. This difficulty stems from the following causality dilemma: On one hand, the policy can learn a condensed representation of the history assuming accurate $Q$-value estimates. On the other hand, the $Q$-function can be learned using bootstrapping techniques assuming a reliable condensed representation of the history. But doing both at the same time often results in unstable training. We show in Section \ref{sec:exp} that it is possible to combine Monte-Carlo estimates with a critic that uses observations only using \gls{ppo} to learn in a true \gls{pomdp} setting. However, we limit ourselves to the setting with privileged information in the critic, as shown in Equation \ref{eq:spgt_pomdp}, for approaches that learn a value function via bootstrapping.

\section{Variance Analysis of the Gradient Estimators}\label{sec:var_analysis}
To analyze the performance of our estimator from the theoretical point of view, we provide the upper bounds of \gls{bptt} and \gls{s2pg} in the Gaussian policy setting with constant covariance matrix $\Sigma$. As done in previous work \cite{papini2022smoothing,zhao2011analysis}, we define the variance of a random vector as the trace of its covariance matrix (see equation~\eqref{eq:var_trace} in Appendix~\ref{app:variance_analysis}). Furthermore, for both gradient estimators, we assume that the function approximator for the mean has the following structure 
%
{\small
\begin{align}
    \mu^a_\vtheta(s_t, z_t)=f_\vtheta(s_t, z_t) \quad \mu^z_\vtheta(s_t, z_t)= \eta_\vtheta(s_t, z_t) \,\,  \text{(for \gls{s2pg})} \quad z_{t+1} = \eta_\vtheta(s_t, z_t)\,\, \text{(for \gls{bptt})} . 
\end{align}}\noindent
Hence, $f_\vtheta(s_t, z_t)$ constitutes the mean of the Gaussian for both \gls{s2pg} and \gls{bptt}, while $\eta_\vtheta(s_t, z_t)$ constitutes the internal transition function for \gls{bptt} and the mean of the Gaussian distribution of the hidden state in \gls{s2pg}. Figure \ref{fig:network_architectures} in Appendix \ref{app:net_archi} illustrates the policy structure. Additionally, $\Upsilon$ constitutes the covariance matrix for the Gaussian distribution of the hidden state in \gls{s2pg}. Our theoretical analysis is based on prior work \cite{papini2022smoothing,zhao2011analysis}, which analyzes the variance of a REINFORCE-style policy gradient estimator for stateless policies and a univariate Gaussian distribution. We extend the letter to policy gradient estimators using \gls{bptt} and \gls{s2pg}, and broaden the analysis to the more general multivariate Gaussian case to allow investigation on arbitrarily large action spaces. In the following, we derive upper bounds for the variance of the policy gradient for \gls{bptt} and \gls{s2pg} and highlight the behavior of each estimator in different regimes.
Therefore, we exploit the concept of the Frobenius norm of a matrix $\fnorm{A}$ and consider the following assumptions:
\begin{assumption}
$r(s, a, s')\in \left[-R, R\right]$ for $R>0$ and $\forall s_i, z_i, a_i$. \label{assump:reward}
\end{assumption}
\begin{assumption}
$\fnorm{\frac{\partial}{\partial \theta} f_\theta(s_i, z_i)} \leq F,\,  \fnorm{\frac{\partial}{\partial \theta} \eta_\theta(s_{i}, z_{i}) } \leq H,\,  \fnorm{\frac{\partial}{\partial z_i} f_\theta(s_i, z_i)} \leq K \quad \forall s_i, z_i$. \label{assump:grad1}
\end{assumption}
\begin{assumption}
$\fnorm{\frac{\partial}{\partial z_i} \eta_\theta(s_{i}, z_{i})} \leq Z \quad \forall s_i, z_i$.\label{assump:grad2}
\end{assumption}

In this setting, the upper bound on the variance of \gls{bptt} is given by:
\begin{theorem}{\textbf{(Variance Upper Bound \gls{bptt})}}
Let $\nu_\vtheta(a_t | h_t) $ be a Gaussian policy of the form $\mathcal{N}(a_t|\mu^a_\vtheta(h_t), \Sigma)$ and let $T$ be the trajectory length. Then, under the assumptions \ref{assump:reward}, \ref{assump:grad1} and \ref{assump:grad2}, the upper bound on the variance of the REINFORCE-style policy gradient estimate using \gls{bptt} is
{\small
\begin{align}
     \var\left[ \nabla_\theta \hat{J}_{\text{BPTT}}(\Theta) \right]  &\leq \frac{R^2 \fnorm{\Sigma^{-1}} (1-\gamma^T)^2}{N (1-\gamma)^2} \, \sum_{t=0}^{T-1} \biggl( F + H \biggr(\sum_{i=0}^{t-1} Z^{t-i-1} \biggr) K\biggr)^2 \nonumber\\
     &= \frac{R^2 \fnorm{\Sigma^{-1}} (1-\gamma^T)^2}{N (1-\gamma)^2} \, \bigl( TF^2 + \underbrace{2FHK\tilde{Z} + H^2K^2\bar{Z}}_{\Delta_\text{\gls{bptt}}}\bigr) \, , 
     \label{eq:bptt_variance_bound}
\end{align}}
where we define $\tilde{Z} = \sum_{t=0}^{T-1} \sum_{i=0}^{t-1} Z^{t-i-1}$ and $\bar{Z} = \sum_{t=0}^{T-1} \left(\sum_{i=0}^{t-1} Z^{t-i-1}\right)^2$ for brevity. 
\label{lemma:variance_bptt}
\end{theorem}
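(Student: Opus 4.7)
The plan is to unroll the REINFORCE estimator as a sum of per-timestep score terms scaled by the Monte-Carlo return, bound the return by its uniform maximum, and exploit the martingale structure of the score functions to collapse the squared sum over time into a single sum. Since the $N$ sample trajectories are i.i.d., $\var[\nabla_\vtheta \hat{J}_{\text{BPTT}}] = \frac{1}{N}\var[g]$ for $g = \sum_{t=0}^{T-1} S_t \, J(\tau)$ with $S_t := \nabla_\vtheta\log\nu_\vtheta(a_t|h_t)$. Using the trace-of-covariance definition, $\var[g]\leq \mathbb{E}\|g\|^2$, and Assumption~\ref{assump:reward} immediately yields $|J(\tau)|\leq R(1-\gamma^T)/(1-\gamma)$, which I pull outside.

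Next I would expand $\mathbb{E}\|\sum_t S_t\|^2$ and argue the cross terms vanish. For $s<t$, conditioning on the history $h_t$ (which in particular determines $a_s$, and hence $S_s$) gives $\mathbb{E}[S_t\mid h_t]=0$ by the usual score identity for a conditional density, so $\mathbb{E}[S_s^\top S_t]=0$ by the tower property. For the diagonal terms, the Gaussian structure yields $S_t = (\nabla_\vtheta\mu^a_\vtheta(h_t))^\top\Sigma^{-1}(a_t - \mu^a_\vtheta(h_t))$, and a direct computation using $\mathbb{E}[(a_t-\mu)(a_t-\mu)^\top\mid h_t]=\Sigma$ produces $\mathbb{E}\|S_t\|^2 = \Tr\!\bigl(\Sigma^{-1}(\nabla_\vtheta\mu^a_\vtheta(h_t))(\nabla_\vtheta\mu^a_\vtheta(h_t))^\top\bigr)$, which I upper-bound by $\fnorm{\Sigma^{-1}}\cdot\fnorm{\nabla_\vtheta\mu^a_\vtheta(h_t)}^2$ via the matrix Cauchy--Schwarz inequality $\Tr(AB)\leq\fnorm{A}\fnorm{B}$ together with submultiplicativity $\fnorm{MM^\top}\leq\fnorm{M}^2$.

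The BPTT-specific content lies in bounding $\fnorm{\nabla_\vtheta\mu^a_\vtheta(h_t)} = \fnorm{\nabla_\vtheta f_\vtheta(s_t, z_t(\vtheta))}$. Since the recurrence $z_{j+1}=\eta_\vtheta(s_j,z_j)$ makes $z_t$ depend on $\vtheta$, the total derivative unrolls to $\nabla_\vtheta z_t = \sum_{i=0}^{t-1}\bigl(\prod_{j=i+1}^{t-1}\tfrac{\partial\eta_\vtheta}{\partial z_j}\bigr)\tfrac{\partial\eta_\vtheta}{\partial\vtheta}(s_i,z_i)$. Applying the chain rule through $f_\vtheta$, then the triangle inequality and repeated submultiplicativity $\fnorm{AB}\leq\fnorm{A}\fnorm{B}$ under Assumptions~\ref{assump:grad1}--\ref{assump:grad2}, delivers $\fnorm{\nabla_\vtheta f_\vtheta(s_t,z_t)}\leq F + HK\sum_{i=0}^{t-1}Z^{t-i-1}$, which is exactly the inner term being squared in the target bound.

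The remainder is pure algebra: expanding the square, distributing the sum over $t$, and invoking the definitions of $\tilde{Z}$ and $\bar{Z}$ assembles the three terms $TF^2$, $2FHK\tilde{Z}$, and $H^2K^2\bar{Z}$. I expect the main obstacle to be justifying the cross-term cancellation cleanly, because in a recurrent setting readers may worry that the dependence of $\mu^a_\vtheta$ on the full past through $z_t(\vtheta)$ breaks the martingale identity; the resolution is that the score remains mean-zero conditional on $h_t$ since we are differentiating a properly normalized conditional density, no matter how elaborately the conditional mean depends on $\vtheta$. A secondary subtlety is that using $\fnorm{\cdot}$ in place of the operator norm (e.g., $\fnorm{AB}\leq\fnorm{A}\fnorm{B}$ rather than $\fnorm{AB}\leq\|A\|_{\text{op}}\fnorm{B}$) is loose but matches the precise form of the stated bound.
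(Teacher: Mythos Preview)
Your proposal is correct and follows essentially the same route as the paper's proof: bound $\var[g]\le\mathbb{E}\|g\|^2$, pull out the uniform return bound $R(1-\gamma^T)/(1-\gamma)$, eliminate the cross terms in $\mathbb{E}\|\sum_t S_t\|^2$, reduce the diagonal terms to $\tr\bigl((\nabla_\vtheta\mu)^\top(\nabla_\vtheta\mu)\,\Sigma^{-1}\bigr)\le\fnorm{\Sigma^{-1}}\fnorm{\nabla_\vtheta\mu}^2$, and then unroll the BPTT Jacobian to get $F + HK\sum_{i=0}^{t-1}Z^{t-i-1}$. The only stylistic difference is that the paper handles the cross-term cancellation by reparameterizing $a_t=\mu_\vtheta(h_t)+\Sigma\,\xi_t$ with i.i.d.\ $\xi_t\sim\mathcal{N}(0,\Sigma^{-1})$ and factoring out $\mathbb{E}[\xi_{t'}]=0$, whereas you use the tower property $\mathbb{E}[S_t\mid h_t]=0$ directly; both arguments are equivalent, and yours is arguably the cleaner phrasing.
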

Similarly, the upper bound on the variance of \gls{s2pg} is given by:
\begin{theorem}{\textbf{(Variance Upper Bound \gls{s2pg})}}
Let $\pi_\vtheta(a_t, z_{t+1} | s_t, z_t) $ be a Gaussian policy of the form $\mathcal{N}(a_t, z_{t+1} | 
 \mu_\vtheta(s_t, z_t), \Sigma, \Upsilon)$ and let $T$ be the trajectory length. Then, under the assumptions \ref{assump:reward} and \ref{assump:grad1}, the upper bound on the variance of the REINFORCE-style policy gradient estimate using \gls{s2pg} is 
\vspace{-0.2cm}
{\small
\begin{align}
    \var\left[ \nabla_\theta \hat{J}_{\text{S2PG}}(\Theta) \right] &\leq \frac{R^2 \fnorm{\Sigma^{-1}}(1-\gamma^T)^2}{N(1-\gamma)^2}
  \sum_{t=0}^{T-1}\biggl(F^2 + 
 H^2 \cdot \tfrac{\fnorms{\Upsilon^{-1}}}{{\fnorms{\Sigma^{-1}}}} \biggr)\nonumber\\
 &=  \frac{R^2 \fnorm{\Sigma^{-1}}(1-\gamma^T)^2}{N(1-\gamma)^2}
  \bigl(TF^2 + \underbrace{
 TH^2 \cdot \tfrac{\fnorms{\Upsilon^{-1}}}{{\fnorms{\Sigma^{-1}}}}}_{\Delta_\text{\gls{s2pg}}} \bigr)
  \, . \label{eq:s2pg_variance_bound}
\end{align}}
 \label{lemma:variance_s2pg}
 \vspace{-0.4cm}
\end{theorem}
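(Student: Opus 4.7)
The plan is to follow the standard REINFORCE-style variance argument, but with two modifications tailored to the S2PG structure: the joint Gaussian on $(a_t, z_{t+1})$ with block-diagonal covariance $\diag(\Sigma, \Upsilon)$, and the zero-mean property of the score function applied at the \emph{conditional} distribution $\pi_\vtheta(\cdot \mid s_t, z_t)$.

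\textbf{Reduction to a per-step second moment.} Starting from Lemma~\ref{lemma:stateful_pg}, the $N$-trajectory estimator has variance $\tfrac{1}{N}\var\bigl[\sum_{t=0}^{T-1} g_t \, J(\tauh)\bigr]$, where $g_t = \nabla_\vtheta \log \pi_\vtheta(a_t, z_{t+1}\mid s_t, z_t)$. Using $\var[X] \leq \E[\|X\|^2]$ and Assumption~\ref{assump:reward} to bound $|J(\tauh)| \leq R(1-\gamma^T)/(1-\gamma)$ via a geometric series pulls out the prefactor appearing in \eqref{eq:s2pg_variance_bound}, and reduces the task to bounding $\E\bigl[\|\sum_t g_t\|^2\bigr]$.

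\textbf{Cross-term cancellation.} For $t < s$, condition on the $\sigma$-algebra $\mathcal{F}_{<s}$ generated by the trajectory up to (but not including) $(a_s, z_{s+1})$. Then $\E[g_t^{\top} g_s] = \E[g_t^{\top}\,\E[g_s \mid \mathcal{F}_{<s}]]$, and since $(a_s, z_{s+1}) \sim \pi_\vtheta(\cdot \mid s_s, z_s)$, the inner expectation is the score of a density integrated against itself, hence zero. This collapses the double sum to $\E\bigl[\|\sum_t g_t\|^2\bigr] = \sum_{t=0}^{T-1} \E[\|g_t\|^2]$.

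\textbf{Per-step Gaussian score computation.} For the joint Gaussian $\mathcal{N}((a_t, z_{t+1}) \mid \mu_\vtheta(s_t, z_t), \diag(\Sigma, \Upsilon))$, the score splits as $g_t = J_f^{\top}\Sigma^{-1}(a_t - f_\vtheta) + J_\eta^{\top}\Upsilon^{-1}(z_{t+1} - \eta_\vtheta)$ with $J_f = \partial f_\vtheta/\partial\vtheta$ and $J_\eta = \partial \eta_\vtheta/\partial\vtheta$. Expanding $\|g_t\|^2$, the $a$–$z'$ cross term vanishes under block-diagonal covariance, and the identity $\E[(x-\mu)^{\top} M(x-\mu)] = \tr(M\Sigma)$ for $x \sim \mathcal{N}(\mu, \Sigma)$ yields $\E[\|g_t\|^2 \mid s_t, z_t] = \tr(\Sigma^{-1} J_f J_f^{\top}) + \tr(\Upsilon^{-1} J_\eta J_\eta^{\top})$. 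Applying the Frobenius Cauchy--Schwarz inequality $|\tr(AB)| \leq \fnorm{A}\fnorm{B}$, the submultiplicativity $\fnorm{J J^{\top}} \leq \fnorm{J}^2$, and Assumption~\ref{assump:grad1} gives the per-step bound $\fnorm{\Sigma^{-1}}F^2 + \fnorm{\Upsilon^{-1}}H^2$; summing over $t$ and factoring $\fnorm{\Sigma^{-1}}$ produces \eqref{eq:s2pg_variance_bound}.

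\textbf{Main obstacle.} The subtlest point is the cross-term argument: the trajectory law itself depends on $\vtheta$, so one must be careful to apply the score-function identity at the \emph{conditional} density $\pi_\vtheta(\cdot \mid s_t, z_t)$ rather than the trajectory marginal, using a filtration that stops strictly before the sampling of $(a_t, z_{t+1})$. Once this is in place, the calculation is notably cleaner than the BPTT case, precisely because the stochastic internal-state kernel breaks the deterministic dependence of $z_s$ on past parameters and prevents the chain of Jacobians that produces the $\tilde Z, \bar Z$ terms in Theorem~\ref{lemma:variance_bptt}.
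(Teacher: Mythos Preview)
Your proposal is correct and follows essentially the same approach as the paper: bound the variance by the second moment, pull out the reward factor via Assumption~\ref{assump:reward}, kill the cross-terms using the zero-mean score property, evaluate the diagonal terms via the quadratic-form trace identity, and finish with the Frobenius inequality and Assumption~\ref{assump:grad1}. One point worth noting: your filtration/tower-property argument for the cross-term cancellation is actually cleaner than the paper's, which at the corresponding step factors $\E[g_t^\top g_{t'}]$ into a product of expectations as if $g_t$ and $g_{t'}$ were independent---they are not, since $(s_{t'},z_{t'})$ depends on the earlier noise through the trajectory dynamics---whereas your conditioning on $\mathcal{F}_{<s}$ and use of $\E[g_s\mid\mathcal{F}_{<s}]=0$ is the rigorous way to obtain the same (correct) conclusion.
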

The proof of both Theorems can be found in Appendix \ref{proof:bptt_var} and \ref{proof:s2pg_var}. 
When looking at Theorem \ref{lemma:variance_bptt} and \ref{lemma:variance_s2pg}, it can be seen that both bounds consist of a constant factor in front and a sum over $F^2$, which is a constant defining the upper bound on the gradient of $f_\theta(s_i, z_i)$. Additionally, both bounds introduce an additional term, which we consider as $\Delta$. In fact, the latter is a factor defining the additional variance added when compared to the variance of the stateless policy gradient. Hence, for $\Delta = 0$, both bounds reduce to the bounds found by prior work \cite{papini2022smoothing,zhao2011analysis}, when considering the univariate Gaussian case.   


To understand the difference in the variance of both policy gradient estimators, we need to compare $\Delta_\textit{\gls{bptt}}$ and $\Delta_\textit{\gls{s2pg}}$. For \gls{s2pg}, the additional variance is induced by $TH^2$, which is the squared norm of the gradient of internal transition kernel $\eta_\theta(s_i, z_i)$  w.r.t. $\theta$, weighted by the ratio of the norms of the covariance matrices. In contrast, Theorem \ref{lemma:variance_bptt} highlights the effect of backpropagating gradients through a trajectory as $\Delta_\textit{\gls{bptt}}$ additionally depends on the constants $\tilde{Z}$ and $\bar{Z}$. 
First of all, for $H\gg1$, $K\gg1$ and $\bar{Z}\gg 1$, we can observe that the quadratic term $H^2K^2\bar{Z}^2$ dominates $\Delta_{\textit{BPTT}}$. Secondly, we need to distinguish two cases: as detailed in Lemma~\ref{lemma:z_constants} in Appendix~\ref{proof:bptt_var}, for $Z\geq 1$, the term $H^2K^2\bar{Z}^2$ grows exponentially with the length of the trajectories $T$, indicating that exploding gradients cause exploding variance, while, for $Z<1$, the gradients grows linearly with $T$. That is, while the variance of \gls{bptt} heavily depends on the architecture used for $\eta_\theta (s_i, z_i)$, the variance of \gls{s2pg} depends much less on the architecture of the internal transition kernel allowing the use of arbitrary functions for $\eta_\theta (s_i, z_i)$ at the potential cost of higher -- yet not exploding -- variance. We note that, for \gls{s2pg}, we can define a tighter bound than the one in Theorem \ref{lemma:variance_s2pg} under the assumption of diagonal covariance matrices as shown in Lemma \ref{lemma:tighter_bound_s2pg} in Appendix \ref{proof:s2pg_var}.



\section{Experiments}\label{sec:exp}

In this section, we evaluate our method across four different types of tasks. These tasks include MuJoCo Gym tasks with full partial observability, MuJoCo Gym tasks with a partially observable policy and a privileged critic, a memory task, and complex \gls{il} tasks with a partially observable policy and a privileged critic. In Section \ref{sec:cpg_exp}, we show further experiments in which we use the \gls{ode} of a \glspl{cpg} as a stateful policy to induce an inductive bias into the policy. To conduct the evaluation, we compare recurrent versions of three popular \gls{rl} algorithms—\gls{sac}, \gls{td3}, and \gls{ppo} — that employ \gls{s2pg} and \gls{bptt}. In Appendix \ref{app:algos}, we present the algorithm boxes for all \gls{s2pg} variants. Furthermore, we compare against stateless versions of these algorithms, which use a window of the last 5 or 32 observations as the input to the policy. To show the importance of memory, we include basic version of these algorithms, where only the observation is given to the policy (vanilla) and where the full state is given to the policy (oracle). To compare with specialized algorithms for \glspl{pomdp}, we compare with SLAC~\cite{slac}. SLAC is a model-based approach that learns complex latent variable models that are used for the critic, while the policy takes a window of observations as input. Agents that utilize our stochastic policy transition kernel are denoted by the abbreviation ``RS'',  which stands for Recurrent Stochastic. The network architectures are presented in Appendix~\ref{app:net_archi}. The initial internal state is always set to 0. When an algorithm has a replay buffer, we update its internal states when sampling transitions to resemble \gls{bptt}. For a fair comparison, all methods, except SLAC, are implemented in the same framework, MushroomRL \cite{mushroomrl}. To properly evaluate the algorithms' training time, we allocate 4 cores on our cluster and average the results and the training time over 10 seeds for all experiments.

\clearpage
\begin{figure}[t]
    \begin{center}
    \begin{tabular}{ c c }
    \hspace{2cm}\textbf{Privileged Information}\hspace{1cm} & \hspace{2cm} \textbf{No Privileged Information}   \\ 
    \end{tabular}
    \vspace{-5pt}
    \end{center} 
     \includegraphics[width=\textwidth]{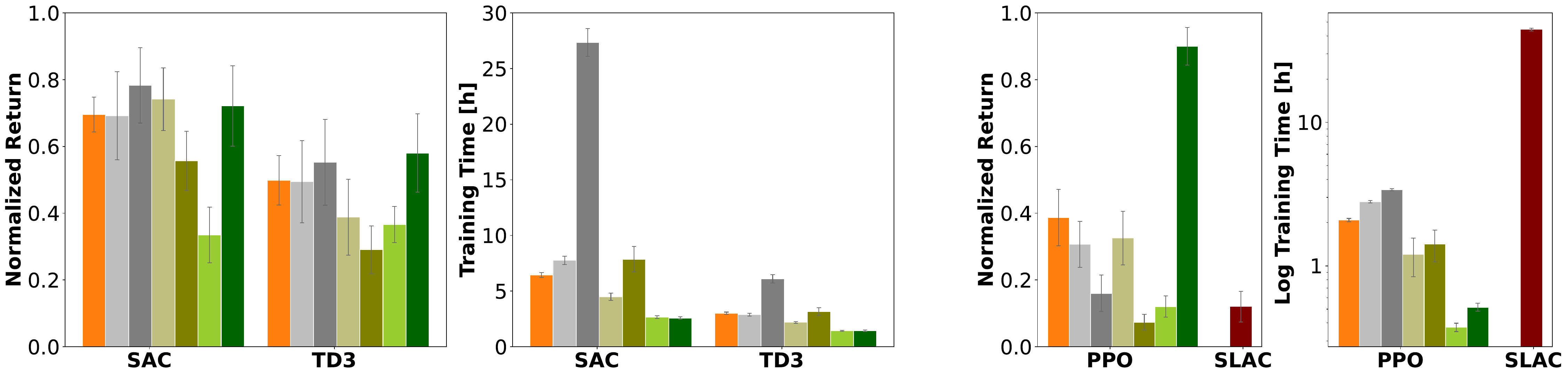}
     \includegraphics[width=\textwidth]{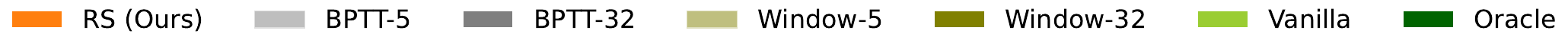}
     \vspace{-15pt}
    \caption{
    Comparison of different \gls{rl} agents on \gls{pomdp} tasks using our stateful gradient estimator, \gls{bptt} with a truncation length of 5 and 32, the SLAC algorithm and stateless versions of the algorithms using a window of observation with length 5 and 32. The "Oracle" approach is a vanilla version of the algorithm using full-state information. Results show the mean and confidence interval of the normalized return and the training time needed for 1 Mio. steps across all \gls{pomdp} Gym tasks. }
    \label{fig:overall_results}
\end{figure}
\textbf{Privileged Critic Tasks. } The first set of tasks includes the typical MuJoCo Gym locomotion tasks to test our approach in the \gls{rl} setting. As done in \cite{ni2022recurrent}, we create partial observability by hiding information, specifically the velocity, from the state space. Furthermore, we randomize the mass of each link. As described in Section~\ref{subsec:partially_observable}, it is challenging to learn a stochastic policy state while using bootstrapping. Therefore, we use a privileged critic with a vanilla \gls{ffn} for off-policy approaches. On the left of Figure~\ref{fig:overall_results}, we show the results for a \gls{td3} and a \gls{sac}-based agents. The overall results show that our approach has major computational benefits w.r.t \gls{bptt} with long histories at the price of a slight drop in asymptotic performance for SAC and TD3. We found that our approach perform better on high-dimensional observation spaces -- e.g., Ant and Humanoid -- and worse on low-dimensional ones. A more detailed discussion about this is given in Appendix \ref{app:rand_dyn_rl_tasks}. While the window approach has computational benefits compared to \gls{bptt}, it did not perform well with increased window length.  The full experimental campaign, learning curves, and a more detailed task descriptions are in Appendix~\ref{app:rand_dyn_rl_tasks}. 

\textbf{No Privileged information Tasks.} The aim of these tasks is to show that our approach works in this setting once combining the critic with Monte-Carlo rollouts. We use the same tasks as in the privileged information setting, without mass randomization. Note that our algorithm does not use a recurrent critic in contrast to \gls{bptt}, which uses a separate \gls{rnn} for the actor and critic as in \cite{ni2022recurrent}. The overall results are shown on the right of Figure \ref{fig:overall_results}. As can be seen, our method reliably achieves superior performance compared to \gls{bptt} and the window approach. While the training time of \gls{ppo}-\gls{bptt} with a truncation length of 5 is very similar to our approach, there is a notable difference when using longer truncation lengths. We found that both \gls{bptt} and the window approach perform worse with a longer history. When compared to specialized methods, such as SLAC \cite{slac}, we found that our method performs better while being much faster to train. One reason for the long training of SLAC time is that it also needs to pre-train the latent variable model before starting to learn the policy and the critic. In Appendix~\ref{app:ppo_res}, we provide the return plots and further discussion. 

\textbf{Imitation Learning Tasks.} The main results of our paper are presented in Figure~\ref{fig:imitation_learning_time}. Here, we use two novel locomotion tasks introduced by the LocoMujoco benchmark \citet{alhafez2023_loco}. The first task is an Atlas locomotion task, where a carry-weight is randomly sampled, but the weight is hidden from the policy. The second task is a Humanoid locomotion task under dynamics randomization. The goal is to imitate a certain kinematic trajectory -- either walking or running -- without observing the type of the humanoid. To generate the different humanoids, we randomly sample a scaling factor. Then the links are scaled linearly, the masses are scaled cubically, the inertias are scaled quintically, and the actuator torques are scaled cubically w.r.t. the scaling factor. For both tasks, the policies only observe the positions and the velocities of the joints. Forces are not observed. As can be seen at the bottom of Figure~\ref{fig:imitation_learning_time}, our approach can be easily extended to complex \gls{il} tasks using \gls{gail} and \gls{lsiq}. We observe that in the \gls{il} setting our approach is able to outperform the \gls{bptt} baselines even in terms of samples as shown in Figure~\ref{fig:imitation_learning_gail_steps} and Figure~\ref{fig:imitation_learning_lsiq_steps} in Appendix~\ref{app:imitaion_learning}, where we also further discuss the results. 

\textbf{Memory Task.} To show that our approach can encode longer history, we present two memory tasks in Appendix~\ref{app:mem_task}. The first task is to move a point mass to a randomly sampled goal from a randomly sampled initial state. While the goal is shown to the policy at the beginning of the trajectory, it is hidden when the point mass moves away from the initial state. Similarly, in the second task, the positions of two doors in a maze are shown to the policy when close to the initial state and are hidden once going further away. Figure~\ref{fig:point_mass} and \ref{fig:point_mass_door} demonstrate that our approach outperforms \gls{bptt} on the first task while being slightly weaker on the second task. We expect that the reason for the drop in performance is caused by the additional variance of our method in combination with the additional absorbing states in the environment. The additional variance in our policy leads to increased exploration, which increases the chances of touching the wall and reaching an absorbing state. Nonetheless, our approach has a significantly shorter training time, analogously to the results shown in Figure \ref{fig:overall_results}, for both tasks.

\begin{figure}[t]
\centering
\includegraphics[width=\textwidth]{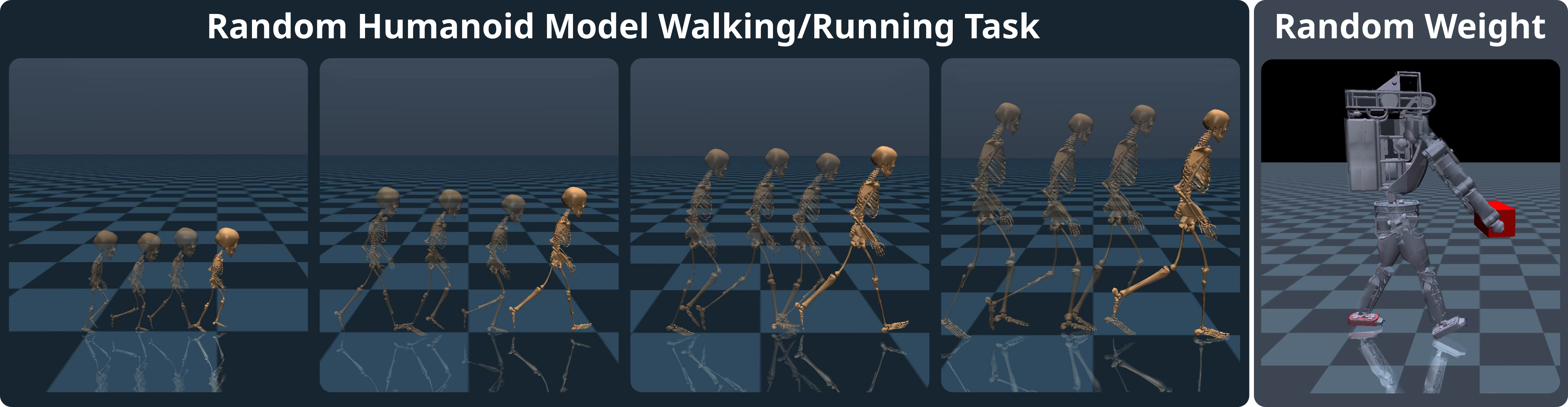}
\vspace{-20pt}
\begin{multicols}{3}
\includegraphics[scale=0.275]{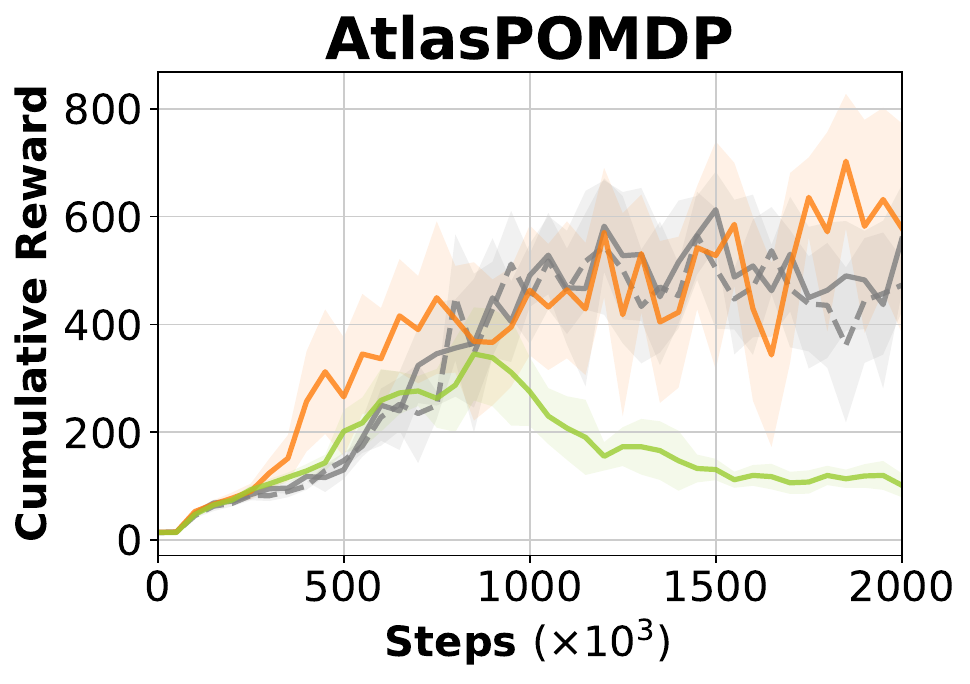}\columnbreak
\includegraphics[scale=0.275]{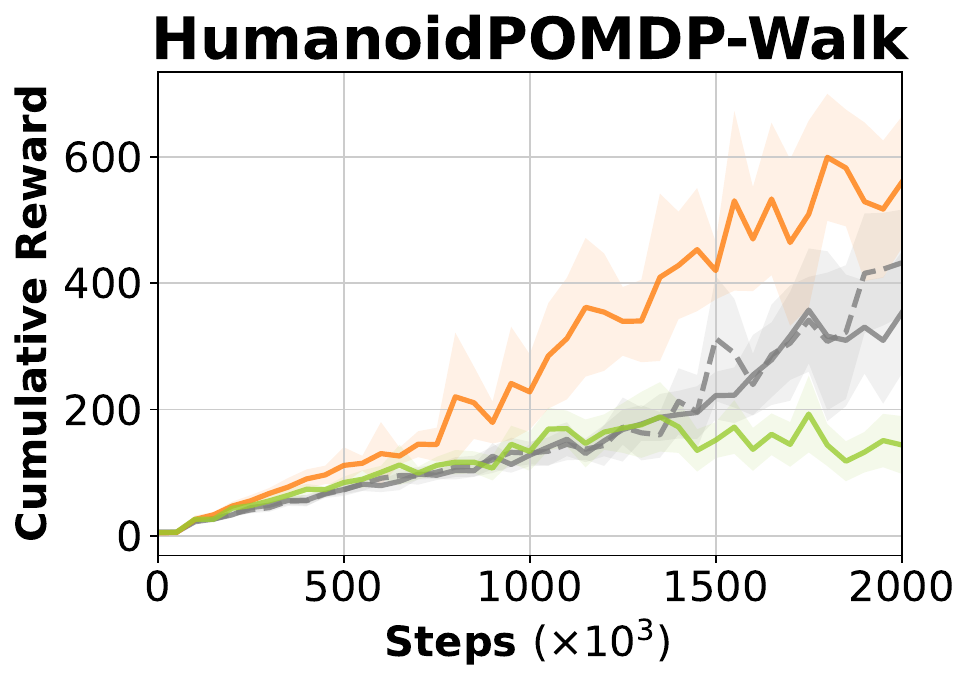}\columnbreak
\includegraphics[scale=0.275]{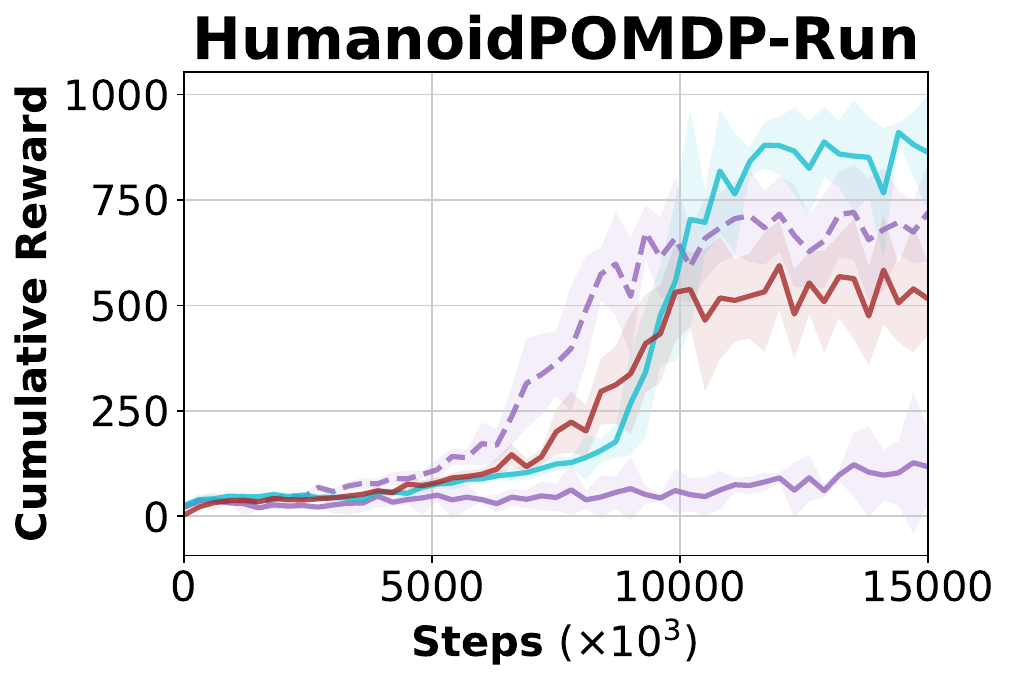}
\end{multicols}
\vspace{-15pt}
\begin{center}
    \includegraphics[scale=0.35]{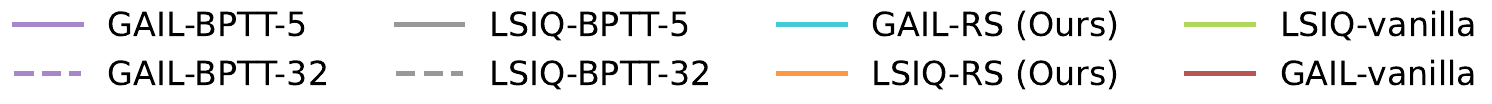}
\end{center}
\vspace{-10pt}
\centering
\caption{\textbf{Up-Left}: \gls{pomdp} Humanoid walking/running task at 4.5~km/h respectively 9~km/h. The different humanoids should resemble -- from left to right -- an adult, a teenager ($\sim$12 years), a child ($\sim$5 years), and a toddler ($\sim$1-2 years). The target speed is scaled for the smaller humanoid according to size. The type of humanoid is hidden to the policy. A single policy is able to learn all gaits. \mbox{\textbf{Up-Right}: Random} carry-weight task using the Atlas humanoid. The weight is randomly sampled between 0.1~kg and 10~kg at the beginning of the episode and is hidden to the policy. \mbox{\textbf{Bottom}:} Results comparing different versions of \gls{gail} and \gls{lsiq}.} 
\label{fig:imitation_learning_time}
\end{figure}

\section{Conclusions}
This work introduced \gls{s2pg}, an alternative approach for estimating the policy gradient of stateful policies without using \gls{bptt}, exploiting stochastic transition kernels. \gls{s2pg} is easy to implement and computationally efficient. We provide a complete foundation theory of this novel estimator, allowing its implementation in state-of-the-art deep \gls{rl} methods, and conduct a theoretical analysis on the variance of \gls{s2pg} and \gls{bptt}.
While this method still cannot replace \gls{bptt} in every setting, in the most challenging scenarios and in the \gls{il} settings, \gls{s2pg} can considerably improve the performance and learning speed.
Unfortunately, while \gls{s2pg} can replace  \gls{bptt} in the computation of the policy gradient, it is still not able to properly learn a value function and an internal policy state at the same time in the partially observable setting. While we show that our approach works well when the critic is estimated with Monte-Carlo rollouts, e.g., the \gls{ppo} algorithm, for methods that require bootstrapping we limit ourselves to the setting where the critic has privileged information. We plan to investigate possible solutions to this issue in future works. Furthermore, we will investigate how our approach scales using massively parallel environments. Finally, our method could be used in combination with complex policy structures such as Neural ODEs.

\bibliography{bibliography}
\bibliographystyle{iclr2024_conference}

\newpage
\appendix
\section{Proofs of Theorems}

Within this section, we refer to the following regularity conditions on the \gls{mdp}:

\begin{conditions}\label{cond:reg1}
 $\iota(s, z)$, $P(s'|s, a)$, $\nabla_a P(s'|s, a)$, $\pi_\vtheta(a, z' | s, z)$, $\nabla_\vtheta \pi_\theta(a, z' | s, z)$, $\mu_\vtheta(s,z)$,  $\nabla_\vtheta\mu_\vtheta(s,z)$, $r(s, a)$, $\nabla_a r(s, a)$, $\iota(s)$ are continuous in all parameters and variables $s, a, z, \vtheta$.
\end{conditions}

\begin{conditions}\label{cond:reg2}
  There exists a $b$ and $L$ such that $\sup_s \iota(s) < b$, $\sup_{s',a,s} p(s'|s, a) < b$, $\sup_{a,s} r(s, a) < b$, $\sup_{s',a,s} \lVert\nabla_a p(s'|s, a)\rVert < L$, and $\sup_{a,s} \lVert\nabla_a r(s, a)\rVert < L$.
\end{conditions}

\subsection{Stateful Policy Gradient with the Score Function}
\label{app:proof_pg}

\begin{proof}
Let
{\small
\begin{equation*}
    J(\tauh) = \sum_{t=0}^{T-1} \gamma^t r(s_t,a_t),
\end{equation*}}
be the discounted return of a trajectory $\tauh=\langle s_0, z_0, a_1, \dots s_{T-1}, z_{T-1}, a_{T-1}, s_T, z_T \rangle$.
Differently from the standard policy gradient, we define the trajectory as a path of both the environment and the stateful policy state.
The probability of a given trajectory can be written as
{\small
\begin{equation*}
    p(\tauh|\vtheta) = \iota(s_0,z_0)\prod_{t=0}^{T-1} P(s_{t+1}|s_t, a_t)\pi_\vtheta(a_t,z_{t+1}|s_t,z_t).
\end{equation*}}

The continuity of $\iota(s_0, z_0)$, $P(s_{t+1}|s_t, a_t)$, $\pi_\vtheta(a_t, z_{t+1} | s_t, z_t)$, $\nabla_\vtheta \pi_\vtheta(a_t, z_{t+1} | s_t, z_t)$ implies that $p(\tauh | \vtheta)$ and $\nabla_\vtheta p(\tauh | \vtheta)$ are continue as well. Hence, we can apply the Leibniz rule to exchange the order of integration and derivative, allowing us to follow the same steps as the classical policy gradient
{\small
\begin{equation*}
\nabla_\vtheta \mathcal{J}(\pi_\vtheta) = \nabla_\vtheta\int p(\tauh|\vtheta)J(\tauh)d\tauh = \int \nabla_\vtheta p(\tauh|\vtheta) J(\tauh)d\tauh.
\end{equation*}}

Applying the likelihood ratio trick, we obtain
{\small
\begin{equation}
    \nabla_\vtheta \mathcal{J}(\pi_\vtheta) = \int p(\tauh|\vtheta) \nabla_\vtheta \log p(\tauh|\vtheta) J(\tauh)d\tauh = \expect{\tauh}{\nabla_\vtheta \log p(\tauh|\vtheta)J(\tauh)} \, . \label{eq:spg1}
\end{equation}}

To obtain the REINFORCE-style estimator, we note that
{\small
\begin{equation*}
    \log p(\tau|\vtheta) = \log \iota(s_0,z_0) + \sum_{t=0}^{T-1} \log P(s_{t+1}|s_t, a_t) + \sum_{t=0}^{T-1} \log\pi_{\vtheta}(a_t,z_{t+1}|s_t,z_t) \, . 
\end{equation*}}

Therefore, for the vanilla policy gradient, the term $\nabla_\vtheta p(\tau|\vtheta)$ only depends on the policy and not the state transitions. We can then write the stateful policy gradient as
{\small
\begin{equation}
    \nabla_\vtheta\mathcal{J}(\pi_\vtheta) = \expect{\tau}{\sum_{t=0}^{T-1} \nabla_\vtheta\log\pi_\vtheta(a_t,z_{t+1}|s_t,z_t) J(\tau)} \, , \label{eq:spg2}
\end{equation}}
concluding the proof.
\end{proof}

\subsection{Stateful Policy Gradient in Partially Observable Environments}
\label{app:proof_pg_po}

\begin{proof}
The derivation follows exactly the one for the fully observable scenario, with the difference that we need to deal with the observations instead of states. The probability of observing a trajectory~$\tau$ in a \gls{pomdp} is
{\small
\begin{equation*}
    p(\tau|\vtheta) = \iota(s_0,z_0)O(o_0|s_0)\prod_{t=0}^{T-1} O(o_{t+1}|s_{t+1})P(s_{t+1}|s_t, a_t)\pi_\vtheta(a_t,z_{t+1}|o_t,z_t).
\end{equation*}}
Computing the term $\nabla_\vtheta \log p(\tau|\vtheta)$ and replacing it into \eqref{eq:spg1} we obtain
{\small
\begin{equation*}
    \nabla_\vtheta\mathcal{J}(\pi_\vtheta) = \expect{\tau}{\sum_{t=0}^{T-1} \nabla_\vtheta\log\pi_\vtheta(a_t,z_{t+1}|o_t,z_t) J(\tau)}\, , 
\end{equation*}}
using the same simplification as done in  \eqref{eq:spg2}, concluding the proof.
\end{proof}

\subsection{Stateful Policy Gradient Theorem}
\label{app:proof_pgt}
\begin{proof}
We can write the objective function as follows

{\small
\begin{equation*}
   \mathcal{J}_{\vtheta} = \expect{(s_0,z_0)\sim\iota}{V_{\pi_{\vtheta}}(s_0,z_0)} = \expect{(s_0,z_0)\sim\iota,\\ (a_0,z_1)\sim\pi_{\vtheta}}{Q^{\pi_{\vtheta}}(s_0,z_0,a_0,z_1)}.
\end{equation*}}
We can now compute the gradient as

{\small
\begin{align*}
   \nabla_\vtheta\mathcal{J}_{\vtheta} &= \nabla_\vtheta\expect{(s_0,z_0)\sim\iota,\\ (a_0,z_1)\sim\pi_{\vtheta}}{Q^{\pi_{\vtheta}}(s_0,z_0,a_0,z_1)}  \\    &=\nabla_\vtheta\int_\mathcal{S}\int_\mathcal{Z}\int_\mathcal{A}\int_\mathcal{Z} \iota(s_0,z_0)\pi_{\vtheta}(a_0,z_1|s_0,z_0)Q^{\pi_{\vtheta}}(s_0,z_0,a_0,z_1)da_0 dz_1 ds_0 dz_0.
\end{align*}}

The regularity conditions \ref{cond:reg1} imply that $Q^{\pi_\theta}(s, z', a, z)$ and $\nabla_\theta Q^{\pi_\theta}(s, z', a, z)$ are continuous functions of $\theta$, $s$, $a$, and $z$. Further, the compactness of $\mathcal{S}$, $\mathcal{A}$ and $\mathcal{Z}$ implies that for any $\theta$, $\lVert \nabla_\theta Q^{\pi_\theta}(s, z', a, z)\rVert$ and $\lVert \nabla_\theta \pi_\theta (a, z' | s, z)\rVert$ are bounded functions of $s$, $a$ and $z$. These conditions are required to exchange integration and derivatives, as well as the order of integration throughout the this proof. 

We can push the derivation inside the integral as $Q^{\pi_{\vtheta}}$ is a bounded function and the derivative w.r.t. $\vtheta$ exists everywhere
{\small
\begin{align*}
    \nabla_\vtheta\mathcal{J}_{\vtheta} =& \int_\mathcal{S}\int_\mathcal{Z}\int_\mathcal{A}\int_\mathcal{Z} \nabla_\vtheta\left[\iota(s_0,z_0)\pi_{\vtheta}(a_0,z_1|s_0,z_0)Q^{\pi_{\vtheta}}(s_0,z_0,a_0,a_1)\right]da_0 dz_1 ds_0 dz_0 \\
    =& \int_\mathcal{S}\int_\mathcal{Z}\int_\mathcal{A}\int_\mathcal{Z} \iota(s_0,z_0)\nabla_\vtheta\pi_{\vtheta}(a_0,z_1|s_0,z_0)Q^{\pi_{\vtheta}}(s_0,z_0,a_0,z_1) \\
    &+\iota(s_0,z_0)\pi_{\vtheta}(a_0,z_1|s_0,z_0)\nabla_\vtheta Q^{\pi_{\vtheta}}(s_0,z_0,a_0,z_1)da_0 dz_1 ds_0 dz_0.
\end{align*}}

We now focus on the gradient of the Q-function w.r.t. the policy parameters. 
Using the following relationship:
{\small
\begin{equation*}
    Q^{\pi_{\vtheta}}(s_t,z_t,a_t,z_{t+1}) = r(s_t, a_t) + \gamma \int_\mathcal{S} P(s_{t+1}|s_t,a_t) V^{\pi_{\vtheta}}(s_{t+1},z_{t+1})ds_{t+1} dz_{t+1}.
\end{equation*}}

We can write:
{\small
\begin{align*}
    \nabla_\vtheta Q^{\pi_{\vtheta}}(s_t,a_t) & = \nabla_\vtheta\left[r(s_t, a_t) + \gamma \int_\mathcal{S} P(s_{t+1}|s_t,a_t) V^{\pi_{\vtheta}}(s_{t+1},z_{t+1})ds_{t+1} dz_{t+1} \right]\\
    & = \gamma \int_\mathcal{S} P(s_{t+1}|s_t,a_t) \nabla_\vtheta V^{\pi_{\vtheta}}(s_{t+1},z_{t+1})ds_{t+1} dz_{t+1}. 
\end{align*}}
Similarly, we can compute the gradient of the value function as:
{\small
\begin{align*}
\nabla_\vtheta V^{\pi_{\vtheta}}(s_t,z_t) =& \int_\mathcal{A}\int_\mathcal{Z} \left(\vphantom{\int} \nabla_\vtheta\pi_{\vtheta}(a_tz_{t+1}|s_t,z_t)Q^{\pi_{\vtheta}}(s_t,z_t,a_t,z_{t+1})\right.\\ 
&\left.+\gamma\int_\mathcal{S}\pi_{\vtheta}(a_tz_{t+1}|s_t,z_t) P(s_{t+1}|s_t,a_t) \nabla_\vtheta V^{\pi_{\vtheta}}(s_{t+1},z_{t+1})\right)ds_{t+1}da_t dz_{t+1}.
\end{align*}}
\vspace{-0.5cm}

We now focus again on $\nabla_\vtheta\mathcal{J}_{\vtheta}$. Using the linearity of integral we write
{\small
\begin{align*}
\nabla_\vtheta\mathcal{J}_{\vtheta} = & \int_\mathcal{S}\int_\mathcal{Z}\int_\mathcal{A}\int_\mathcal{Z} \iota(s_0,z_0)\nabla_\vtheta\pi_{\vtheta}(a_0,z_1|s_0,z_0)Q^{\pi_{\vtheta}}(s_0,z_0,a_0,z_1)da_0 dz_1 ds_0 dz_0\\
&+\int_\mathcal{S}\int_\mathcal{Z}\int_\mathcal{A}\int_\mathcal{Z}\iota(s_0,z_0)\pi_{\vtheta}(a_0,z_1|s_0,z_0)\nabla_\vtheta Q^{\pi_{\vtheta}}(s_0,z_0,a_0,z_1)da_0 dz_1 ds_0 dz_0.
\end{align*}}

Now, using the expressions for $\nabla_\vtheta V^{\pi_{\vtheta}}$ and $\nabla_\vtheta Q^{\pi_{\vtheta}}$, we can expand the second term of the previous sum for one step (limited to one step to highlight the structure)
{\small
\begin{align*}
    &\int_\mathcal{S}\int_\mathcal{Z}\int_\mathcal{A}\int_\mathcal{Z}\iota(s_0,z_0)\pi_{\vtheta}(a_0,z_0|s_0,z_0)\nabla_\vtheta Q^{\pi_{\vtheta}}(s_0,z_0,a_0,z_1)da_0 dz_1 ds_0 dz_0\\
    &\quad=\gamma \int_\mathcal{S}\int_\mathcal{Z}\int_\mathcal{A}\int_\mathcal{Z} \int_\mathcal{S} \iota(s_0,z_0)\pi_{\vtheta}(a_0,z_0|s_0)P(s_1|s_0,a_0)\nabla_\vtheta V^{\pi_{\vtheta}}(s_1,z_1)ds_{1}da_0 dz_1 ds_0 dz_0 \\
    &\quad=\gamma \int_\mathcal{S}\int_\mathcal{Z}\int_\mathcal{A}\int_\mathcal{Z} \int_\mathcal{S} \iota(s_0,z_0)\pi_{\vtheta}(a_0,z_1|s_0,z_0)P(s_1|s_0,a_0)\cdot\\
    &\quad\quad\quad\cdot\left[\int_\mathcal{A}\int_\mathcal{Z} \nabla_\vtheta\pi_{\vtheta}(a_1,z_2|s_1,z_1)Q^{\pi_{\vtheta}}(s_1,z_1,a_1,z_2) \right.\\
    &\quad\quad\quad\quad\left.+\gamma\int_\mathcal{S}\pi_{\vtheta}(a_1,z_2|s_1,z_1) P(s_2|s_1,a_1) \nabla_\vtheta V^{\pi_{\vtheta}}(s_2,z_2)ds_2 da_1 dz_2\right]ds_1 da_0 dz_1 ds_0 dz_0.
\end{align*}}

By splitting the integral again and rearranging the terms, we obtain:
{\small
\begin{align*}
&\int_\mathcal{S}\int_\mathcal{Z}\int_\mathcal{A}\int_\mathcal{Z}\iota(s_0,z_0)\pi_{\vtheta}(a_0,z_1|s_0,z_0)\nabla_\vtheta Q^{\pi_{\vtheta}}(s_0,z_0,a_0,z_1)da_0 dz_1 ds_0 dz_0 = \\
&\quad\quad\gamma \int_\mathcal{S}\int_\mathcal{Z}\int_\mathcal{A}\int_\mathcal{Z} \int_\mathcal{S}\int_\mathcal{Z} \iota(s_0)\pi_{\vtheta}(a_0|s_0)P(s_{1}|s_0,a_0)da_0ds_0\cdot \\
&\quad\quad\quad\cdot\left[\int_\mathcal{A}\int_\mathcal{Z} \nabla_\vtheta\pi_{\vtheta}(a_1|s_1)Q^{\pi_{\vtheta}}(s_1,a_1)da_1\right]ds_1\\    
&\quad\quad\quad\quad\quad+\gamma^2\int_\mathcal{S}\int_\mathcal{Z}\int_\mathcal{A}\int_\mathcal{Z} \int_\mathcal{S}\int_\mathcal{Z}
\iota(s_0)\pi_{\vtheta}(a_0|s_0)P(s_{1}|s_0,a_0)\pi_{\vtheta}(a_1|s_1) P(s_{2}|s_1,a_1)\cdot\\
&\quad\quad\quad\quad\quad\cdot\nabla_\vtheta V^{\pi_{\vtheta}}(s_2,z_2)ds_2dz_2da_1ds_1da_0ds_0.
\end{align*}}
To proceed and simplify the notation, we introduce the notion of t-step transition density $\tndensop{t}$, i.e. the density function of transitioning to a given state-policy state in $t$ steps under the stochastic policy $\pi_\vtheta$.  In particular, we can write

{\small
\begin{align*}
    &\tndensity{0} = \iota(s_0,z_0),\\
    &\tndensity{1} = \int_\mathcal{A}\int_\mathcal{Z} \int_\mathcal{S} \iota(s_0,z_0)\pi_{\vtheta}(a_0,z_1|s_0,z_0)P(s_1|s_0,a_0) ds_0 dz_0 da_0, \\
    &\tndensity{t} = \int_{\mathcal{S}^t\times\mathcal{Z}^{t}\times\mathcal{A}^t} \iota(s_0,z_0)\prod_{k=0}^{t-1}\pi_{\vtheta}(a_k, z_{k+1}|s_k,z_k)
    P(s_{k+1}|s_k,a_k) ds_k dz_k da_k.
\end{align*}}

Now we can put everything together, expanding the term containing $\nabla_\vtheta V^{\pi_{\vtheta}}$ an infinite amount of times
{\small
\begin{align*}
    \nabla_\vtheta\mathcal{J}_{\vtheta} =& \int_\mathcal{S}\int_\mathcal{Z} \tndensity{0}\int_\mathcal{A}\int_\mathcal{Z}\nabla_\vtheta\pi_{\vtheta}(a_0,z_1|s_0,z_0)Q^{\pi_{\vtheta}}(s_0,z_0,a_0,z_1)da_0 ds_0\\
    & +\gamma \int_\mathcal{S}\int_\mathcal{Z} \tndensity{1}\int_\mathcal{A}\int_\mathcal{Z} \nabla_\vtheta\pi_{\vtheta}(a_1,z_2|s_1,z_1)Q^{\pi_{\vtheta}}(s_1,z_1,a_1,z_2)da_1ds_1\\
    & +\gamma^2\int_\mathcal{S}\int_\mathcal{Z} \tndensity{2} \int_\mathcal{A}\int_\mathcal{Z} \nabla_\vtheta\pi_{\vtheta}(a_2,z_3|s_2,z_2)Q^{\pi_{\vtheta}}(s_2,z_2,a_2,z_3)da_2ds_2 \\
    &+ \gamma^3 \int_\mathcal{S}\int_\mathcal{Z} \tndensity{3}\int_\mathcal{A}\int_\mathcal{Z} \nabla_\vtheta\pi_{\vtheta}(a_3,z_4|s_3,z_3)Q^{\pi_{\vtheta}}(s_3,z_3,a_3,z_4)da_3ds_3 \\
    & + \dots \\
    =& \sum_{t=0}^{\infty} \gamma^t\int_\mathcal{S}\int_\mathcal{Z} \tndensity{t}\cdot\\
     &\hspace{1.5cm}\cdot\int_\mathcal{A}\int_\mathcal{Z}\nabla_\vtheta\pi_{\vtheta}(a_t,z_{t+1}|s_t,z_t)Q^{\pi_{\vtheta}}(s_t,z_t,a_t,z_{t+1})da_t dz_{t+1} ds_t dz_t.
\end{align*}
}

Exchanging the integral and the series we obtain:
{\small
\begin{align*}
     \nabla_\vtheta\mathcal{J}_{\vtheta} =& \int_\mathcal{S}\int_\mathcal{Z} \sum_{t=0}^{\infty}\gamma^t \tndensity{t}\cdot\\
     & \hspace{1.5cm}\cdot\int_\mathcal{A}\int_\mathcal{Z}\nabla_\vtheta\pi_{\vtheta}(a_t,z_{t+1}|s_t,z_t)Q^{\pi_{\vtheta}}(s_t,z_t,a_t,z_{t+1})da_t dz_{t+1} ds_t dz_t.
\end{align*}}
Now we remember that the occupancy metric is defined as:
{\small
\begin{equation*}
    \rho^{\pi_{\vtheta}}(s,z) = \sum_{t=0}^{\infty} \gamma^t \tndensop{t}(s,z),
\end{equation*}}
By replacing this definition and relabelling $s_t$, $z_t$, $a_t$, $z_{t+1}$ into $s$, $z$, $a$ and $z'$ we obtain:
{\small
\begin{equation*}
\nabla_\vtheta\mathcal{J}_{\vtheta} = \int_\mathcal{S}\int_\mathcal{Z} \rho^{\pi_{\vtheta}}(s,z) \int_\mathcal{A}\int_\mathcal{Z} \nabla_\vtheta\pi_{\vtheta}(a,z'|s,z) Q^{\pi_{\vtheta}}(s, z, a, z')da ds.
\end{equation*}}
\end{proof}

\subsection{Stateful Performance Difference Lemma}
\label{app:proof_perfomance_diff}

This lemma is particularly useful because it allows using the advantage function $A^q$ computed for an arbitrary policy $q$ to  evaluate the performance of our parametric policy $\pi_\vtheta$. As $q$ is an arbitrary policy not depending on $\vtheta$, the left-hand side of the Performance difference lemma can be used as a surrogate loss. This lemma allows to easily derive \gls{s2pg} version of \gls{ppo} and~\gls{trpo}.

\begin{lemma}{\textbf{(Performance Difference Lemma for Stateful Policies})}
For any stateful policy $\pi$ and $q$ in an arbitrary MDP, the difference of performance of the two policies in terms of expected discounted return can be computed as
{\small
\begin{equation*}
    \mathcal{J}(\pi) - \mathcal{J}(q) = \expect{\bar{\tau}\sim\pi,P }{\sum_{t=0}^{\infty}\gamma^t A^{q}(s_t,z_t,a_t,z_{t+1})}
\end{equation*}}
\label{lemma:performance_difference}
\end{lemma}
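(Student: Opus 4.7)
The plan is to adapt the classical telescoping argument of Kakade and Langford to the stateful policy setting, exploiting the generalized value and action-value functions $V^q(s,z)$ and $Q^q(s,z,a,z')$ from equations~(\ref{eq:stateful_v}) and~(\ref{eq:stateful_q}). The crucial observation is that the Bellman-type identity $Q^q(s,z,a,z') = r(s,a) + \gamma\,\mathbb{E}_{s'\sim P(\cdot|s,a)}[V^q(s',z')]$, which follows directly from equation~(\ref{eq:stateful_q}), depends only on the environment kernel $P$ and not on the policy under which the trajectory is generated. Hence it may be substituted freely under any expectation taken along trajectories of $\pi$, which is exactly what enables the argument to close despite $A^q$ being the advantage of a different policy.

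Concretely, I would start from the right-hand side, expand $A^q(s_t,z_t,a_t,z_{t+1}) = Q^q(s_t,z_t,a_t,z_{t+1}) - V^q(s_t,z_t)$ inside the sum, and substitute the Bellman identity above. Applying the tower rule under $\mathbb{E}_{\tauh\sim\pi,P}$, which is valid because the next state $s_{t+1}$ in a trajectory drawn by rolling out $\pi$ in $P$ is distributed as $P(\cdot|s_t,a_t)$, each summand becomes
\[
\gamma^t r(s_t,a_t) + \gamma^{t+1} V^q(s_{t+1},z_{t+1}) - \gamma^t V^q(s_t,z_t).
\]
Summing over $t$ makes the $V^q$ contributions telescope pairwise, leaving only the boundary term $-V^q(s_0,z_0)$. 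Taking the outer expectation and using that $(s_0,z_0)\sim\iota$ both under $\pi$ and under $q$, the boundary term evaluates to $-\mathbb{E}_{(s_0,z_0)\sim\iota}[V^q(s_0,z_0)] = -\mathcal{J}(q)$, while the reward terms accumulate to $\mathcal{J}(\pi)$ by definition of the expected discounted return of $\pi$, yielding the claim.

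The only technical subtlety, and the step I expect to need the most care, is justifying the exchange of the infinite summation and the expectation and verifying that the telescoped tail vanishes. Under Regularity Condition~\ref{cond:reg2}, the reward is uniformly bounded, so $V^q$ and $Q^q$ are bounded by $b/(1-\gamma)$, which makes $|\gamma^t A^q|$ summable uniformly in $\tauh$. Dominated convergence (or equivalently truncating at a finite horizon $T$ and letting $T\to\infty$) then permits the swap, and the residual $\gamma^T\,\mathbb{E}[V^q(s_T,z_T)] \to 0$. Everything else reduces mechanically to the stateful Bellman equations and the tower property of conditional expectation.
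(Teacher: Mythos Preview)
Your proposal is correct and follows essentially the same telescoping argument as the paper: expand the advantage via the Bellman identity $Q^q(s,z,a,z')=r(s,a)+\gamma\,\mathbb{E}_{s'}[V^q(s',z')]$, use the tower property under the trajectory distribution of $\pi$, telescope the $V^q$ terms to leave $-\mathbb{E}_{(s_0,z_0)\sim\iota}[V^q(s_0,z_0)]=-\mathcal{J}(q)$, and identify the remaining reward sum as $\mathcal{J}(\pi)$. Your explicit treatment of the interchange of summation and expectation via dominated convergence is in fact more careful than the paper's, which performs the telescoping without commenting on the tail term.
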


\begin{proof}
Starting from the lemma:
{\small
\begin{equation*}
    \mathcal{J}(\pi) - \mathcal{J}(q) = \expect{\tau\sim\pi,P }{\sum_{t=0}^{\infty}\gamma^t A^{q}(s_t,z_t,a_t,z_{t+1})}.
\end{equation*}}
We can rewrite the advantage as:
{\small
\begin{equation*}
    A^{q}(s_t,z_t,a_t,z_{t+1}) = \expect{s_{t+1}\sim P }{r(s_t,a_t) + \gamma V^{q}(s_{t+1}, z_{t+1}) - V^{q}(s_t,z_t)}.
\end{equation*}}
We can then manipulate the right-hand side as follows:
{\small
\begin{align*}
    \expect{\tau\sim\pi,P }{\sum_{t=0}^{\infty}\gamma^t A^{q}(s_t,z_t,a_t,z_{t+1})} =&  \expect{\tau\sim\pi,P }{\sum_{t=0}^{\infty}\gamma^t\left( r(s_t,a_t) + \gamma V^{q}(s_{t+1}, z_{t+1}) - V^{q}(s_t, z_t)\right)} \\
    =& \expect{\tau\sim\pi,P }{\sum_{t=0}^{\infty}\gamma^t r(s_t,a_t)} \\
    &+  \expect{\tau\sim\pi,P }{\sum_{t=0}^{\infty}\gamma^{t+1} V^{q}(s_{t+1}, z_{t+1})- \sum_{t=0}^{\infty}\gamma^{t} V^{q}(s_t,z_t)} \\
    =& \mathcal{J}(\pi) + \expect{\tau\sim\pi,P}{\sum_{t=1}^{\infty}\gamma^{t} V^{q}(s_t,z_t)- \sum_{t=0}^{\infty}\gamma^{t} V^{q}(s_t,z_t)} \\
    = & \mathcal{J}(\pi) +  \expect{(s,z)\sim\iota}{V^{q}(s)} \\
    = & \mathcal{J}(\pi) - \mathcal{J}(q).
\end{align*}}
This concludes the proof, as the right-hand side is equivalent to the left-hand side.
\end{proof}

\subsection{Stateful Deterministic Policy Gradient Theorem} 
\label{app:proof_dpgt}
\begin{proof}

The proof proceeds in a very similar fashion as the one for the standard \gls{dpgt}. We start by computing the gradient of a value function at an arbitrary state $s$ and an arbitrary policy state $z$

{\small
\begin{align}
\nabla_\vtheta V^{\mu_\vtheta}(s,z) =& \nabla_\vtheta Q^{\mu_\vtheta}(s,z, \mu^a_\vtheta(s,z), \mu^z_\vtheta(s,z)) \nonumber\\
=& \nabla_\vtheta \left(r(s,\mu^a_\vtheta(s,z)) + \int_\mathcal{S} \gamma P(s'|s, \mu^a_\vtheta(s,z)V^{\mu_\vtheta}(s',\mu^z_\vtheta(s,z)) ds' \right) \nonumber\\
=& \nabla_\vtheta \mu^a_\vtheta(s,z) \nabla_a r(s,a)\vert_{a=\mu^a_\vtheta(s,z)} + \nabla_\vtheta\int_\mathcal{S}\gamma P(s'|s, \mu^a_\vtheta(s,z)V^{\mu_\vtheta}(s',\mu^z_\vtheta(s,z)) ds' \nonumber \\
=& \nabla_\vtheta \mu^a_\vtheta(s,z) \nabla_a r(s,a)\vert_{a=\mu^a_\vtheta(s,z)} + \int_\mathcal{S} \gamma \left(\vphantom{\Big(} P(s'|s, \mu^a_\vtheta(s,z))\nabla_\vtheta V^{\mu_\vtheta}(s',\mu^z_\vtheta(s,z))\right. \nonumber\\
&+ \left.\nabla_\vtheta \mu^a_\vtheta(s,z) \nabla_a P(s'|s,a)\vert_{a=\mu^a_\vtheta(s,z)}V(s,\mu_\vtheta^z(s, z)) \right) ds' \nonumber \\
=& \nabla_\vtheta \mu^a_\vtheta(s,z)\nabla_a\left.\left(r(s,a) + \int_{\mathcal{S}}\gamma P(s'|s,a)V^{\mu_\vtheta}(s', \mu_\vtheta^z(s,z)) ds' \right)\right\vert_{a=\mu_\vtheta^a(s,z)} \nonumber\\
& + \nabla_\vtheta \mu^z_\vtheta(s,z)\left.\left(\int_{\mathcal{S}}\gamma P(s'|s,a)\vert_{a=\mu_\vtheta^a(s,z)}\nabla_{z'}V^{\mu_\vtheta}(s', z')ds'\right)\right\vert_{z'=\mu_\vtheta^z(s,z)} \nonumber \\
& + \int_{\mathcal{S}}\gamma P(s'|s, \mu_\vtheta^a(s,z))\left.\nabla_\vtheta V^{\mu_\vtheta}(s',z')\right\vert_{z'=\mu_\vtheta^z(s,z)}ds'
\label{eq:sdpg_first}
\end{align}}

Now we can observe that:
{\small
\begin{align}
\nabla_{z'}Q^{\mu_\vtheta}(s,z,a,z') &= \nabla_{z'}\left( r(s,a) + \int_{\mathcal{S}}\gamma P(s'|s,a)V^{\mu_\vtheta}(s',z')ds' \right) \nonumber\\
& = \int_{\mathcal{S}}\gamma P(s'|s,a)\nabla_{z'}V^{\mu_\vtheta}(s',z')ds'
\label{eq:sdpg_z}
\end{align}}

Replacing the left hand side of \eqref{eq:sdpg_z} into \eqref{eq:sdpg_first}, we can write:
{\small
\begin{align}
\nabla_\vtheta V^{\mu_\vtheta}(s,z) =& \nabla_\vtheta \mu^a_\vtheta(s,z)\nabla_a\left.Q^{\mu_\vtheta}(s,z,a,\mu_\vtheta^z(s,z))\right\vert_{a=\mu_\vtheta^a(s,z)} \nonumber\\
& + \nabla_\vtheta \mu^z_\vtheta(s,z)\left.\nabla_{z'}Q^{\mu_\vtheta}(s,z,\mu_\vtheta^a(s,z),z')\right\vert_{z'=\mu_\vtheta^z(s,z)} \nonumber \\
& + \int_{\mathcal{S}}\gamma P(s'|s,\mu^a_\vtheta(s,z))\left.\nabla_\vtheta V^{\mu_\vtheta}(s',z')\right\vert_{z'=\mu_\vtheta^z(s,z)}ds'
\label{eq:sdpg_second}
\end{align}}

To simplify the notation we write:
{\small
\begin{align*}
    G^{\mu_\vtheta}(s,z) =&  \nabla_\vtheta \mu^a_\vtheta(s,z)\nabla_a\left.Q^{\mu_\vtheta}(s,z,a,\mu_\vtheta^z(s,z))\right\vert_{a=\mu_\vtheta^a(s,z)} \\
    & + \nabla_\vtheta \mu^z_\vtheta(s,z)\left.\nabla_{z'}Q^{\mu_\vtheta}(s,z,\mu_\vtheta^a(s,z),z')\right\vert_{z'=\mu_\vtheta^z(s,z)}
\end{align*}}

Therefore, \eqref{eq:sdpg_second} can be written compactly as:
{\small
\begin{equation}
\nabla_\vtheta V^{\mu_\vtheta}(s,z) = G^{\mu_\vtheta}(s,z) + \int_{\mathcal{S}}\gamma P(s'|s,\mu^a_\vtheta(s,z))\left.\nabla_\vtheta V^{\mu_\vtheta}(s',z')\right\vert_{z'=\mu_\vtheta^z(s,z)}ds'
\label{eq:sdpg_third}
\end{equation}}

Using \eqref{eq:sdpg_third}, we can write the gradient of the objective function as:
{\small
\begin{align}
\nabla_\vtheta\mathcal{J}(\mu_\vtheta) =&\nabla_\vtheta \int_\mathcal{S}\int_\mathcal{Z} \iota(s,z)V^{\mu_\vtheta}(s,z)ds dz \nonumber \\
=&\int_\mathcal{S}\int_\mathcal{Z} \iota(s,z) G^{\mu_\vtheta}(s,z) ds dz \nonumber\\
& +\int_\mathcal{S}\int_\mathcal{Z} \iota(s,z)\int_{\mathcal{S}}\gamma P(s'|s,\mu^a_\vtheta(s,z))\left.\nabla_\vtheta V^{\mu_\vtheta}(s',z')\right\vert_{z'=\mu_\vtheta^z(s,z)}ds' ds dz.
\label{eq:sdpg_fourth}
\end{align}}

Before continuing the proof, we need to notice that the variables $s'$ and $z'$ are conditionally independent if the previous states $s$ and $z$ are given. Furthermore, it is important to notice that, even if the policy state $z$ is a deterministic function of the previous state and previous policy state, the variable $z$ is still a random variable if we consider its value after more than two steps of the environment, as its value depends on a set of random variables, namely the states encountered during the path.
Using these two assumptions, we extract the joint occupancy measure of $s$ and $z$.
Unfortunately, to proceed with the proof we need to abuse the notation (as commonly done in engineering) and use the Dirac's delta distribution $\delta(x)$.

Due to conditional independence, we can write the joint transition probability under the deterministic policy $\mu_\vtheta$ as
{\small
\begin{equation*}
    P^{\mu_\vtheta}(s', z'|s,z) = P^{\mu_\vtheta}(s'|s,z)\delta^{\mu_\vtheta}(z'|s,z)=P(s'|s,\mu^a_\vtheta(s,z))\delta(z'-\mu^z_\vtheta(s,z)),
\end{equation*}}
with $P^{\mu_\vtheta}(s'|s,z)=P(s'|s,\mu^a_\vtheta(s,z))$ and $\delta^{\mu_\vtheta}(z'|s,z)=\delta(z'-\mu^z_\vtheta(s,z))$.

Now we can write:
{\small
\begin{equation*}
    P^{\mu_\vtheta}(s'|s,z) = \int_\mathcal{Z}P^{\mu_\vtheta}(s', z'|s,z) dz' = \int_\mathcal{Z}P^{\mu_\vtheta}(s'|s,z)\delta^{\mu_\vtheta}(z'|s,z) dz'.
\end{equation*}}

We introduce, as done for the \gls{pgt}, the t-steps transition density $\dettndensop{t}$ under the deterministic policy $\mu_\vtheta$
{\small
\begin{align*}
    \dettndensity{0} &= \iota(s_0,z_0), \\
    \dettndensity{1} &= \int_\mathcal{S}\int_\mathcal{Z}\iota(s_0,z_0)P^{\mu_\vtheta}(s_1|s_0, z_0)\delta^{\mu_\vtheta}(z_1|s_0,z_0)ds_0dz_0, \\
    \dettndensity{t} &= \int_{\mathcal{S}^t\times\mathcal{Z}^t}\iota(s_0,z_0)\prod_{k=0}^{t-1}P^{\mu_\vtheta}(s_{k+1}|s_k, z_k)\delta^{\mu_\vtheta}(z_{k+1}|s_k,z_k)ds_kdz_k. \\
\end{align*}}


Using the previously introduced notation, we can expand \eqref{eq:sdpg_fourth}, by recursion, infinitely many times:
{\small
\begin{align}
\int_\mathcal{S}\int_\mathcal{Z}& \iota(s_0,z_0) \left(G^{\mu_\vtheta}(s_0,z_0)  + \iota(s_0,z_0)\int_{\mathcal{S}}\int_{\mathcal{Z}}\gamma P^{\mu_\vtheta}(s_1, z_1|s_0,z_0)\nabla_\vtheta V^{\mu_\vtheta}(s_1,z_1)ds_1dz_1\right) ds_0 dz_0 \nonumber\\
= & \int_\mathcal{S}\int_\mathcal{Z}\dettndensity{0}G^{\mu_\vtheta}(s_0,z_0)ds_0 dz_0 +\int_\mathcal{S}\int_\mathcal{Z}\int_\mathcal{S}
\int_\mathcal{Z}\gamma\iota(s_0,z_0)P^{\mu_\vtheta}(s_1, z_1|s_0,z_0) \cdot \nonumber \\
&\cdot\left(G^{\mu_\vtheta}(s_1,z_1) +\int_\mathcal{S}
\int_\mathcal{Z}\gamma P^{\mu_\vtheta}(s_2, z_2 | s_1, z_1)\nabla_\vtheta V^{\mu_\vtheta}(s_2,z_2)ds_2dz_2 \right) ds_1 dz_1\nonumber\\ 
= & \int_\mathcal{S}\int_\mathcal{Z}\dettndensity{0}G^{\mu_\vtheta}(s_0,z_0)ds_0 dz_0 + \int_\mathcal{S}\int_\mathcal{Z}\gamma\dettndensity{1}G^{\mu_\vtheta}(s_1,z_1)ds_1 dz_1 \nonumber \\
&+\int_{\mathcal{S}^3\times\mathcal{Z}^3}\gamma^2\iota(s_0,z_0)P^{\mu_\vtheta}(s_1, z_1|s_0,z_0)P^{\mu_\vtheta}(s_2, z_2|s_1,z_1) \nabla_\vtheta V^{\mu_\vtheta}(s_2,z_2)ds_2dz_2 ds_1 dz_1 ds_0 dz_0 \nonumber\\
= & \sum_{t=0}^{\infty} \int_\mathcal{S}\int_\mathcal{Z} \gamma^t\dettndensity{t}G^{\mu_\vtheta}(s_t,z_t) ds_t dz_t 
\label{eq:sdpg_fifth}
\end{align}
}

Now we notice that the occupancy measure is defined as
{\small
\begin{equation*}
\rho^{\mu_\vtheta}(s,z) = \sum_{t=0}^{\infty}\gamma^t\dettndensity{t}
\end{equation*}}

Therefore, by exchanging the order of series and integration we obtain:
{\small
\begin{align*}
\nabla_\vtheta\mathcal{J}(\mu_\vtheta) = & \int_\mathcal{S}\int_\mathcal{Z}\sum_{t=0}^{\infty}\gamma^t\dettndensop{t}(s,z)G^{\mu_\vtheta}(s,z)ds dz \\
= & \int_\mathcal{S}\int_\mathcal{Z} \rho^{\mu_\vtheta}(s,z) \left( \nabla_\vtheta \mu^a_\vtheta(s,z)\nabla_a\left.Q^{\mu_\vtheta}(s,z,a,\mu_\vtheta^z(s,z))\right\vert_{a=\mu_\vtheta^a(s,z)} \right. \\
& \hspace{2.5cm}\left. + \nabla_\vtheta \mu^z_\vtheta(s,z)\left.\nabla_{z'}Q^{\mu_\vtheta}(s,z,\mu_\vtheta^a(s,z),z')\right\vert_{z'=\mu_\vtheta^z(s,z)} \right)
\end{align*}}
which concludes the proof.
\end{proof}

\subsection{Variance Analysis of Policy Gradient Estimates}
\label{app:variance_analysis}
In the following section, we define the variance of a random vector $A = (A_1, \dots, A_l)^\top$ as in~\cite{papini2022smoothing,zhao2011analysis}
{\small
\begin{align}
    \var[A] =& \tr\left( \mathbb{E}\left[ (A- \expect{}{A} )(A- \expect{}{A} )^\top\right] \right) \label{eq:var_trace}\\
    =& \sum_{m=1}^l \expect{}{(A_m - \expect{}{A_m})^2} \label{eq:var_trace2} 
\end{align}}

Using this definition, we can compute the variance of the policy gradient estimators for stateful policies using \gls{bptt} and \gls{s2pg}.

\subsubsection{Backpropagation-Through-Time}\label{proof:bptt_var}
\begin{proof}
We consider a policy of the following form
{\small
\begin{equation}
    \nu(a_t | h_t, \Theta) = \mathcal{N}(a_t|\mu_\theta(h_t), \Sigma) = \frac{1}{\sqrt{2\pi^d|\Sigma|}}\exp\left(-\frac{1}{2}(a_t - \mu_\theta(h_t))^\top \Sigma^{-1} (a_t - \mu_\theta(h_t))\right)\, , 
\end{equation}}
where $\Theta = \{\theta, \Sigma\}$. For a fixed covariance $\Sigma$, a REINFORCE-style gradient estimator for this policy depends on the following  gradient
{\small
\begin{align}
    f(\tau) =& \sum_{t=0}^{T-1} \nabla_{\theta} \log  \pi(a_t | h_t, \Theta) = \sum_{t=0}^{T-1} \nabla_{\theta} \mu_\theta(h_t)\nabla_{m_t} \log  \mathcal{N}(a_t|m_t, \Sigma)|_{m_t=\mu_\theta (h_t)} \nonumber\\
    =& \sum_{t=0}^{T-1}  \nabla_{\theta} \mu_\theta(h_t)  \left(\Sigma^{-1} (a_t - m_t)\right)|_{m_t=\mu_\theta (h_t)}  \nonumber\, . 
\end{align}}\noindent

Given the total discounted return of a trajectory
{\small
\begin{equation*}
    G(\tau) = \sum_{t=0}^{T-1} \gamma^{t-1} r(s_t, a_t, s_{t+1})  \, , 
\end{equation*}}\noindent
the variance of the empirical gradient approximator is given by 
{\small
\begin{align}
    \var\left[ \nabla_\theta \hat{J}(\Theta) \right] = \frac{1}{N} \var \left[ G(\tau) f(\tau) \right] \, ,
\end{align}}\noindent
where $N$ is the number of trajectories used for the empirical gradient estimator. Therefore, we can just focus on $\var \left[ G(\tau) f(\tau) \right]$. 
Using \eqref{eq:var_trace2}, we can define an upper bound on $\var \left[ G(\tau) f(\tau) \right]$
{\small
\begin{align*}
    \var \left[ G(\tau) f(\tau) \right] \leq& \sum_{i=1}^l \expect{}{(G f_i)^2}\\
    =& \expect{}{G^2f^\top f}\\
    =& \int_\tau p(\tau) \left(\sum_{t=0}^{T-1} \gamma^{t-1} r(s_t, a_t, s_{t+1})\right)^2 \\
    &\cdot \left(\sum_{t=0}^{T-1}  \nabla_{\theta} \mu_\theta(h_t)  \left( \Sigma^{-1} (a_t - m_t)\right)|_{m_t=\mu_\theta (h_t)} \right)^\top \\
    &\cdot \left(\sum_{t=0}^{T-1}  \nabla_{\theta} \mu_\theta(h_t)  \left( \Sigma^{-1} (a_t - m_t)\right)|_{m_t=\mu_\theta (h_t)}  \right) \, . 
\end{align*}}\noindent
Let $\xi_{t} =  \Sigma^{-1} (a_t - m_t)$ where $a_t\sim \mathcal{N}(m_t, \Sigma)$, then $\xi_{t}(m_t)$ are random variables drawn from a Gaussian distribution $\mathcal{N}(0, \Sigma^{-1})$. Hence, we define $p(\xi_{t}) = \mathcal{N}(0, \Sigma^{-1})$ and treat $\xi_{t}$ as an independent random variable sampled from $p(\xi_{t})$. 
For compactness of notation, we define $\xi=\left[\xi_0, \dots, \xi_T\right]^\top$, where $p(\xi)=\prod_{t=0}^{T}p(\xi_t)$. In a similar way we define \mbox{$p(\tau|\xi)=\iota(s_0)\prod_{t=0}^{T}P(s_{t+1}|s_{t},\xi_{t})$.} Using this notation, we can write
{\small
\begin{align}
        \var \left[ G(\tau) f(\tau) \right] \leq& 
    \int_{\tau, \xi} p(\xi)\, p(\tau|\xi)\left(\sum_{t=0}^{T-1} \gamma^{t-1} r(s_t, a_t, s_{t+1})\right)^2  \left(\sum_{t=0}^{T-1}  \nabla_{\theta} \mu_\theta(h_t) \,  \xi_{t} \right)^\top  \left(\sum_{t=0}^{T-1}  \nabla_{\theta} \mu_\theta(h_t)  \, \xi_{t} \right)\nonumber\\
    \leq& \frac{R^2 (1-\gamma^T)^2}{(1-\gamma)^2}
     \expect{\tau, \xi}{\sum_{t=0}^{T-1}\sum_{t'=1}^T \xi_{t}^\top \nabla_{\theta}\mu_\theta(h_t)^\top \, \nabla_{\theta} \mu_\theta(h_{t'}) \, \xi_{t} } \nonumber\\
     =& \frac{R^2 (1-\gamma^T)^2}{(1-\gamma)^2} \, \left( \expect{\tau, \xi}{\sum_{t=0}^{T-1} \xi_{t}^\top  \nabla_{\theta} \mu_\theta(h_t)^\top \,   \nabla_{\theta} \mu_\theta(h_{t})\, \xi_{t}  } \right. \label{eq:var_bptt_inter_1}\\
      & \phantom{ \frac{R^2 (1-\gamma^T)^2}{(1-\gamma)^2}}\quad+ \left. \expect{\tau, \xi}{\sum_{t, t'=0, t\neq t'}^{T-1}\xi_{t}^\top  \nabla_{\theta} \mu_\theta(h_t)^\top \,  \,  \nabla_{\theta} \mu_\theta(h_{t'}) \, \xi_{t}  }\right) \, . \nonumber
\end{align}}\noindent
Now, we take a look at the last term. Without loss of generality, we will assume $t'>t$.
As $\xi_{t}$ is a random variable drawn from a Gaussian distribution with zero mean, and as $h_{t'}$ and $h_t$ are independent from $\xi_{t'}$, we can write
{\small
\begin{align*}
    &\expect{\tau, \xi}{\sum_{t, t'=0, t\neq t'}^{T-1}\xi_{t}^\top\nabla_{\theta} \mu_\theta(h_t)^\top \,  \,  \nabla_{\theta} \mu_\theta(h_{t'}) \xi_{t'} }\\
    &\qquad = \sum_{t, t'=0, t\neq t'}^{T-1}\expect{\tau, \xi}{ \xi_{t}^\top \nabla_{\theta} \mu_\theta(h_t)^\top \,  \nabla_{\theta} \mu_\theta(h_{t'})  \xi_{t'} } \\
    &\qquad = \sum_{t, t'=0, t\neq t'}^{T-1}\expect{\tau, \xi}{ \xi_{t}^\top \nabla_{\theta} \mu_\theta(h_t)^\top\nabla_{\theta} \mu_\theta(h_{t'}) } \, \expect{\xi_{t'}}{\xi_{t'}} \\
    &\qquad  = \quad 0 \, .
\end{align*}}\noindent
where in the last line we used the fact that $\expect{\xi_{t'}}{\xi_{t'}}=\vzero$. The same result holds for $t>t'$, following a similar derivation.

Continuing from \eqref{eq:var_bptt_inter_1}, let $\Xi_t = \nabla_{\theta} \mu_\theta(h_t)^\top \,   \nabla_{\theta} \mu_\theta(h_t)\, $. Noticing that $\Xi_t$ is a square positive semi-definite matrix, we can write 
{\small
\begin{align*}
    & \frac{R^2 (1-\gamma^T)^2}{(1-\gamma)^2} \, \expect{\tau, \xi}{\sum_{t=0}^{T-1} \xi_{t}^\top\nabla_{\theta} \mu_\theta(h_t)^\top \,   \nabla_{\theta} \mu_\theta(h_{t}) \xi_{t} } \\
    &\qquad= \frac{R^2 (1-\gamma^T)^2}{(1-\gamma)^2} \, \sum_{t=0}^{T-1}  \expect{\tau,\xi}{\expect{\xi_{t}}{\xi_{t}^\top\nabla_{\theta} \mu_\theta(h_t)^\top \,   \nabla_{\theta} \mu_\theta(h_{t}) \xi_{t} }} \\
    &\qquad= \frac{R^2 (1-\gamma^T)^2}{(1-\gamma)^2} \, \sum_{t=0}^{T-1}  \expect{\tau,\xi}{\tr\left(\nabla_{\theta} 
 \mu_\theta(h_t)^\top \,  \nabla_{\theta} 
 \mu_\theta(h_{t}) \,  \Sigma^{-1}\right) }    \\
&\qquad\leq \frac{R^2 (1-\gamma^T)^2}{(1-\gamma)^2} \, \sum_{t=0}^{T-1}  \expect{\tau,\xi}{ \fnorm{\nabla_{\theta} \mu_\theta(h_t)^\top} \cdot  \fnorm{\nabla_{\theta} \mu_\theta(h_t)} \cdot \fnorm{\Sigma^{-1}} }\\
&\qquad\leq \frac{R^2 (1-\gamma^T)^2}{(1-\gamma)^2} \, \sum_{t=0}^{T-1}  \expect{\tau,\xi}{ \fnorm{\nabla_{\theta} \mu_\theta(h_t)}^2 \cdot \fnorm{\Sigma^{-1}} }\,, 
\end{align*}}\noindent
where $\fnorms{.}$ is the Frobenius norm. When $\mu(h_t)$ is implemented as a recursive function  $\mu(h_t) = f_\theta (s_t, z_{t+1})$ where $z_{t+1} = \eta_\theta (s_t, z_t)$ the gradient is given by 
{\small
\begin{align}
    \nabla_\theta \mu_\theta(h_t) = \frac{\partial}{\partial \theta} f_\theta(s_t, z_t) + \left( \sum_{i=0}^{t-1} \frac{\partial}{\partial \theta} \eta_\theta(s_{i}, z_{i}) \prod_{j=i+1}^t\frac{\partial}{\partial z_j} \eta_\theta(s_j, z_j) \right) \frac{\partial}{\partial z_t} f_\theta(s_t, z_t) \, . \label{eq:grad_bppt}
\end{align}}

Using this gradient and using the Assumptions \ref{assump:grad1} and \ref{assump:grad2}, we bound the norm of the gradient of $\mu_\theta(h_t)$ as follows
{\small
\begin{align*}
\fnorm{ \nabla_\theta \mu_\theta(h_t)} \leq & \fnorms{ \frac{\partial}{\partial \theta} f_\theta(s_t, z_t)} + \left( \sum_{i=0}^{t-1}\fnorms{\frac{\partial}{\partial \theta} \eta_\theta(s_{i}, z_{i}) } \prod_{j=i+1}^t \fnorms{\frac{\partial}{\partial z_j} \eta_\theta(s_j, z_j)} \right) \fnorms{\frac{\partial}{\partial z_t} f_\theta(s_t, z_t)} \\
\leq &\,  F + \left(\sum_{i=0}^{t-1} H \prod_{j=i+1}^t Z \right) K\\
= & \,  F + H \left(\sum_{i=0}^{t-1} Z^{t-i-1} \right) K\,.  
\end{align*}}

Hence, the final bound is given by 

{\small
\begin{align*}
     \var\left[ \nabla_\theta \hat{J}_{\text{BPTT}}(\Theta) \right]  &\leq \frac{R^2 \fnorm{\Sigma^{-1}} (1-\gamma^T)^2}{N (1-\gamma)^2} \, \sum_{t=0}^{T-1} \left( F + H \left(\sum_{i=0}^{t-1} Z^{t-i-1} \right) K \right)^2  \\
      &= \frac{R^2 \fnorm{\Sigma^{-1}} (1-\gamma^T)^2}{N (1-\gamma)^2} \, \bigl( TF^2 +2FHK\tilde{Z} + H^2K^2\bar{Z}^2 \bigr) \, , 
\end{align*}\noindent
}
where we define $\tilde{Z} = \sum_{t=0}^{T-1} \sum_{i=0}^{t-1} Z^{t-i-1}$ and $\bar{Z} = \sum_{t=0}^{T-1} \left(\sum_{i=0}^{t-1} Z^{t-i-1}\right)^2$ for brevity, concluding the proof.
\end{proof}

We can further analyze the properties of the \gls{bptt} estimator with the following lemma:

\begin{lemma}
\label{lemma:z_constants}
For $Z<1$ the two constants $\tilde{Z}$ and $\bar{Z}$ grow linearly with $T$, while for $Z>1$ the two constants grow exponentially with $T$. 
\end{lemma}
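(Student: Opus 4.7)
The plan is to reduce both sums to geometric series via a straightforward index change and then bound them in the two regimes. First I would substitute $k = t-i-1$ in the inner sum to rewrite
\[
S_t := \sum_{i=0}^{t-1} Z^{t-i-1} = \sum_{k=0}^{t-1} Z^{k},
\]
so that $\tilde Z = \sum_{t=0}^{T-1} S_t$ and $\bar Z = \sum_{t=0}^{T-1} S_t^{2}$. This puts everything in terms of a single geometric partial sum whose closed form is $S_t = (1-Z^t)/(1-Z)$ for $Z\neq 1$.

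For the case $Z<1$, I would use $S_t \le 1/(1-Z)$ uniformly in $t$, which immediately gives $\tilde Z \le T/(1-Z)$ and $\bar Z \le T/(1-Z)^2$, i.e. both grow at most linearly in $T$. Since $S_t \ge 1$ for $t\ge 1$ (the $k=0$ term contributes $1$), both sums are also at least $T-1$, giving matching linear lower bounds and confirming $\tilde Z, \bar Z = \Theta(T)$.

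For $Z>1$, I would use the closed form $S_t = (Z^t - 1)/(Z-1)$ and the elementary bounds $Z^{t-1}/(Z-1) \le S_t \le Z^t/(Z-1)$. Summing another geometric series in $t$,
\[
\tilde Z = \frac{1}{Z-1}\Bigl(\tfrac{Z^T - 1}{Z-1} - T\Bigr), \qquad \bar Z \le \frac{1}{(Z-1)^2}\sum_{t=0}^{T-1} Z^{2t} = \frac{Z^{2T}-1}{(Z-1)^2(Z^2 - 1)},
\]
so $\tilde Z = \Theta(Z^T)$ and $\bar Z = \Theta(Z^{2T})$, both exponential in $T$. There is no real obstacle here; the only subtle point is keeping track of constants that depend on $Z$ but not on $T$ when asserting the asymptotic rate. (The borderline case $Z=1$, not covered by the lemma as stated, yields $S_t = t$, so $\tilde Z = \Theta(T^2)$ and $\bar Z = \Theta(T^3)$; I would mention this only in passing since the lemma statement excludes it.)
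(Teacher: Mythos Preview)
Your proposal is correct and follows essentially the same route as the paper: both start from the substitution $k=t-i-1$ to rewrite the inner sum as the geometric partial sum $S_t=\sum_{k=0}^{t-1}Z^k=(1-Z^t)/(1-Z)$, and then analyze $\tilde Z=\sum_t S_t$ and $\bar Z=\sum_t S_t^2$ in the two regimes. The only difference is one of economy: the paper carries out the full algebra to obtain exact closed forms, e.g.\ $\bar Z=\tfrac{T}{(1-Z)^2}+\tfrac{(Z^T-1)(Z^T-2Z-1)}{(Z-1)^3(1+Z)}$, and then reads off the $\mathcal{O}(T)$ and $\mathcal{O}(Z^{2T})$ pieces, whereas you bypass that computation for $\bar Z$ by sandwiching $S_t$ between simple bounds ($1\le S_t\le 1/(1-Z)$ for $Z<1$, and $Z^{t-1}/(Z-1)\le S_t\le Z^t/(Z-1)$ for $Z>1$) and summing. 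Your argument is slightly more direct for what the lemma actually claims (only growth rates), while the paper's closed forms give more information than needed; either is perfectly adequate here.
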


\begin{proof}
The series expressing $\bar{Z}$ can be written as 
{\small
\begin{align}
    \hat{Z}=\sum_{t=0}^{T-1} \left(\sum_{i=0}^{t-1} Z^{t-i-1}\right)^2 &= \sum_{t=0}^{T-1} \left(\sum_{k=0}^{t-1} Z^{k}\right)^2 =  \sum_{t=0}^{T-1} \frac{(1-Z^t)^2}{(1-Z)^2} \nonumber\\
    & = \frac{1}{(1 - Z)^2} \left(T + \sum_{t=0}^{T-1} Z^{2t} -2 \sum_{t=0}^{T-1} Z^{t}\right) \nonumber \\
    & = \frac{T}{(1 - Z)^2} + \frac{1-Z^{2T}}{1-Z^2} -2 \frac{1-Z^{T}}{1-Z} \nonumber \\
    & = \frac{T}{(1 - Z)^2} + \frac{1}{(1-Z)^2}\left(\frac{1-Z^{2T}-2(1-Z^{T})(1+Z)}{(1-Z)(1+Z)}\right) \nonumber \\
    & = \frac{T}{(1 - Z)^2} + \frac{(1-Z^T)(1+Z^T)-2(1-Z^{T})(1+Z)}{(1-Z)^3(1+Z)} \nonumber \\
    & = \frac{T}{(1 - Z)^2} + \frac{(1-Z^T)(Z^{T}-2Z-1)}{(1-Z)^3(1+Z)} \nonumber \\
    & = \frac{T}{(1 - Z)^2} + \frac{(Z^T-1)(Z^{T}-2Z-1)}{(Z-1)^3(1+Z)} \label{eq:sum_z_bar}
\end{align}}
\clearpage
From~\eqref{eq:sum_z_bar} we see that $\bar{Z}$ is composed of two terms, where the first is $\mathcal{O}(T)$ while the second one is $\mathcal{O}(Z^{2T})$.
The series expressing $\tilde{Z}$ can be written as 
{\small
\begin{align}
    \tilde{Z}=\sum_{t=0}^{T-1} \sum_{i=0}^{t-1} Z^{t-i-1} &= \sum_{t=0}^{T-1} \sum_{k=0}^{t-1} Z^{k} =  \sum_{t=0}^{T-1} \frac{1-Z^t}{1-Z} \nonumber\\
    & = \frac{1}{1 - Z} \left(T -\sum_{t=0}^{T-1} Z^t\right) = \frac{T}{1 - Z} + \frac{Z^T - 1}{(1-Z)^2} \label{eq:sum_z_tilde}
\end{align}}
From~\eqref{eq:sum_z_tilde} we see that $\tilde{Z}$ is composed of two terms, where the first is $\mathcal{O}(T)$ while the second one is $\mathcal{O}(Z^T)$. These observations conclude the proof.
\end{proof}

\subsubsection{Stochastic Stateful Policies}\label{proof:s2pg_var}
 \begin{proof}
 For stochastic stateful policies, we consider similar policies of the form 
{\small
\begin{align}
    \pi(a_t, z_{t+1} | s_t, z_{t}, \tilde{\Theta}) &= \mathcal{N}(\vartheta_{{\theta}}(s_t, z_{t+1}), \Gamma)\label{eq:policy_s2pg}\\
    &= \frac{1}{\sqrt{2\pi^d|\Gamma|}}\exp\left(-\frac{1}{2}(a_t - \vartheta_{{\theta}}(s_t, z_{t}))^\top \Gamma^{-1} (a_t - \vartheta_{{\theta}}(s_t , z_{t}))\right)\, , \nonumber
\end{align}}

where $\tilde{\Theta} = \{ \theta, \Gamma \}$, where $\vartheta_\theta$ is the mean vector containing the means corresponding to an action and a hidden state such that $\vartheta_\theta (s_t, z_t)= \left[ \mu_{\theta}^a (s_t, z_t)^\top, \, \mu_{\theta}^z (s_t, z_t)^\top \right]^\top$, and $\Gamma$ is a covariance matrix containing the covariances corresponding to an action and a hidden state such that \mbox{$\Gamma = \begin{bmatrix}
    \Sigma & 0 \\
    0 & \Upsilon
\end{bmatrix}$}.

Then, we can write the gradient with respect to a trajectory as follows 
{\small
\begin{align*}
    \tilde{f}(\tau) =& \sum_{t=0}^{T-1} \nabla_{\theta} \log \pi(a_t, z_{t+1} | s_t, z_{t}, \tilde{\Theta}) = \sum_{t=0}^{T-1} \nabla_{\theta} \left(\log \pi(a_t| s_t, z_{t}, \tilde{\Theta}) + \log \pi(z_{t+1} | s_t, z_{t}, \tilde{\Theta}) \right) \\
    =& \sum_{t=0}^{T-1} \nabla_{\theta} \mu_{\theta}^a (s_t, z_t)\nabla_{m_t} \log  \mathcal{N}(m_t, \Sigma)|_{m_t=\mu_\theta (s_t, z_t)} \\
    &+ \nabla_{\theta} \mu_{\theta}^z (s_t, z_t)\nabla_{b_t} \log  \mathcal{N}(b_t, \Upsilon)|_{b_t=\mu_{\theta}^z (s_t, z_t)} \\
    =& \sum_{t=0}^{T-1}  \nabla_{\theta} \mu_{\theta}^a (s_t, z_t)  \left( \Sigma^{-1} (a_t - m_t)\right)|_{m_t=\mu_\theta (s_t, z_t)} \\
    &+ \nabla_{\theta} \mu_{\theta}^z (s_t, z_t)  \left( \Upsilon^{-1} (z_{t+1} - b_t)\right)|_{b_t=\mu_{\theta}^z (s_t, z_t)}\, , 
    \end{align*}}

where we have used the fact that $a_t$ and $z_{t+1}$ are independent for decomposing the gradient into a part corresponding to the mean of the Gaussian used for sampling action and a part corresponding to the mean of the Gaussian used to sample the next hidden state. 

Let $\tilde{\xi}_{t} =  \Upsilon^{-1} (z_{t+1} - b_t)$ where $z_{t+1}\sim \mathcal{N}(b_t, \Upsilon)$, then $\tilde{\xi}_{t}(b_t)$ are random variables drawn from a Gaussian distribution $\mathcal{N}(0, \Upsilon^{-1})$. Hence, we define $p(\tilde{\xi}_{t}) = \mathcal{N}(0, \Upsilon^{-1})$ and treat $\tilde{\xi}_{t}$ as an independent random variable sampled from $p(\tilde{\xi}_{t})$. Then we can use \eqref{eq:var_trace2}
again to define an upper bound on the variance such that 
{\small
\begin{align*}
\var \left[ G(\tau) \tilde{f}(\tau) \right] \leq& 
\int_{\tau, \xi} p(\tau)\, p(\xi)\, p(\tilde{\xi}) \left(\sum_{t=0}^{T-1} \gamma^{t-1} r(s_t, a_t, s_{t+1})\right)^2  \\
& \cdot \left(\sum_{t=0}^{T-1}  \nabla_{\theta} \mu_{\theta}^a (s_t, z_t) \,  \xi_{t}  + \nabla_\theta \mu_{\theta}^z (s_t, z_t) \tilde{\xi}_t \right)^\top \\
& \cdot \left(\sum_{t=0}^{T-1}  \nabla_{\theta} \mu_{\theta}^a (s_t, z_t) \,  \xi_{t}  + \nabla_\theta \mu_{\theta}^z (s_t, z_t) \tilde{\xi}_t \right)\\
 \leq& \frac{R^2 (1-\gamma^T)^2}{(1-\gamma)^2}
 \expect{\tau, \xi, \tilde{\xi}}{\sum_{t=0}^{T-1}\sum_{t'=1}^T \left(  \nabla_{\theta} \mu_{\theta}^a (s_t, z_t) \,  \xi_{t}  + \nabla_\theta \mu_{\theta}^z (s_t, z_t) \tilde{\xi}_t \right)^\top \right.\\
 & \cdot \left. \vphantom{\sum_{t'=1}^T} \left( \nabla_{\theta} \mu_{\theta}^a(s_{t'}, z_{t'})  \,  \xi_{t'}  + \nabla_\theta \mu_{\theta}^z (s_{t'}, z_{t'}) \tilde{\xi}_{t'}\right)} \\
  =& \frac{R^2 (1-\gamma^T)^2}{(1-\gamma)^2}
 \expect{\tau, \xi, \tilde{\xi}}{\sum_{t=0}^{T-1} \left(  \nabla_{\theta} \mu_{\theta}^a (s_t, z_t) \,  \xi_{t}  + \nabla_\theta \mu_{\theta}^z (s_t, z_t) \tilde{\xi}_t \right)^\top \right.\\
 & \cdot \left. \vphantom{\sum_{t=0}^{T-1}} \left( \nabla_{\theta} \mu_{\theta}^a(s_t, z_t) \,  \xi_{t}  + \nabla_\theta \mu_{\theta}^z (s_t, z_t)\tilde{\xi}_{t}\right)} \\
  & \cdot  
 \expect{\tau, \xi, \tilde{\xi}}{\sum_{t, t'=0, t\neq t'}^{T-1}\left(  \nabla_{\theta} \mu_{\theta}^a (s_t, z_t) \,  \xi_{t}  + \nabla_\theta \mu_{\theta}^z (s_t, z_t) \tilde{\xi}_t \right)^\top \right.\\
 & \cdot \left. \vphantom{\sum_{t'=1}^T} \left( \nabla_{\theta} \mu_{\theta}^a(s_{t'}, z_{t'}) \,  \xi_{t'}  + \nabla_\theta \mu_{\theta}^z (s_{t'}, z_{t'}) \tilde{\xi}_{t'}\right)} \, . 
\end{align*}}

Again, the last term can be dropped due to
{\small
\begin{align*}
& \expect{\tau, \xi, \tilde{\xi}}{\sum_{t, t'=0, t\neq t'}^{T-1}\left(  \nabla_{\theta} \mu_{\theta}^a (s_t, z_t) \,  \xi_{t}  + \nabla_\theta \mu_{\theta}^z (s_t, z_t) \tilde{\xi}_t \right)^\top \vphantom{\sum_{t'=1}^T} \left( \nabla_{\theta} \mu_{\theta}^a(s_{t'}, z_{t'}) \,  \xi_{t'}  + \nabla_\theta \mu_{\theta}^z (s_{t'}, z_{t'}) \tilde{\xi}_{t'}\right)} \\
&\quad= \sum_{t, t'=0, t\neq t'}^{T-1}\expect{\tau, \xi, \tilde{\xi}}{ \left(  \nabla_{\theta} \mu_{\theta}^a (s_t, z_t) \,  \xi_{t}  + \nabla_\theta \mu_{\theta}^z (s_t, z_t) \tilde{\xi}_t \right)^\top \vphantom{\sum_{t'=1}^T} \left( \nabla_{\theta} \mu_{\theta}^a(s_{t'}, z_{t'}) \,  \xi_{t'}  + \nabla_\theta \mu_{\theta}^z (s_{t'}, z_{t'}) \tilde{\xi}_{t'}\right)} \\
&\quad= \sum_{t, t'=0, t\neq t'}^{T-1}\expect{\tau, \xi, \tilde{\xi}}{ \left(  \nabla_{\theta} \mu_{\theta}^a (s_t, z_t) \,  \xi_{t}  + \nabla_\theta \mu_{\theta}^z (s_t, z_t) \tilde{\xi}_t \right)^\top} \\
&\qquad \cdot \expect{\tau, \xi, \tilde{\xi}}{\vphantom{\sum_{t'=1}^T} \left( \nabla_{\theta} \mu_{\theta}^a(s_{t'}, z_{t'}) \,  \xi_{t'}  + \nabla_\theta \mu_{\theta}^z (s_{t'}, z_{t'}) \tilde{\xi}_{t'}\right)} \\
&\quad = \sum_{t, t'=0, t\neq t'}^{T-1}\biggl( \underbrace{\expect{\tau, \xi, }{ \vphantom{\tilde{\xi}_t^\top} \nabla_{\theta} \mu_{\theta}^a (s_t, z_t)^\top \,  \xi_{t}^\top}}_{=0} + \underbrace{\expect{\tau, \tilde{\xi}}{  \nabla_\theta \mu_{\theta}^z (s_t, z_t)^\top \, \tilde{\xi}_t^\top}}_{=0} \biggr) \\
&\qquad \cdot \biggl(\underbrace{\expect{\tau, \xi}{\vphantom{\tilde{\xi}_{t'}}  \nabla_{\theta} \mu_{\theta}^a(s_{t'}, z_{t'}) \,  \xi_{t'}}}_{=0}  + \underbrace{\expect{\tau,\tilde{\xi}}{\nabla_\theta \mu_{\theta}^z (s_{t'}, z_{t'}) \tilde{\xi}_{t'}}}_{=0}\biggr)= 0 \, .  \\
\end{align*}}

Noticing that $\nabla_{\theta} \mu_{\theta}^a (s_t, z_t)^\top \,   \nabla_{\theta} \mu_{\theta}^a (s_t, z_t)\, $ and $\nabla_{\theta} \mu_{\theta}^z (s_t, z_t)^\top \,   \nabla_{\theta} \mu_{\theta}^z (s_t, z_t)\, $ are square positive semi-definite matrices, allows us to write

{\small
\begin{align*}
 & \frac{R^2 (1-\gamma^T)^2}{(1-\gamma)^2}
 \expect{\tau, \xi, \tilde{\xi}}{\sum_{t=0}^{T-1} \left(  \nabla_{\theta} \mu_{\theta}^a (s_t, z_t) \,  \xi_{t}  + \nabla_\theta \mu_{\theta}^z (s_t, z_t) \tilde{\xi}_t \right)^\top  \left( \nabla_{\theta} \mu_{\theta}^a (s_t, z_t)\,  \xi_{t}  + \nabla_\theta \tilde{\mu}_\theta (s_{t}, z_{t}) \tilde{\xi}_{t}\right)} \\
&\quad = \frac{R^2 (1-\gamma^T)^2}{(1-\gamma)^2}
 \sum_{t=0}^{T-1} \expect{\tau, \xi, \tilde{\xi}}{ \left(  \nabla_{\theta} \mu_{\theta}^a (s_t, z_t) \,  \xi_{t}  + \nabla_\theta \mu_{\theta}^z (s_t, z_t) \tilde{\xi}_t \right)^\top  \left( \nabla_{\theta} \mu_{\theta}^a (s_t, z_t)\,  \xi_{t}  + \nabla_\theta \tilde{\mu}_\theta (s_{t}, z_{t}) \tilde{\xi}_{t}\right)} \\
 &\quad =  \frac{R^2 (1-\gamma^T)^2}{(1-\gamma)^2}\\
 &\qquad\cdot  
  \biggl(\sum_{t=0}^{T-1} \expect{\tau, \xi}{ 
\left(\nabla_{\theta} \mu_{\theta}^a (s_t, z_t) \,  \xi_{t} \vphantom{\tilde{\xi}_t}\right)^\top  
\left(\nabla_{\theta} \mu_{\theta}^a (s_t, z_t) \,  \xi_{t} \vphantom{\tilde{\xi}_t}\right)} + 
 \expect{\tau, \tilde{\xi}}{ \left(\nabla_\theta \mu_{\theta}^z (s_t, z_t) \tilde{\xi}_t \right)^\top 
\left(\nabla_\theta \mu_{\theta}^z (s_t, z_t) \tilde{\xi}_t \right) }\biggr. \\
&\qquad\quad  \biggl.+   \underbrace{\expect{\tau, \xi, \tilde{\xi}}{\left(\nabla_{\theta} \mu_{\theta}^a (s_t, z_t) \,  \xi_{t} \vphantom{\tilde{\xi}_t}\right)^\top  
\left(\nabla_\theta \mu_{\theta}^z (s_t, z_t) \tilde{\xi}_t \right) }}_{=0} + \underbrace{\expect{\tau, \xi, \tilde{\xi}}{
\left(\nabla_\theta \mu_{\theta}^z (s_t, z_t) \tilde{\xi}_t \right)^\top \left(\nabla_{\theta} \mu_{\theta}^a (s_t, z_t) \,  \xi_{t} \vphantom{\tilde{\xi}_t}\right)  }}_{=0}\biggr) \\
 &\quad= \frac{R^2 (1-\gamma^T)^2}{(1-\gamma)^2}
  \biggl(\sum_{t=0}^{T-1} \expect{\tau}{
  \tr\left(  \nabla_{\theta} \mu_{\theta}^a (s_t, z_t)^\top \nabla_{\theta} \mu_{\theta}^a (s_t, z_t)  \, \Sigma^{-1}\right)
} \\
 &\qquad\qquad\qquad\qquad\qquad\, \, \,  + \expect{\tau}{ \tr\left( \nabla_{\theta} \mu_{\theta}^z (s_t, z_t)^\top \nabla_{\theta} \mu_{\theta}^z (s_t, z_t) \Upsilon^{-1}\right) }\biggr) \\
&\quad\leq \frac{R^2 (1-\gamma^T)^2}{(1-\gamma)^2}
  \biggl(\sum_{t=0}^{T-1} \expect{\tau}{ \fnorm{\nabla_{\theta} \mu_{\theta}^a (s_t, z_t)}^2 \cdot \fnorm{\Sigma^{-1}}} + 
  \expect{\tau}{\fnorm{\nabla_{\theta} \mu_{\theta}^z (s_t, z_t)}^2 \cdot \fnorm{\Upsilon^{-1}}} \biggr) \, . 
\end{align*}
}

To align with the notation from the previous proof, we define $\mu_{\theta}^a (s_t, z_t) = f_\theta(s_t, z_t)$ and  $\mu_{\theta}^z (s_t, z_t) = \eta_\theta(s_t, z_t)$ and get
%
%
{\small 
\begin{align*}
& \frac{R^2 (1-\gamma^T)^2}{(1-\gamma)^2}
  \biggl(\sum_{t=0}^{T-1} \expect{\tau}{ \fnorm{\nabla_{\vtheta} f_{\vtheta} (s_t, z_t)}^2 \cdot \fnorm{\Sigma^{-1}}} + 
  \expect{\tau}{\fnorm{\nabla_{\vtheta} \nu_{\vtheta}(s_t, z_t)}^2 \cdot \fnorm{\Upsilon^{-1}}} \biggr)\\
& \quad \leq \frac{R^2 (1-\gamma^T)^2}{(1-\gamma)^2}
  \biggl(\sum_{t=0}^{T-1}  F^2 \cdot \fnorm{\Sigma^{-1}} + 
 H^2 \cdot \fnorm{\Upsilon^{-1}} \biggr)
\end{align*}
}
such that the final bound on the variance of the \gls{s2pg} is given by 
{\small
\begin{align*}
    \var\left[ \nabla_\theta \hat{J}_{\text{S2PG}}(\Theta) \right] \leq \frac{R^2 \fnorm{\Sigma^{-1}}(1-\gamma^T)^2}{N(1-\gamma)^2}
  \biggl(\sum_{t=0}^{T-1}  F^2 + 
 H^2 \cdot \frac{\fnorm{\Upsilon^{-1}}}{ \fnorm{\Sigma^{-1}} } \biggr)\, , 
\end{align*}
}\noindent
concluding the proof.
\end{proof}

\begin{lemma}
Given a policy, similarily to \eqref{eq:policy_s2pg},  $\pi(a_t, z_{t+1} | s_t, z_{t}, \tilde{\Theta}) = \mathcal{N}(\vartheta_{{\theta}}(s_t, z_{t+1}), \Gamma)$  with $\Gamma = \begin{bmatrix}
    \Sigma & 0 \\
    0 & \Upsilon
\end{bmatrix}$ and limiting the covariance matrices $\Sigma$ and $\Upsilon$ to be diagonal, then the bound from Theorem \ref{lemma:variance_s2pg} can be replaced by a tighter bound 
{\small
\begin{align*}
        \var\left[ \nabla_\theta \hat{J}_{\text{S2PG}}(\Theta) \right] \leq \frac{R^2 \tr(\Sigma^{-1}) (1-\gamma^T)^2T}{N(1-\gamma)^2} \left(F^2_d + H^2_d \frac{\tr(\Upsilon^{-1})}{\tr(\Sigma^{-1})} \right)\, . 
\end{align*}}
\label{lemma:tighter_bound_s2pg}
\end{lemma}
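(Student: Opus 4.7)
The plan is to return to an earlier step in the proof of Theorem~\ref{lemma:variance_s2pg} and replace a single submultiplicative inequality by one that exploits the diagonal structure of $\Sigma^{-1}$ and $\Upsilon^{-1}$. Concretely, I would pick up from the equality
\[
\var \left[ G(\tau) \tilde{f}(\tau) \right] \leq \frac{R^2 (1-\gamma^T)^2}{(1-\gamma)^2}\sum_{t=0}^{T-1}\Bigl(\tr\bigl(\nabla_\theta f_\theta^\top \nabla_\theta f_\theta \, \Sigma^{-1}\bigr) + \tr\bigl(\nabla_\theta \eta_\theta^\top \nabla_\theta \eta_\theta \, \Upsilon^{-1}\bigr)\Bigr),
\]
which is the very step at which the original proof invokes $\tr(XY)\le\fnorm{X}\fnorm{Y}$. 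Everything above that step in the original proof carries over verbatim, so the derivation reduces to refining this one bound.

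The key step is to observe that for a diagonal positive definite $\Sigma^{-1}=\diag(\sigma_1^{-1},\ldots,\sigma_{d_a}^{-1})$ and any real matrix $A$ with columns $A_{:,i}$,
\[
\tr(A^\top A\, \Sigma^{-1}) \;=\; \sum_{i=1}^{d_a}\sigma_i^{-1}\,\|A_{:,i}\|^2.
\]
Introducing the column-wise analog of Assumption~\ref{assump:grad1}, namely $\|\nabla_\theta [f_\theta(s,z)]_i\|\le F_d$ for every output coordinate $i$, and similarly $\|\nabla_\theta [\eta_\theta(s,z)]_i\|\le H_d$, I immediately obtain
\[
\tr\bigl(\nabla_\theta f_\theta^\top \nabla_\theta f_\theta\,\Sigma^{-1}\bigr) \le F_d^{2}\,\tr(\Sigma^{-1}), \qquad \tr\bigl(\nabla_\theta \eta_\theta^\top \nabla_\theta \eta_\theta\,\Upsilon^{-1}\bigr) \le H_d^{2}\,\tr(\Upsilon^{-1}).
\]
Plugging both into the displayed bound, factoring the constants out of the sum over $t$ (the summands are $t$-independent upper bounds), and dividing by $N$ trajectories in the usual way yields exactly the claimed inequality after rewriting $F_d^2\tr(\Sigma^{-1})+H_d^2\tr(\Upsilon^{-1}) = \tr(\Sigma^{-1})\bigl(F_d^2 + H_d^2\,\tr(\Upsilon^{-1})/\tr(\Sigma^{-1})\bigr)$.

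The proof is algebraically short; the real conceptual point is to justify why this is tighter, which will be the only genuine obstacle in the write-up. For diagonal PSD matrices one has $\tr(\Sigma^{-1})\ge \fnorm{\Sigma^{-1}}$ in general, so the gain comes entirely from the columnwise constants: if the $d_a$ columns of $\nabla_\theta f_\theta$ have comparable norms then $F_d \lesssim F/\sqrt{d_a}$, giving up to a $\sqrt{d_a}$ improvement over Theorem~\ref{lemma:variance_s2pg} when the action (or hidden-state) dimension is large. One should also spell out that the off-diagonal cross terms between $a_t$ and $z_{t+1}$ that vanished in the original proof, via independence of $\xi_t$ and $\tilde\xi_t$ under block-diagonal $\Gamma$, continue to vanish here since we are assuming the stronger diagonal structure inside each block; no new cancellations are required.
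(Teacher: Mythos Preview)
Your proposal is correct and essentially identical to the paper's proof: both pick up from the trace expression in the proof of Theorem~\ref{lemma:variance_s2pg}, expand $\tr(A^\top A\,\Sigma^{-1})$ as $\sum_i \sigma_i^{-1}\|A_{:,i}\|^2$ using the diagonal structure (the paper phrases this via $\tr(AB)=\diag(A)^\top\diag(B)$, but it is the same identity), introduce the per-output-coordinate constants $F_d,H_d$, and conclude. Your added remark that the tightening comes from $F_d\lesssim F/\sqrt{d_a}$ rather than from the trace factor is a nice piece of commentary that the paper does not spell out.
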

\begin{proof}
Indeed, when $B$ is diagonal it is easy to show that
{\small
\begin{equation}
\tr({AB}) = \diag(A)^\top \diag(B) \, . 
\label{eq:trace_property}
\end{equation}
}

Let $\diag(\Sigma)= \vsigma = [\sigma_0 , \dots, \sigma_{|a|}]^\top$, and $\diag(\Upsilon)= \vupsilon = [\upsilon_0 , \dots, \upsilon_{|z|}]^\top$. 
Using~\eqref{eq:trace_property} we can write
{\small
\begin{align*}
    \tr\left( \nabla_{\theta} \mu_{\theta}^a (s_t, z_t)^\top \nabla_{\theta} \mu_{\theta}^a (s_t, z_t)  \, \Sigma^{-1}\right) &= \diag\left(\nabla_{\theta} \mu_{\theta}^a (s_t, z_t)^\top \nabla_{\theta} \mu_{\theta}^a(s_t, z_t) \right)^\top\diag\left(\Sigma^{-1}\right)   \\
    \tr\left( \nabla_{\theta} \mu_{\theta}^z (s_t, z_t)^\top \nabla_{\theta} \mu_{\theta}^z (s_t, z_t) \Upsilon^{-1}\right) &= \diag\left(\nabla_{\theta} \mu_{\theta}^z (s_t, z_t)^\top \nabla_{\theta} \mu_{\theta}^z(s_t, z_t)\right)^\top \diag\left( \Upsilon^{-1}\right) \, . 
\end{align*}
}

Let $X^{[i]}$ be the $i$-th row of the matrix $X$.  We observe that

{\small
\begin{align*}
 \diag\left(\nabla_{\theta} \mu_{\theta}^a (s_t, z_t)^\top \nabla_{\theta} \mu_{\theta}^a(s_t, z_t) \right) &= 
 \left[\begin{array}{c}
     \nabla_{\theta} (\mu_{\theta}^a (s_t, z_t)^{[0]})^\top \nabla_{\theta} \mu_{\theta}^a(s_t, z_t)^{[0]}\\
     \dots \\
     \nabla_{\theta} (\mu_{\theta}^a (s_t, z_t)^{[|a|-1]})^\top \nabla_{\theta} \mu_{\theta}^a(s_t, z_t)^{[|a|-1]}\\
 \end{array} \right]  \\
 & = \left[ 
 \begin{array}{c}
      \lVert \nabla_{\theta} \mu_{\theta}^a (s_t, z_t)^{[0]} \rVert_2^2 \\
      \dots \\
      \lVert \nabla_{\theta} \mu_{\theta}^a (s_t, z_t)^{[|a|-1]} \rVert_2^2
 \end{array}
 \right].
\end{align*}
}
The same observation can be done by exchanging $\mu_{\theta}^a$ with $\mu_{\theta}^z$. Notice that the resulting vector is a vector of the norms of the gradient of the mean functions w.r.t. a specific action (or internal state component).  

If we assume the existence of two positive constants $F_d$ and $H_d$ such that $\lVert \nabla_{\theta} \mu_{\theta}^a (s_t, z_t)^{[i]} \rVert_2^2 \leq F^2_d$ and $\lVert \nabla_{\theta} \mu_{\theta}^z (s_t, z_t)^{[i]} \rVert_2^2 \leq H^2_d$, $\forall i, s_t, z_t$ then we can write

{\small
\begin{align}
    \tr\left( \nabla_{\theta} \mu_{\theta}^a (s_t, z_t)^\top \nabla_{\theta} \mu_{\theta}^a (s_t, z_t)  \, \Sigma^{-1}\right) &\leq F^2_d\cdot \left(\vone^\top \diag(\Sigma^{-1})\right) \nonumber\\
    & = F_d^2 \sum_{i=0}^{|a|-1}\dfrac{1}{\sigma_i} = F_d^2 \tr(\Sigma^{-1}) \nonumber\\
    \tr\left( \nabla_{\theta} \mu_{\theta}^z (s_t, z_t)^\top \nabla_{\theta} \mu_{\theta}^z (s_t, z_t) \Upsilon^{-1}\right) &\leq H^2_d\cdot \left(\vone^\top \diag\left( \Upsilon^{-1}\right)\right) \nonumber\\
    & = H_d^2 \sum_{i=0}^{|a|-1}\dfrac{1}{\upsilon_i} = H_d^2 \tr(\Upsilon^{-1}).
    \label{eq:diag_covariance_trace_bound}
\end{align}}

Using~\eqref{eq:diag_covariance_trace_bound} the bound of the variance of the single gradient estimator is
{\small
\begin{align*}
\var \left[ G(\tau) \tilde{f}(\tau) \right] \leq& \frac{R^2 (1-\gamma^T)^2}{(1-\gamma)^2}
  \biggl(\sum_{t=0}^{T-1} \expect{\tau}{
  \tr\left(  \nabla_{\theta} \mu_{\theta}^a (s_t, z_t)^\top \nabla_{\theta} \mu_{\theta}^a (s_t, z_t)  \, \Sigma^{-1}\right)
} \\
 & \phantom{ \frac{R^2 (1-\gamma^T)^2}{(1-\gamma)^2}}\qquad\,  + 
 \expect{\tau}{ \tr\left( \nabla_{\theta} \mu_{\theta}^z (s_t, z_t)^\top \nabla_{\theta} \mu_{\theta}^z (s_t, z_t) \Upsilon^{-1}\right) }\biggr) \\
 \leq &  \frac{R^2 (1-\gamma^T)^2}{(1-\gamma)^2} 
 \sum_{t=0}^{T-1}\left( F_d^2 \tr(\Sigma^{-1}) +   H_d^2 \tr(\Upsilon^{-1}) \right) \\
 = & \frac{R^2 (1-\gamma^T)^2T}{(1-\gamma)^2} \left(F^2_d\tr(\Sigma^{-1}) + H^2_d \tr(\Upsilon^{-1}) \right).
\end{align*}
}

Therefore, the gradient of the \gls{s2pg} estimator in the diagonal gaussian setting is
{\small
\begin{equation*}
    \var\left[ \nabla_\theta \hat{J}_{\text{S2PG}}(\Theta) \right] \leq \frac{R^2 \tr(\Sigma^{-1}) (1-\gamma^T)^2T}{N(1-\gamma)^2} \left(F^2_d + H^2_d \frac{\tr(\Upsilon^{-1})}{\tr(\Sigma^{-1})} \right)
\end{equation*}
}\noindent
concluding the proof.
\end{proof}
Notice that, when the action is a scalar, we get
{\small
\begin{equation*}
    \var\left[ \nabla_\theta \hat{J}_{\text{S2PG}}(\Theta) \right] \leq \frac{R^2 (1-\gamma^T)^2T}{N(1-\gamma)^2\sigma^2 } \left(F^2_d + H^2_d \tr(\Upsilon^{-1})\sigma^2  \right),
\end{equation*}\noindent
}
matching closely both the previous results of~\cite{papini2022smoothing}--- with the addition of the term depending on the internal state variance ---and the special scalar action case of the generic bound derived with a full covariance matrix.
\clearpage

\section{Qualitative Comparison of the Gradient Estimators}\label{app:gradient_comparison}
In this section, we analyze the difference between the stochastic gradient estimator for stateful policies and the \gls{bptt} approach in greater detail.

In \gls{bptt}, the gradient needs to be propagated back to the initial state. This requires the algorithm to store the history of execution up to the current timestep, if we want to compute an unbiased estimate of the policy gradient.
The common approach in the literature is to truncate the gradient propagation for a fixed history length. This approach is particularly well-suited if the recurrent policy does not require to remember long-term information.

Instead, our approach incorporates the learning of the policy state transitions into the value function, compensating for the increased variance of the policy gradient estimate (coming from the stochasticity of the policy state kernel) and allowing for the parallelization of the gradient computation. As shown in Figure~\ref{fig:grad_comp}, the stochastic gradient for stateful policies only uses local information at timestep $t$ to perform the update, while the \gls{bptt} uses the full history until the starting state $s_0$.
The choice between the two estimators is non-trivial: a high dimensionality of the policy state may cause the Q-function estimation problem challenging. However, in practical scenarios, tasks can be solved with a relatively low-dimensional policy state vector.
In contrast, a long trajectory implies multiple applications of the chain rule, that may consequently produce exploding or vanishing gradients, causing issues during the learning. This problem is not present when using stochastic stateful policies.
\vspace{0.5cm}

\begin{figure}[ht]
    \centering
    \includegraphics[width=\textwidth]{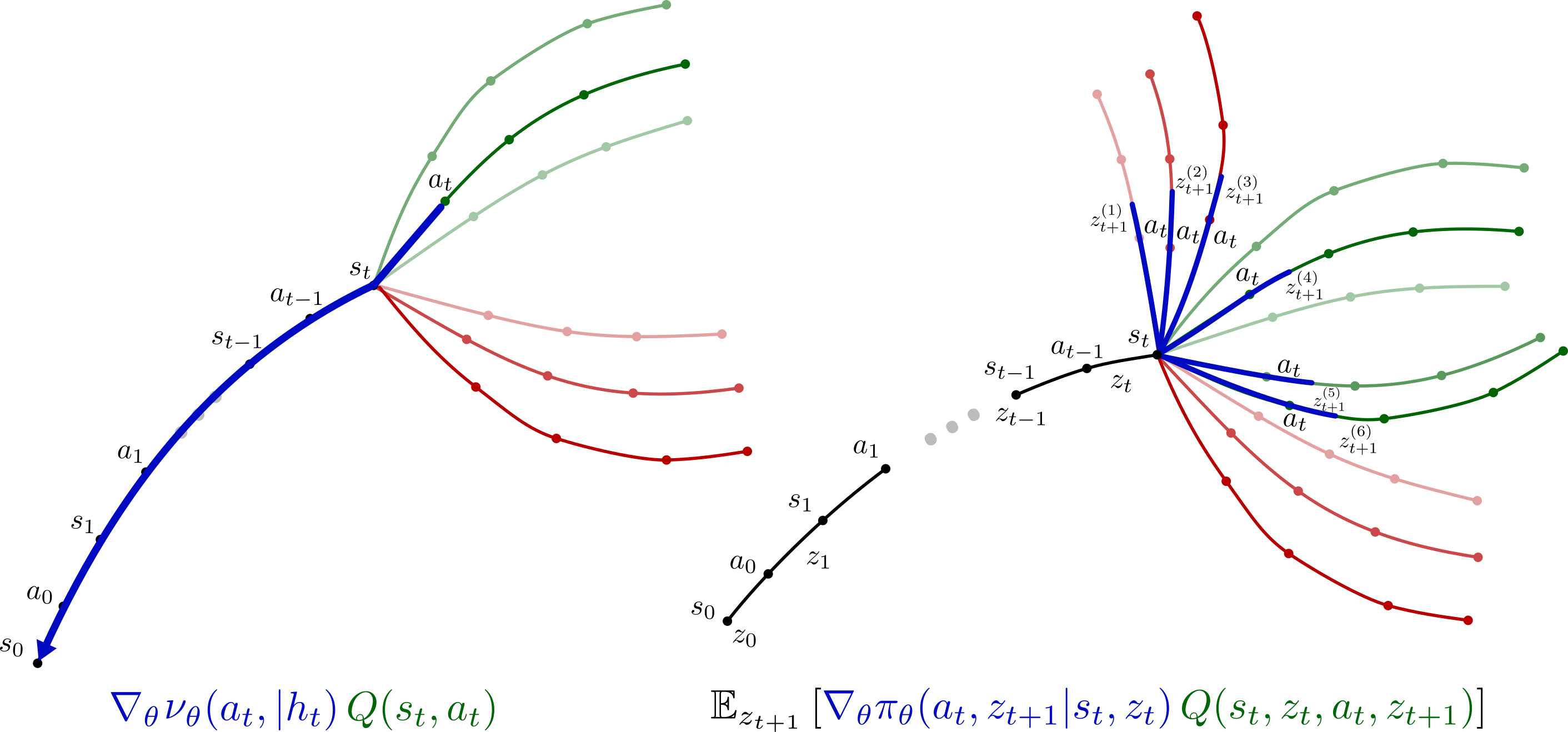}
    \caption{Comparison of the gradient at the state $s_t$ and action $a_t$ in \gls{bptt} --left -- and our stochastic gradient estimator -- right-- using the \gls{pgt}. The red and green colors represents the value of future state, where green means high $Q$-value and red means low $Q$-value.}
    \label{fig:grad_comp}
\end{figure}

\clearpage
\section{Algorithm Pseudocode}\label{app:algos}
In the following, we present the pseudocode for SAC-RS, TD3-RS and PPO-RS algorithms. The algorithms are a straightforward modification of the vanilla SAC, TD3, and PPO approaches to support the internal policy state.
In all cases, the algorithm receives in input a dataset of interaction (in our experiments one single step for SAC-RS and TD3-RS, while for PPO-RS is a batch of experience composed of multiple trajectories) coming from the current policy rollouts on the environment. In the pseudocode below we denote the parameter of the policy as $\vtheta$, the target policy parameters as $\check{\vtheta}$. Furthermore, we name the parameters of the i-th value function as $\vpsi_i$ and the corresponding target network $\check{\vpsi}_i$.
Furthermore, we use the square bracket notation to denote the component of a vector. In this context, we refer to the vector of states, actions, and internal states inside the dataset (seen as an ordered list).

\begin{algorithm}[hbt]
    \begin{algorithmic}[1]
        \Function{updateTD3}{$\mathcal{D}$}
            \State \Call{$M$.add}{$\mathcal{D}$} \Comment{Add the dataset of transitions $\mathcal{D}$ to the replay memory $M$}
            \If{$\Call{\text{size}}{M} > S_\text{min}$ } \Comment{Wait until the replay memory has $S_\text{min}$ transitions}
                \State Sample a minibatch $\mathcal{B}$ from $M$
                \State $\hat{q}\gets$\Call{computeTarget}{$\mathcal{B}$}
                \State \Call{$Q_{\vpsi_{i}}$.fit}{$\mathcal{B}, \hat{q}$}, for each $i\in\{0,1\}$
                \If{$\textrm{it} \bmod D = 0$} \Comment{Perform the policy update every $D$ iterations }
                    \State Compute the loss
                    {\small
                    \begin{equation*}
                        L_{\vtheta}(\mathcal{B}) = -\dfrac{1}{N}\sum_{s\in\mathcal{B}}Q_{\vpsi_{0}}(s, z, \mu^a_{\vtheta}(s), \mu^z_{\vtheta}(s))
                    \end{equation*}}
                    \State $\vtheta \gets \Call{optimize}{L_{\vtheta}(\mathcal{B}), \vtheta}$
                \EndIf
            
                \State $\check{\theta}\gets \tau \vtheta + (1-\tau)\check{\vtheta}$
                \State $\check{\vpsi}_i\gets \tau \vpsi_i + (1-\tau)\check{\vpsi}_i$
                \State $\textrm{it}\gets\textrm{it}+1$ \Comment{Update the iteration counter}
             \EndIf
            \State\Return $\vtheta$
        \EndFunction
        \Statex
        \Function{computeTarget}{$\mathcal{B}$}
            \For{$(s, z, a, r, z', s') \in \mathcal{B}$}
                \If{$s'$ is absorbing}
                    \State $v_\text{next}(s',z')\gets 0$
                \Else
                    \State $a', z''\gets \mu_{\check{\vtheta}}(s', z')$
                    \State Sample $\epsilon^a, \epsilon^z\sim\mathcal{N}(0, \sigma_\epsilon)$
                    \State $\epsilon^a_\text{clp} \gets$ \Call{clip}{$\epsilon^a,-\delta_{\epsilon},\delta{\epsilon}$}
                    \State $\epsilon^z_\text{clp} \gets$ \Call{clip}{$\epsilon^z,-\delta_{\epsilon},\delta{\epsilon}$}
                    \State $a_\text{smt}\gets a'+\epsilon^a_\text{clp}$
                    \State $z_\text{smt}\gets z''+\epsilon^z_\text{clp}$
                    \State $a_\text{clp}\gets$\Call{clip}{$a_\text{smt},a_\text{min},a_\text{max}$}
                    \State $z_\text{clp}\gets$\Call{clip}{$z_\text{smt},z_\text{min},z_\text{max}$}
                    \State $v_\text{next}(s',z')\gets \min_{i}Q_{\check{\vpsi}_{i}}(s', z', a_\text{clp}, z_\text{clp})$
                \EndIf
                    
                \State $\hat{q}(s, z, a, z') \gets r + \gamma v_\text{next}(s',z')$
            \EndFor
            \State\Return $\hat{q}$
        \EndFunction
    \end{algorithmic}
    \caption{TD3-RS}
    \label{alg:td3}
\end{algorithm}

\begin{algorithm}[ht]
    \caption{SAC-RS}
    \begin{algorithmic}[1]
        \Function{updateSAC}{$\mathcal{D}$}
            \State \Call{$M$.add}{$\mathcal{D}$}
            \If{$\Call{\text{size}}{M} > S_\text{min}$ } 
                \State Sample a minibatch $\mathcal{B}$ from $M$
                
                \State $\hat{q}\gets$\Call{computeSoftTarget}{$\mathcal{B}$}
                
                \If{$\Call{\text{size}}{M} > S_\text{warm}$} \Comment{Perform the policy update after $S_\text{warm}$ samples}
                    \State Sample $a',z'\sim\pi_\vtheta(\cdot|s,z)\, \,  \forall (s,z)\in\mathcal{B}$
                    \State Compute the policy loss
                     {\small
                     \begin{equation*}
                         L_{\vtheta}(\mathcal{B}) = \alpha^a\log\pi^a_\vtheta(a'|s, z) + \alpha^z\log\pi^z_\vtheta(z'|s, z) -\dfrac{1}{N}\sum_{s\in\mathcal{B}}
                         \min_{i}Q_{\vpsi_i}(s, z, a', z')
                     \end{equation*}}
                     \State $\vtheta \gets \Call{optimize}{L_{\vtheta}(\mathcal{B}), \vtheta}$
                     \State Compute the $\alpha^a$ and $\alpha^z$ losses with target entropies $\bar{\mathcal{H}^a}$ and $\bar{\mathcal{H}^z}$
                     {\small
                     \begin{equation*}
                         L_{\alpha^a}(\mathcal{B}) = -\dfrac{1}{N}\sum_{(s,z,a)\in\mathcal{B}} \alpha^a \left(\log\pi^a_\vtheta(a|s,z)+\bar{\mathcal{H}^a}\right)
                     \end{equation*}}
                     {\small
                     \begin{equation*}
                         L_{\alpha^z}(\mathcal{B}) = -\dfrac{1}{N}\sum_{(s,z,z')\in\mathcal{B}} \alpha^z \left(\log\pi^z_\vtheta(z'|s,z)+\bar{\mathcal{H}^z}\right)
                     \end{equation*}}
                     \State $\alpha^a\gets\Call{optimize}{L_{\alpha^a}(\mathcal{B}), \alpha}$  
                     \State $\alpha^z\gets\Call{optimize}{L_{\alpha^z}(\mathcal{B}), \alpha}$  
                \EndIf
                \State $\hat{q}\gets$\Call{computeSoftTarget}{$\mathcal{B}$}
                \State \Call{$Q_{\vpsi}$.fit}{$\mathcal{B}, \hat{q}$}
                \State $\check{\vpsi}_i\gets \tau \vpsi_i + (1-\tau)\check{\vpsi}_i$  
            \EndIf
            \State\Return $\vtheta$
        \EndFunction
        \Statex
        \Function{computeSoftTarget}{$\mathcal{B}$}
            \For{$(s, z, r, a',  z', s') \in \mathcal{B}$}
                    \If{$s'$ is absorbing}
                        \State $v_\text{next}(s',z')\gets 0$
                    \Else
                        \State Sample $a',z''\sim\pi_\vtheta(\cdot|s')$
                        \State $h_\text{bonus} \gets  - \alpha^a \log\pi_\vtheta(a'|s',z') - \alpha^z \log\pi_\vtheta(z''|s',z')$
                        \State $v_\text{next}(s',z')\gets \min_i Q_{\check{\psi}_i}(s', z', a', z'') + h_\text{bonus}$ 
                    \EndIf
                        
                    \State $\hat{q}(s,z,a,z') \gets r + \gamma v_\text{next}(s',z')$
                \EndFor
                \State\Return $\hat{q}$
        \EndFunction
    \end{algorithmic}
\end{algorithm}

\begin{algorithm}[ht]
\label{alg:ppo}
\caption{PPO-RS}
\begin{algorithmic}[1]
    \Function{updatePPO}{$\vtheta$, $\mathcal{D}$}
        \State $V', A\gets$\Call{computeGAE}{$V_{\vpsi},\mathcal{D}$}
        
        \State \Call{$V_{\vpsi}$.fit}{$\mathcal{D}, V'$} \Comment{Update the value function}
        
        \For{$i \gets 0$ to $N$}
            \State Split the dataset $\mathcal{D}$ in $K$ minibatches $\lbrace\mathcal{B}_k \vert k\in[0,K-1]\rbrace$
             \For{$k \gets 0$ to $K$}
                 \State compute the surrogate loss on the minibatch
                 {\small
                 \begin{align*}
                     L_{\vtheta}(\mathcal{B}_k) =  \dfrac{1}{N}\sum_{(s,z,a,z')\in\mathcal{B}_k}\min&\left(\dfrac{\pi_{\vtheta}(a,z'|s,z)}{q(a,z'|s,z)}A(s,z,a,z'),\right. \\
                     & \left.\quad \textrm{clip}\left(\dfrac{\pi_{\vtheta}(a,z'| s,z)}{q(a,z'|s,z)},1-\varepsilon,1+\varepsilon\right) A(s,z,a,z')\right)
                 \end{align*}}
                \State $\vtheta \gets \Call{optimize}{L_{\vtheta}(\mathcal{B}_k), \vtheta}$
             \EndFor
        \EndFor
        \State\Return $\vtheta$
    \EndFunction
    \Statex
    \Function{computeGAE}{$V_{\vpsi},\mathcal{D}$}
        \For{$k\gets 0 \dots \text{len}(\mathcal{D})$}
            \State $\cmp{v}{k}\gets V_{\psi}(\cmp{s}{k},\cmp{z}{k})$            
            \State $\cmp{v_\text{next}}{k}\gets V_{\psi}(\cmp{s'}{k},\cmp{z'}{k})$       
        \EndFor
        \For{$k_\text{rev}\gets 0 \dots \text{len}(\mathcal{D})$}
            \State $k\gets \text{len}(\mathcal{D}) - k_\text{rev} -1$
            \If{$\cmp{s'}{k}$ \textbf{is} last}
                \If{$\cmp{s'}{k}$ \textbf{is} absorbing}
                \State $A(\cmp{s}{k},\cmp{z}{k},\cmp{a}{k},\cmp{z'}{k}) \gets \cmp{r}{k} - \cmp{v}{k}$
                \Else
                    \State $A(\cmp{s}{k},\cmp{z}{k},\cmp{a}{k},\cmp{z'}{k})  \gets \cmp{r}{k} + \gamma \cmp{v_\text{next}}{k} - \cmp{v}{k}$
                \EndIf
            \Else
            \State $\delta \gets A(\cmp{s}{k+1},\cmp{z}{k+1},\cmp{a}{k+1},\cmp{z'}{k+1})$
            \State $A(\cmp{s}{k},\cmp{z}{k},\cmp{a}{k},\cmp{z'}{k}) \gets \cmp{r}{k} + \gamma \cmp{v_\text{next}}{k} - \cmp{v}{k} + \gamma \lambda \delta$
            \EndIf
        \EndFor
        \State \Return $v, A$
    \EndFunction
    
\end{algorithmic}
\end{algorithm}

\clearpage

\section{Network Structures}\label{app:net_archi} 
In this section, we describe the structure of the networks used in this paper. Figure \ref{fig:network_architectures} presents all networks used for the policies and the critics in the \gls{rl} setting. For the \gls{bptt}, the policy networks were used, yet the red paths were dropped. Also, the critics did not use the hidden states as inputs. For the \gls{il} setting, the networks from \gls{sac} are used for \gls{lsiq}, and the networks from \gls{ppo} are used for \gls{gail}. For the recurrent networks, we generally use \glspl{gru} even though any other recurrent network, such as \glspl{lstm}, can also be used. 

\vspace{3cm}
\begin{figure}[h]
    \centering
    \includegraphics[width=\textwidth]{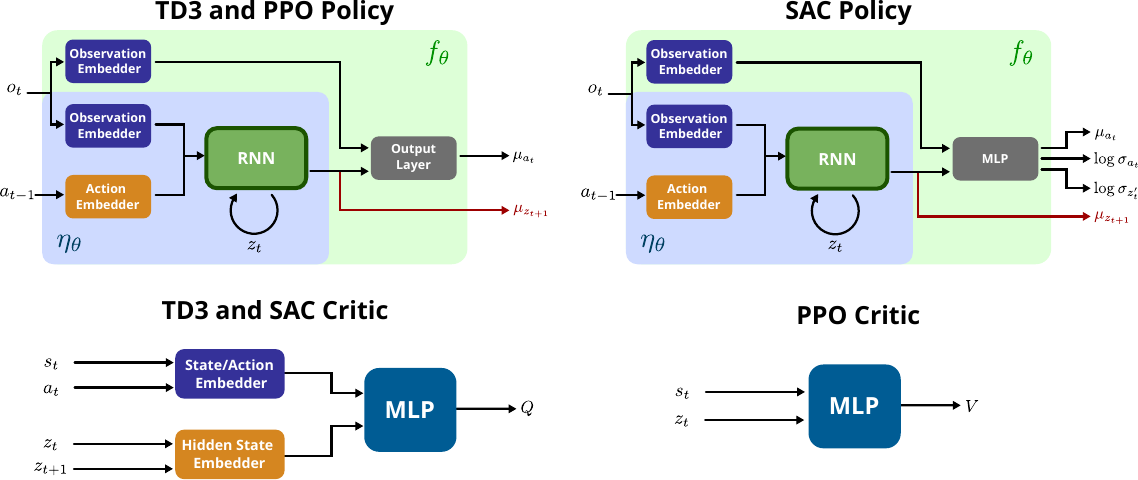}

    \caption{Network architectures used for the policies and critics in PPO-RS, TD3-RS, SAC-RS, PPO-BPTT, TD3-BPTT, and SAC-BPTT. For the \gls{bptt} variants, the red paths do not exist. For the \gls{s2pg} approach, the internal state is sampled from the noisy distribution. These architectures of the policies were originally used in \cite{ni2022recurrent}.}
    \label{fig:network_architectures}
\end{figure}

\clearpage
\section{Additional Experiments}\label{app:add_results}

\subsection{Memory Task}\label{app:mem_task}
These tasks are about moving a point mass in a 2D environment. They are shown in Figure \ref{fig:point_mass} and \ref{fig:point_mass_door}. For the first task, a random initial state and a goal are sampled at the beginning of each episode. The agent reads the desired goal information while it is close to the starting position. Once the agent is sufficiently far from the starting position, the information about the goal position is zeroed-out. The task consists of learning a policy that remembers the target goal even after leaving the starting area. As the distance to the goal involves multiple steps, long-term memory capabilities are required. Similarly, the second task hides the position of two randomly sampled doors in a maze once the agent leaves the observable area. Once the agents touches the black wall, an absorbing state is reached, and the environment is reset. We test our approach in the privileged information setting using \gls{sac}. We compare our approach with \gls{sac}-\gls{bptt} with a truncation length of 5 and 10. For the first task, our approach -- \gls{sac}-RS -- outperforms \gls{bptt}, while it performs slightly weaker compared to \gls{bptt} with a truncation length of 10 on the second task. We expect that the reason for the drop in performance is caused by the additional variance of our method in combination with the additional absorbing states in the environment. The additional variance in our policy leads to increased exploration, which increases the chances of touching the wall and reaching an absorbing state. Nonetheless, our approach has a significantly shorter training time, analogously to the results shown in Figure \ref{fig:overall_results}, for both tasks.

\begin{figure}
    \centering
    
    \begin{multicols}{3}
        \null \vfill
        
        \includegraphics[width=0.8\linewidth]{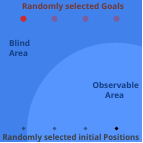}
        
        \columnbreak
        \null \vfill
        
        \includegraphics[width=\linewidth]{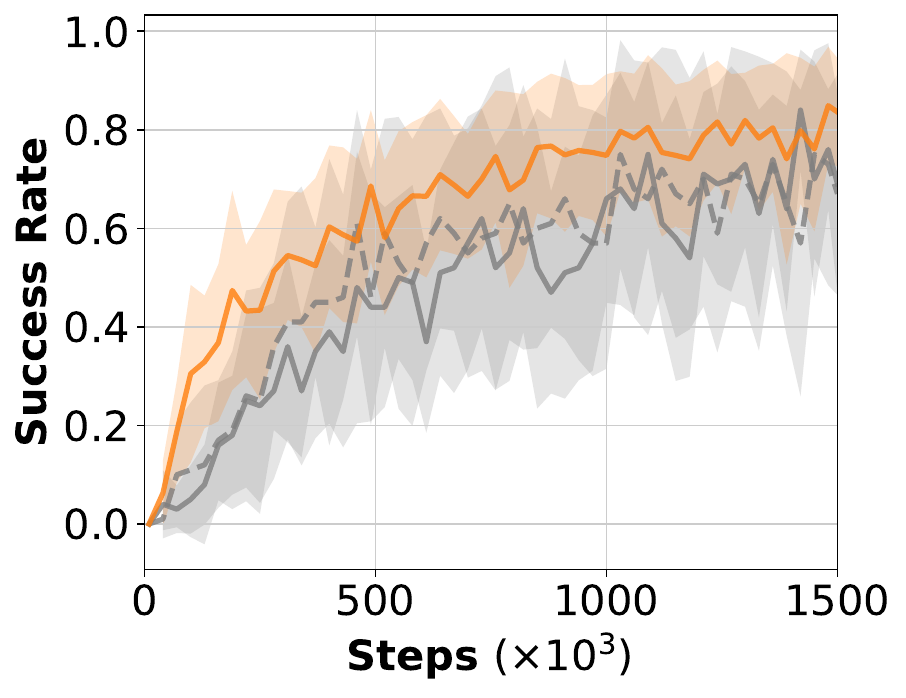}
        
        \columnbreak
        \null \vspace{0.2cm}
        
        \caption{Point mass experiments. In the blind area, the goal is not visible. The task involves remembering its position. Initial states, goals, and blind areas are randomly sampled. SAC-RS is a recurrent-stochastic version of \gls{sac}, using \gls{s2pg}.}
        \label{fig:point_mass}
    \end{multicols}
    
        \includegraphics[scale=0.385]{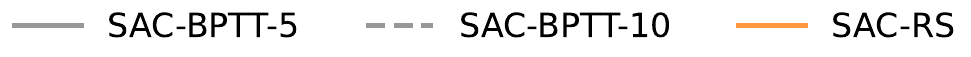}
\end{figure}

\begin{figure}
    \centering
    
    \begin{multicols}{3}
        \null \vfill
        
        \includegraphics[width=0.8\linewidth]{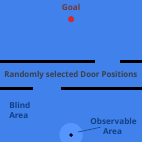}
        
        \columnbreak
        \null \vfill
        
        \includegraphics[width=\linewidth]{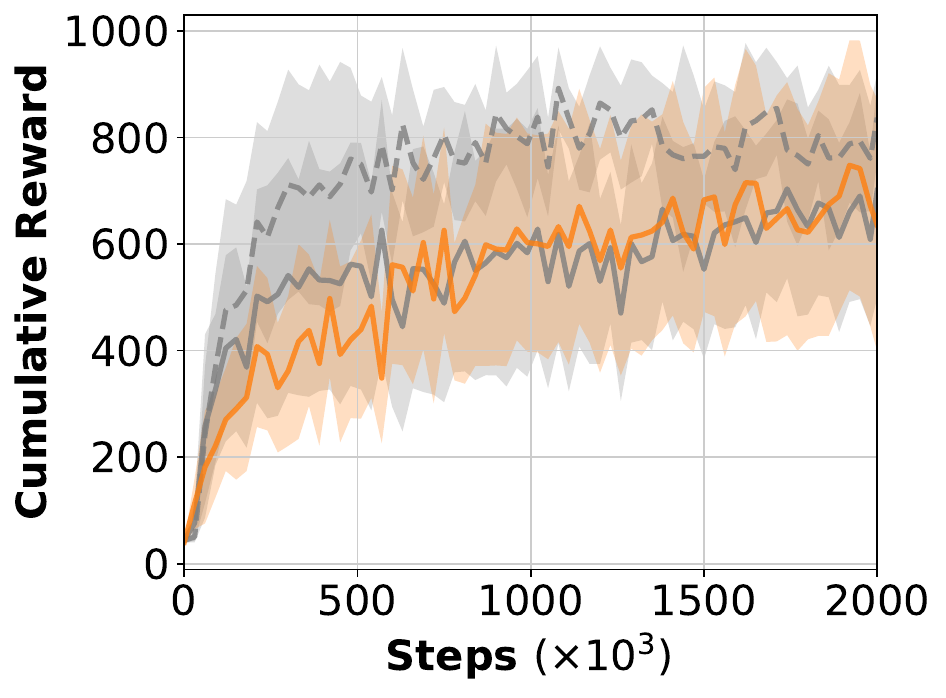}
        
        \columnbreak
        \null \vspace{0.2cm}
        
        \caption{Point mass door experiments. In the blind area, the positions of the two doors are not visible. The task involves remembering its position, which are randomly sampled. SAC-RS is a recurrent-stochastic version of \gls{sac}, using \gls{s2pg}.}
        \label{fig:point_mass_door}
    \end{multicols}
    
        \includegraphics[scale=0.385]{results/point_mass/legend.pdf}
\end{figure}

\subsection{Stateful Policies as Inductive Biases}\label{sec:cpg_exp}

Adding inductive biases into the policy is a common way to impose prior knowledge into the policy \cite{ijspeert2007swimming,bellegarda2022cpg, alhafez2021, atacom}. To show that stateful policies are of general interest in \gls{rl} -- not only for \glspl{pomdp}
-- we show in this section that they can be used to elegantly impose an arbitrary inductive bias directly into the policy. In contrast to commonly used biases, our approach allows learning the parameters of the inductive bias as well.
We conduct experiments with a policy that includes oscillators -– coupled Central Pattern Generators (CPG) -- to impose an oscillation into the policy. The latter is common for locomotion tasks \cite{miki2022learning}. We model the oscillators as a set of ordinary differential equations and simulate the latter using the explicit Euler integrator. Figure \ref{fig:cpg-exp} presents the results. Using our algorithm, we are able to train the parameters of the CPGs and an the additional network to learn simple locomotion skills on HalfCheetah and Ant.
These policies solely rely on their internal state and do not use the environment state making them blind. With this simple experiment, we want to
show that we can impose arbitrary dynamics on the policy and train the latter using our approach. Note that plain oscillators do not constitute a very good inductive bias. More sophisticated dynamical models that also take the environment state into account are needed to achieve better results. 

\paragraph{Related Work.} \citet{shi2022} train \glspl{cpg} using a black-box optimizer instead of \gls{bptt}, while simultaneously learning an \gls{rl} policy for residual control. \citet{cho2019} optimize a \gls{cpg} policy using \gls{rl} without \gls{bptt} by exposing the internal state to the environment but do not provide a theoretical derivation of the resulting policy gradient.  \citet{campanaro2021cpg} train a \gls{cpg} policy using \gls{ppo} without \gls{bptt}. However, the authors neglect the influence of the internal state on the value function. 

\begin{figure}
\centering
\includegraphics[width=0.85\textwidth]{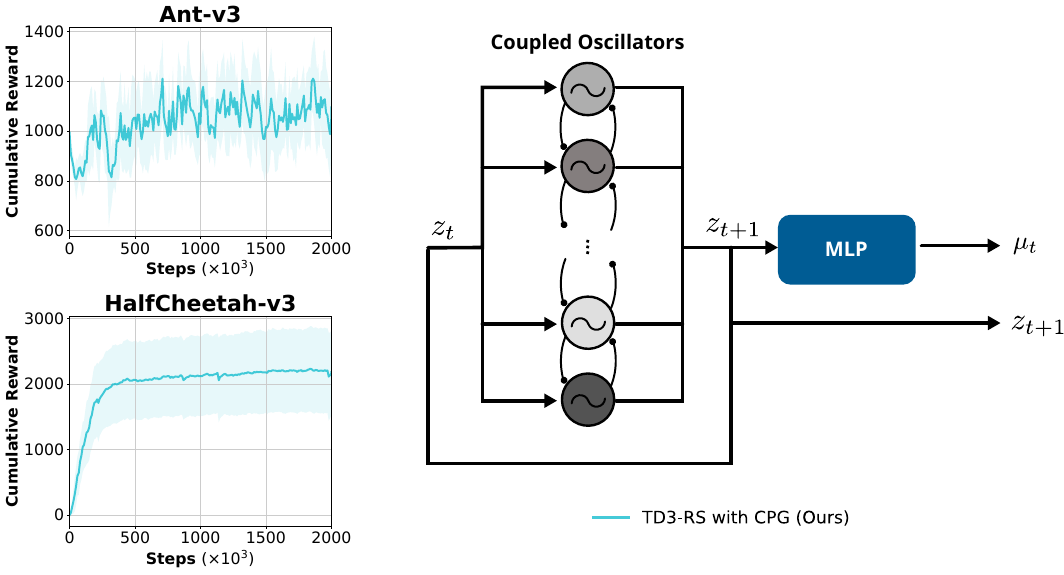} 
    \caption{Experiments in which a set of coupled central pattern generators are used as a policy. The left side shows the cumulative reward on HalfCheetah and Ant. The right side shows the structure of the policy. Note that the policy does not use the environment state making it blind.}
    \label{fig:cpg-exp}
\end{figure}

\subsection{Comparison on Standard POMDP Benchmarks}\label{app:ppo_res}
In the following, we evaluate the two gradient estimators in standard MuJoCo \gls{pomdp} benchmarks, in the on-policy setting. These tasks are taken from \cite{ni2022recurrent} and hide the velocity.
All experiments use the \gls{ppo} algorithm as a learning method. Figure \ref{fig:ppo_results_steps_all} presents the results. 
Our approach outperforms \gls{bptt} in all benchmarks in terms of the number of used samples and time.
Unfortunately, using on-policy approaches on these benchmarks, we do not achieve satisfactory performances in all environments (e.g. Hopper and Walker). This is probably due to the increased difficulty of exploring with partial observation, rather than an issue in gradient estimation. Indeed, in the other tasks, where more information is available, or the exploration is less problematic -- e.g., the failure state is more difficult to reach -- we outperform the baseline. It should be noted that the performance gain in terms of computation time is not massive: this is due to the reduced number of gradient computations in the on-policy scenario, which is performed in batch after many environments steps.

\begin{figure}[tb]
    \begin{multicols}{3}
    \includegraphics[scale=0.29]{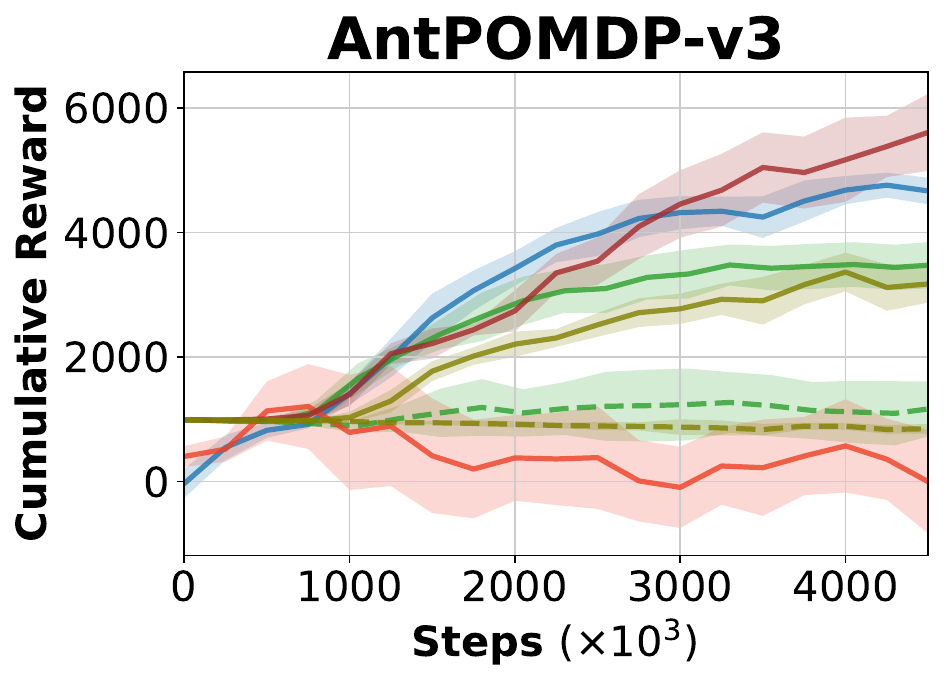}\columnbreak
    \includegraphics[scale=0.29]{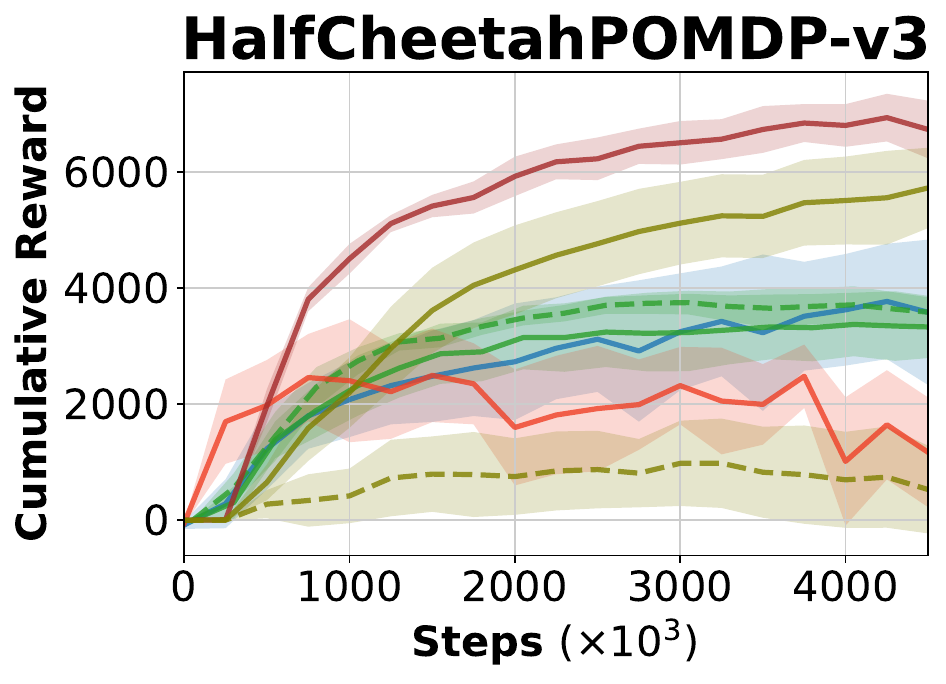}\columnbreak
    \includegraphics[scale=0.29]{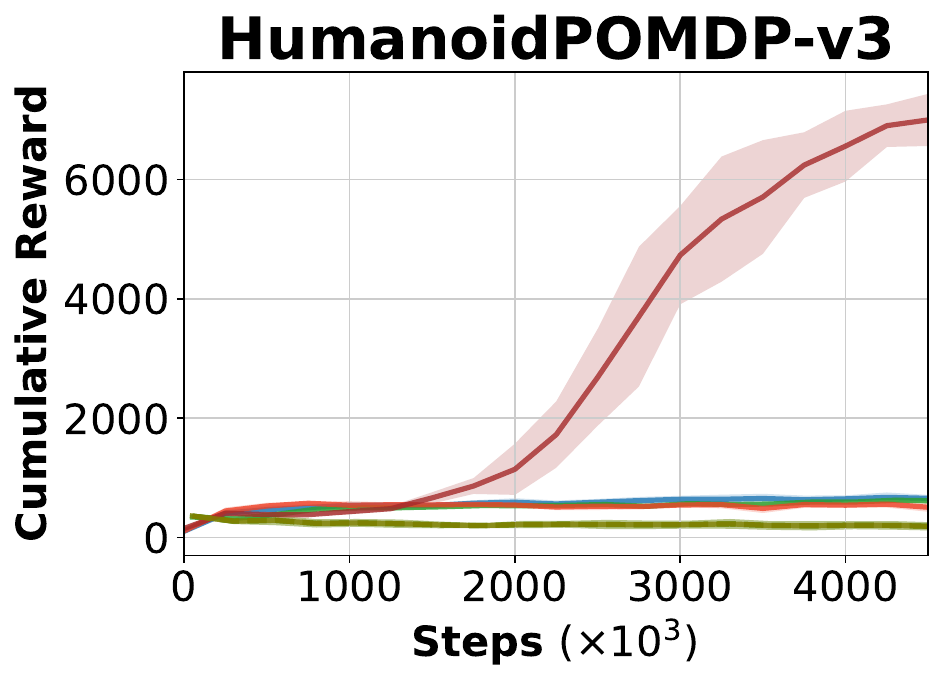}
    \end{multicols}
    \begin{multicols}{3}
    \includegraphics[scale=0.29]{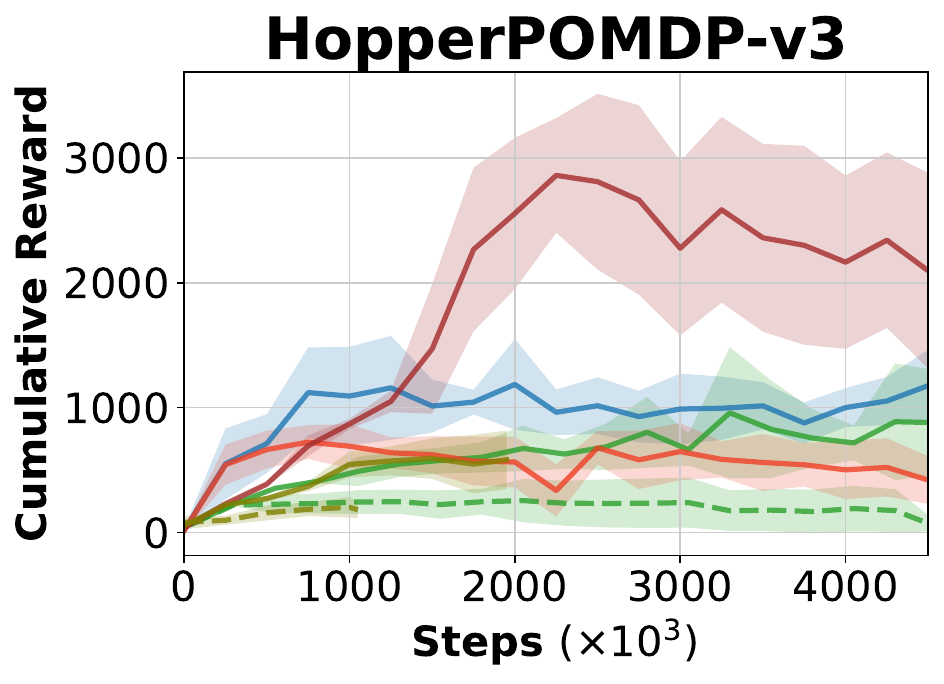}\columnbreak
    \includegraphics[scale=0.29]{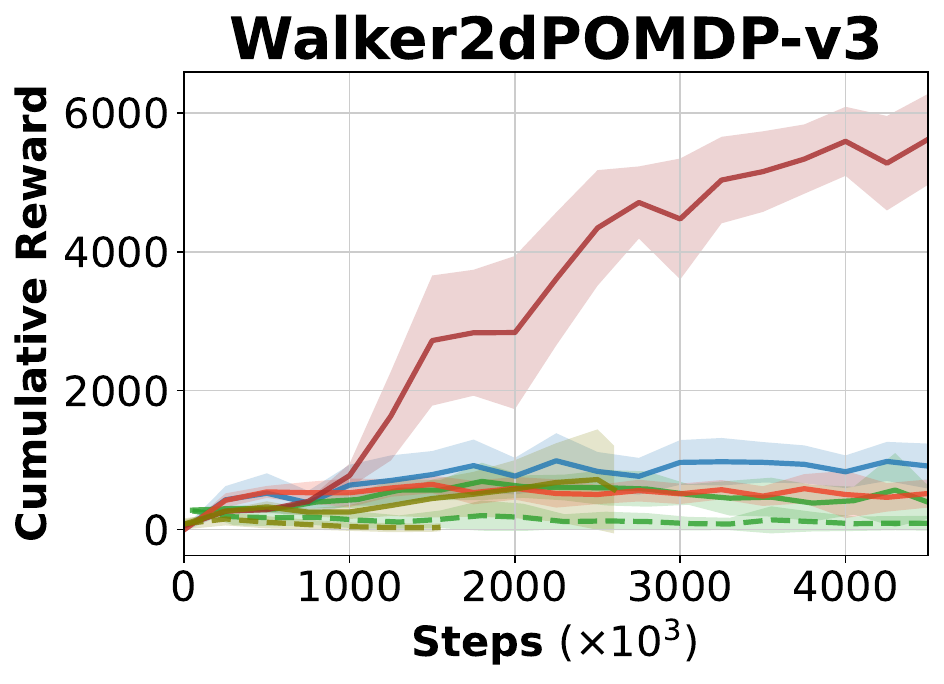}\columnbreak
    \end{multicols}
    \vspace{-15pt}
    \begin{center}
        \includegraphics[scale=0.3]{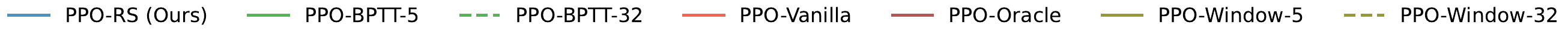}
    \end{center}
    \vspace{-10pt}
    \caption{Comparison of \gls{ppo} with the proposed stochastic policy state kernel and \gls{ppo} with \gls{bptt} on fully partially observable MuJoCo Gym Tasks. The number behind the \gls{bptt} implementations indicates the truncation length. Abscissa shows the cumulative reward. Ordinate shows the number of training steps ($\times 10^3$).}
    \label{fig:ppo_results_steps_all}
\end{figure}

\clearpage

\subsection{Comparison on Robustness Benchmarks with Privilege Information}\label{app:rand_dyn_rl_tasks}
In this section, we evaluate the performance of \gls{s2pg} against \gls{bptt} on the \gls{pomdp} setting with additional randomized masses. For the latter, masses of each link in a model are randomly sampled in a $\pm 30\%$ range. Figure \ref{fig:exem_fig_rand_dyn} shows examples of randomly sampled Ant and Humanoid environments. Here, we consider the scenario where the critic has privileged information (knowledge of velocities and mass distributions), while the policy uses only partial observability.

Figure \ref{fig:rl_results_mass_steps} presents the reward plots. Figure \ref{fig:overall_results} also presents the average runtime of each approach used for these tasks. 
The experiment run for a maximum of two million steps and four days of computation. The results do not show an approach clearly outperforming the others in all tasks. \gls{s2pg} based approaches seem to work more robustly in high-dimensional tasks, such as AntPOMDP-v3 and HumanoidPOMDP-v3. Curiously, the \gls{bptt} version of \gls{td3} is not able to learn in the AntPOMDP-v3 task. In general, \gls{s2pg} seems to struggle in the HalfCheetahPOMDP-v3,  HopperPOMDP-v3 and in WalkerPOMDP-v3 tasks. To better compare the performance of all approaches on the low-dimensional tasks -- Hopper, Walker and HalfCheetah -- and the high-dimensional tasks -- Humanoid and Ant --, we presents the averaged performance plots in Figure \ref{fig:overall_results_low_high_dim}. As can be seen, our approach performs significantly better on the high-dimensional tasks while being worse on the low-dimensional ones. We believe that the reason for the worse performance of \gls{bptt} on high-dimensional tasks could be the potential risk of exploding gradients, which in turn cause exploding variance (c.f., Section \ref{sec:var_analysis}). In contrast, we believe that the worse performance of our method on low-dimensional tasks can be traced back to the increased amount of variance in the policy in conjunction with the presence of absorbing states in the case of Hopper and Walker, similarily to the memory tasks.

\begin{figure}
    \centering
    \includegraphics[width=0.7\textwidth]{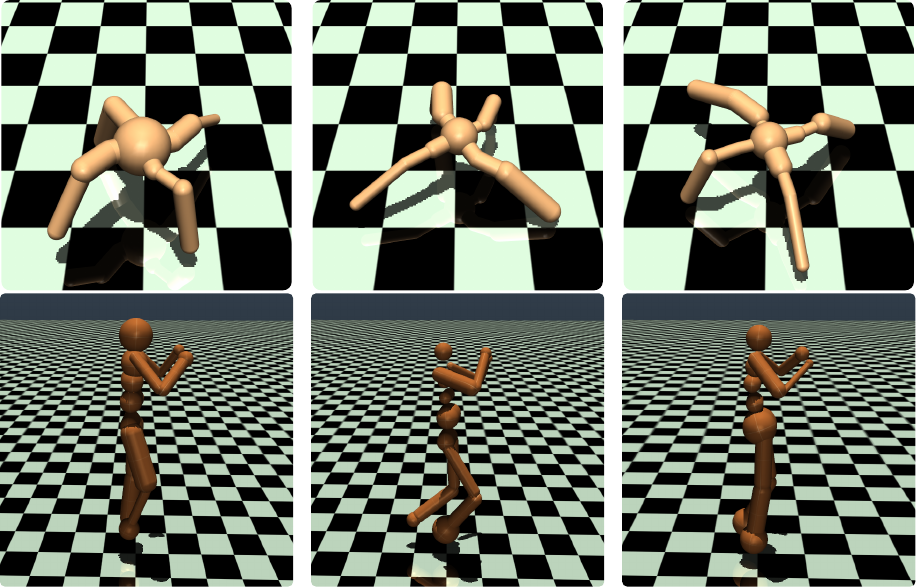}
    \caption{Example images of the Mujoco Gym randomized mass environment used within this work. The upper row shows three randomly sampled Ant environments, while the lower row show three Humanoid environments.}
    \label{fig:exem_fig_rand_dyn}
\end{figure}


\begin{figure}[bt]
    \begin{multicols}{3}
    \includegraphics[scale=0.275]{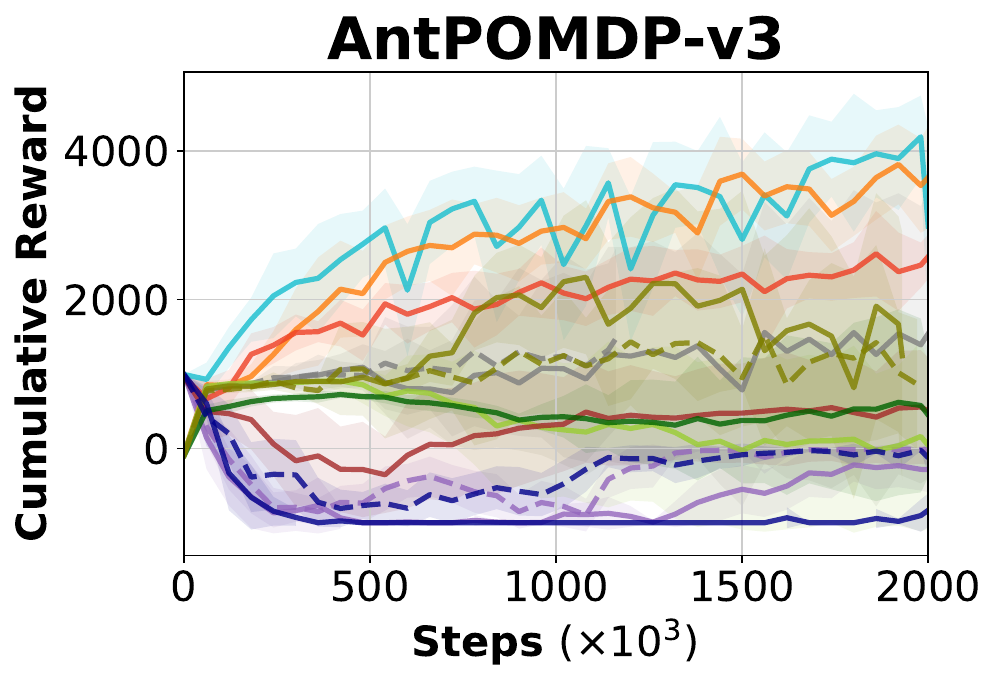}\columnbreak
    \includegraphics[scale=0.275]{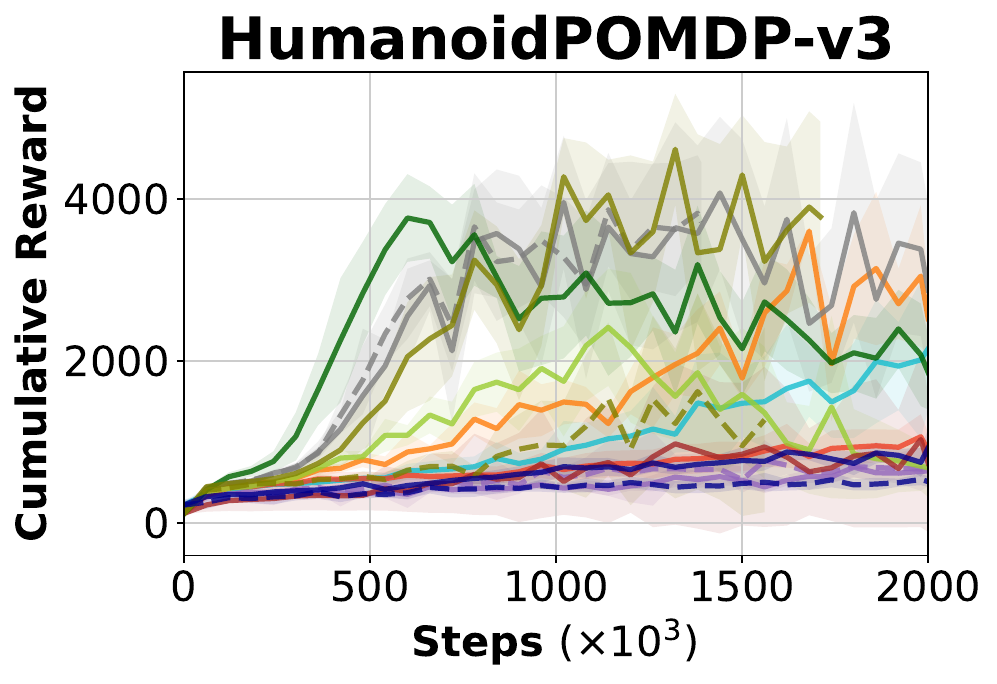}\columnbreak
    \includegraphics[scale=0.275]{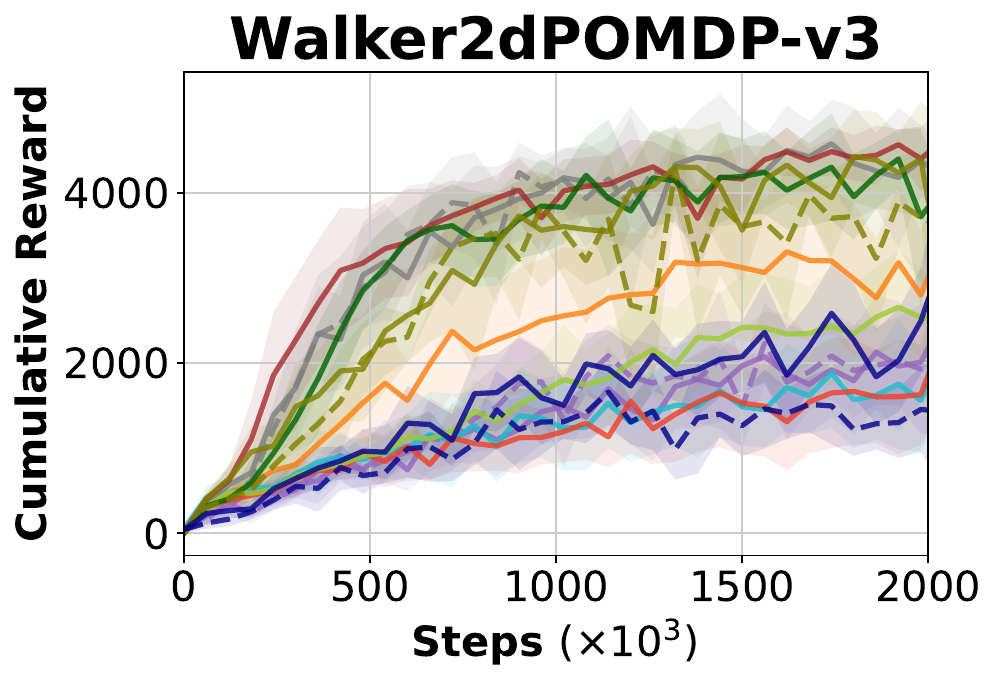}
    \end{multicols}
    \begin{multicols}{3}
    \includegraphics[scale=0.275]{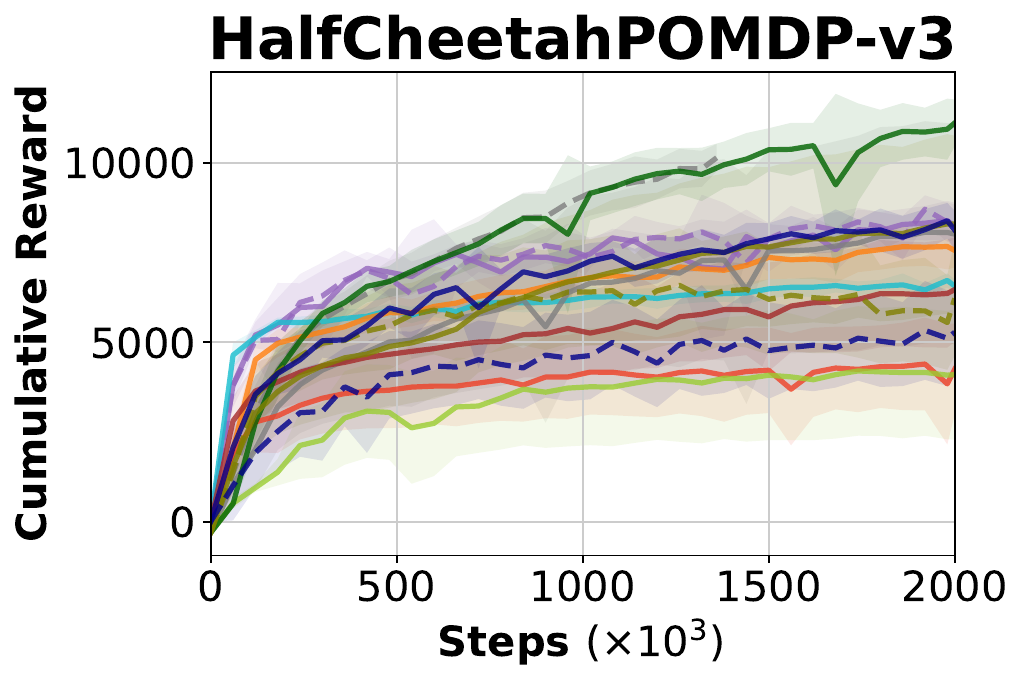}\columnbreak
    \includegraphics[scale=0.275]{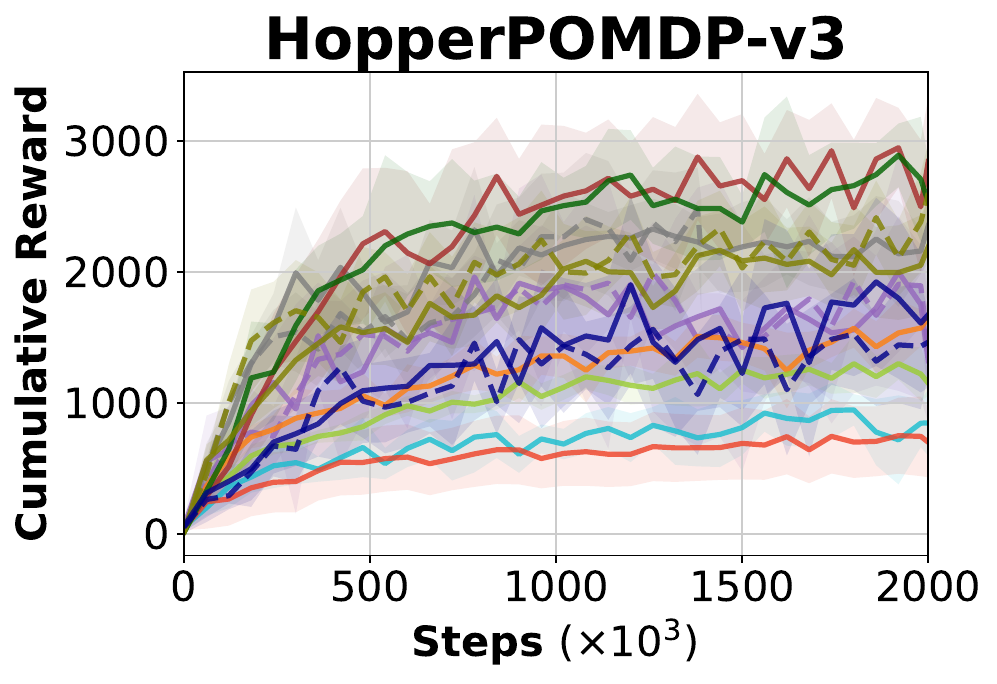}
    \end{multicols}
    \vspace{-15pt}
    \begin{center}
        \includegraphics[scale=0.3]{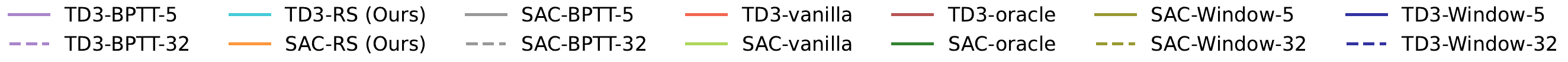}
    \end{center}
    \vspace{-10pt}
    \caption{\gls{sac} and \gls{td3} results on the MuJoCo Gym Tasks with partially observable policies and privileged critics. Additionally, masses of each link are randomly sampled at the beginning of each episode. The number behind the \gls{bptt} implementations indicates the truncation length. Abscissa shows the cumulative reward. Ordinate shows the training time in hours.}
    \label{fig:rl_results_mass_steps}
\end{figure}

\begin{figure}[t]
    \begin{center}
    \begin{tabular}{ c c }
    \hspace{2cm}\textbf{Privileged Information}\hspace{1cm} & \hspace{2cm} \textbf{No Privileged Information}   \\ 
    \end{tabular}
    \vspace{-5pt}
    \end{center} 
     \includegraphics[width=\textwidth]{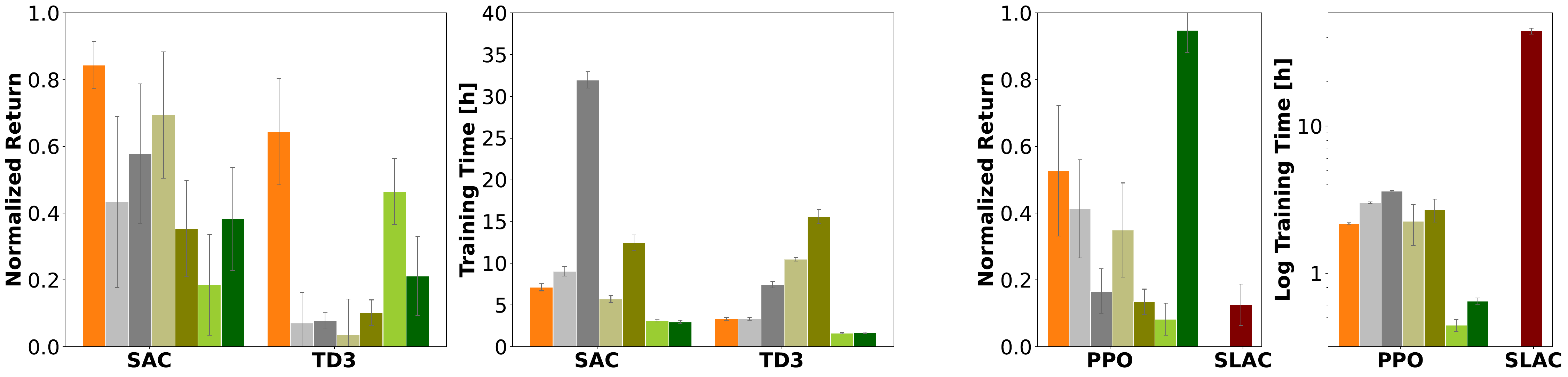}
     \includegraphics[width=\textwidth]{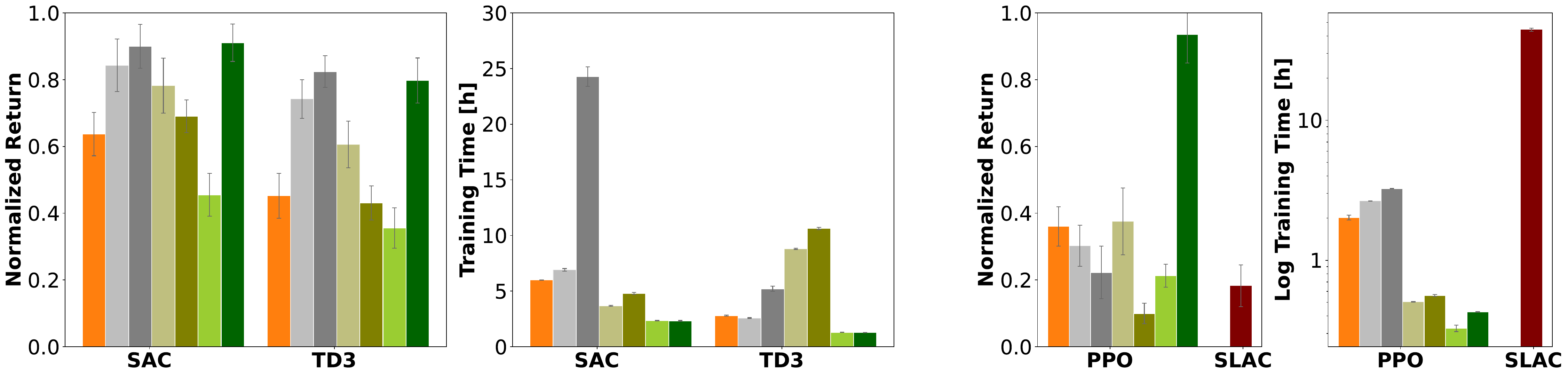}
     \includegraphics[width=\textwidth]{results/legend.pdf}
     \vspace{-15pt}
    \caption{
    Results show the mean and confidence interval of the normalized return and the training time needed for 1 Mio. steps across high-dimensional -- \textbf{TOP} row; Ant and Humanoid -- and low-dimensional \mbox{-- \textbf{BOTTOM} row; Hopper, Walker and HalfCheetah --} \gls{pomdp} Gym tasks. Comparison of different \gls{rl} agents on \gls{pomdp} tasks using our stateful gradient estimator, \gls{bptt} with a truncation length of 5 and 32, the SLAC algorithm and stateless versions of the algorithms using a window of observation with length 5 and 32. The "Oracle" approach is a vanilla version of the algorithm using full-state information.  }
    \label{fig:overall_results_low_high_dim}
\end{figure}

\clearpage
\subsection{Imitation Learning Experiments}\label{app:imitaion_learning}
In this section, we present further results of the \gls{il} tasks from the main paper.
As can be seen in Figure \ref{fig:imitation_learning_gail_steps} and \ref{fig:imitation_learning_lsiq_steps}, our results show that the \gls{s2pg} version of \gls{gail} and \gls{lsiq} -- GAIL-RS and LSIQ-RS -- are competitive with the \gls{bptt} variants, even when the truncation length is increased. In the HumanoidPOMDP-Walk task, GAIL-RS is even able to outperform by a large margin the \gls{bptt} variant.
As stressed in the paper, time the advantage of using \gls{s2pg} instead of \gls{bptt} is more evident in terms of computation time. As \gls{lsiq} runs \gls{sac} in the inner \gls{rl} loop, and \gls{gail} runs \gls{ppo}, the runtimes of these algorithms are very similar to their \gls{rl} counterparts. For the Atlas experiments, it can be seen that the vanilla policy in \gls{gail} is already performing well, despite being outperformed by our method. This shows that this task does not require much memory. In contrast, the humanoid tasks require much more memory, as can be seen by the poor results of the vanilla approaches. Interestingly, we found the \gls{gail}-\gls{bptt} variants are often very difficult to train on these tasks, which is why they got unstable. 

\begin{figure}[bt]
    \begin{multicols}{3}
    \includegraphics[scale=0.283]{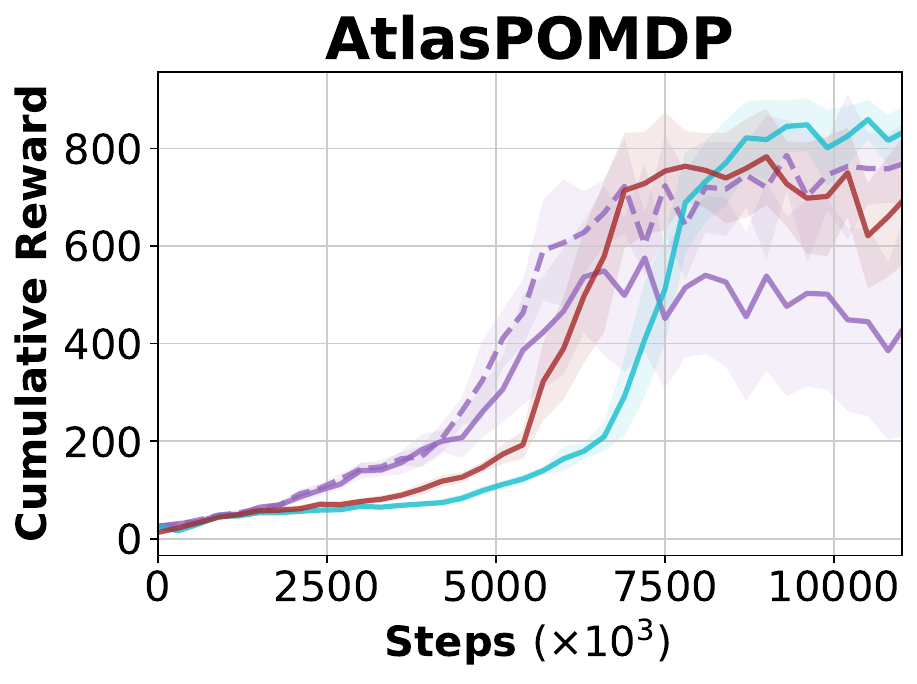}\columnbreak
    \includegraphics[scale=0.283]{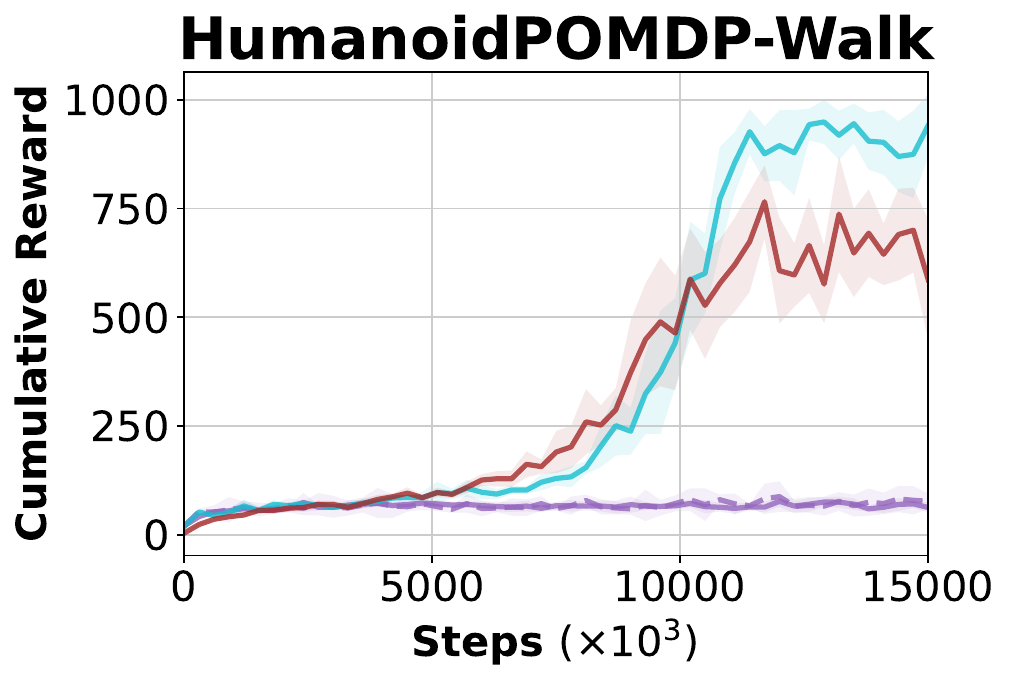}\columnbreak
    \includegraphics[scale=0.283]{results/imitation_learning/gail/env___HumanoidPOMDP-Run_deterministic_R_mean_step.pdf}
    \end{multicols}
    \vspace{-15pt}
    \begin{center}
        \includegraphics[scale=0.35]{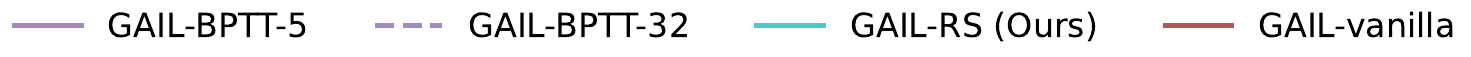}
    \end{center}
    \vspace{-10pt}
    \caption{\gls{gail} results on all \gls{il} tasks with partially observable policies and privileged critics. The number behind the \gls{bptt} implementations indicates the truncation length. Abscissa shows the cumulative reward. Ordinate shows the number of training steps ($\times 10^3$).}
    \label{fig:imitation_learning_gail_steps}
\end{figure}

\begin{figure}[bt]
    \begin{multicols}{3}
    \includegraphics[scale=0.29]{results/imitation_learning/lsiq/env___AtlasPOMDP_deterministic_R_mean_step.pdf}\columnbreak
    \includegraphics[scale=0.29]{results/imitation_learning/lsiq/env___HumanoidPOMDP-Walk_deterministic_R_mean_step.pdf}\columnbreak
    \includegraphics[scale=0.29]{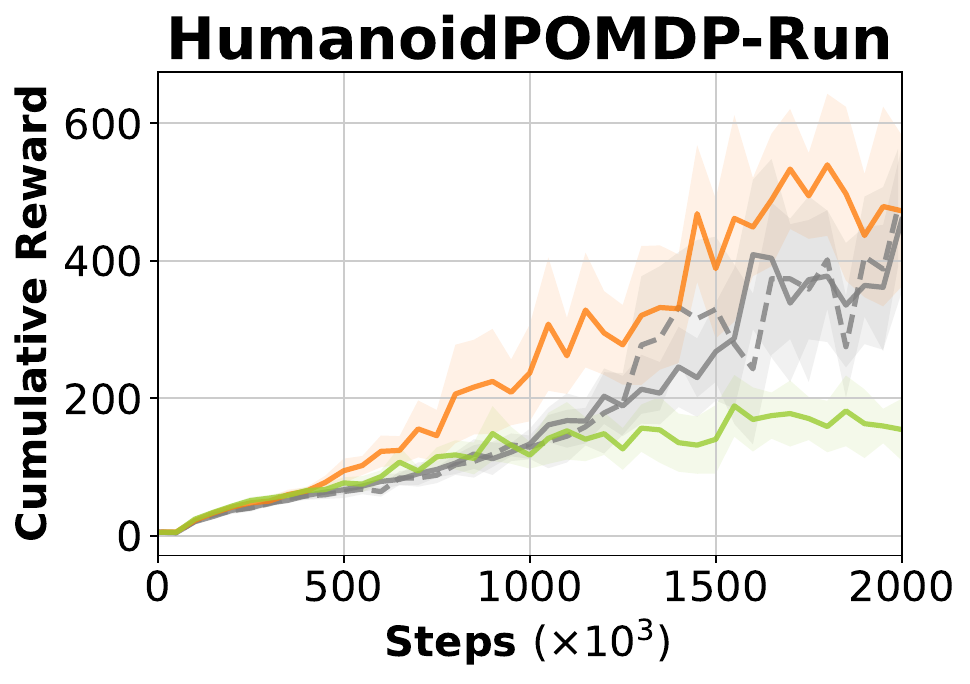}
    \end{multicols}
    \vspace{-15pt}
    \begin{center}
        \includegraphics[scale=0.35]{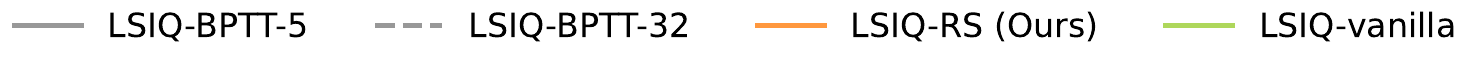}
    \end{center}
    \vspace{-10pt}
    \caption{\gls{lsiq} results on all \gls{il} tasks with partially observable policies and privileged critics. The number behind the \gls{bptt} implementations indicates the truncation length. Abscissa shows the cumulative reward. Ordinate shows the number of training steps ($\times 10^3$).}
    \label{fig:imitation_learning_lsiq_steps}
\end{figure}

\end{document}